\documentclass[review]{elsarticle}

\usepackage{amsfonts} 
\usepackage{fullpage} 
\usepackage{algorithm}
\usepackage{algorithmic}
\usepackage{amsmath}
\usepackage{graphicx}
\usepackage{mathtools}

\usepackage{mathrsfs}
\usepackage{multirow}
\usepackage{enumitem}

\usepackage{bm}
\usepackage{bbm}
\usepackage{amssymb}
\usepackage{caption}
\usepackage{subcaption}
\usepackage{amsthm}
\usepackage{lineno,hyperref}
\usepackage{color}

\newcommand{\ip}[1]{\left\langle #1 \right\rangle}
\newcommand{\transp}{\mathsf{T}}

\newcommand{\unif}{\operatorname{unif}}

\newcommand{\classlabel}{\operatorname{class\_label}}

\newcommand{\spn}{\operatorname{span}}

\newcommand{\supp}{\operatorname{supp}}
\newcommand{\err}{\operatorname{err}}
\newcommand{\mean}{\operatorname{mean}}

\newcommand{\corr}{\operatorname{corr}}
\newcommand{\dd}[1]{\, {\mathrm d}{#1}}

\newcommand{\widehatbv}[1]{\,\widehat{\!{#1}}}

\DeclarePairedDelimiter\floor{\lfloor}{\rfloor}

\setlength\floatsep{1\baselineskip plus 3pt minus 2pt}
\setlength\textfloatsep{1\baselineskip plus 3pt minus 2pt}
\setlength\intextsep{1\baselineskip plus 3pt minus 2 pt}

\makeatletter
\def\@author#1{\g@addto@macro\elsauthors{\normalsize%
    \def\baselinestretch{1}%
    \upshape\authorsep#1\unskip\textsuperscript{%
      \ifx\@fnmark\@empty\else\unskip\sep\@fnmark\let\sep=,\fi
      \ifx\@corref\@empty\else\unskip\sep\@corref\let\sep=,\fi
      }%
    \def\authorsep{\unskip,\space}%
    \global\let\@fnmark\@empty
    \global\let\@corref\@empty  
    \global\let\sep\@empty}%
    \@eadauthor={#1}
}
\makeatother


\journal{Applied and Computational Harmonic Analysis}

\bibliographystyle{elsarticle-num}

\begin{document}

\newtheorem{mythm}{Theorem}[section]
\newtheorem{myprop}{Proposition}[section]
\newtheorem{mycor}{Corollary}[section]
\newtheorem{remark}{Remark}[section]
\newtheorem{mydef}{Definition}[section]
\newtheorem{mylem}{Lemma}[section]
\newtheorem{myex}{Example}[section]
\newtheorem{myassump}{Assumption}

\begin{frontmatter}

\title{The Use of Mutual Coherence to Prove $\ell^1/\ell^0$-Equivalence\\ in Classification Problems}

\author{Chelsea Weaver\corref{cor1}\fnref{fn1}}
\ead{chelseaannweaver@gmail.com}
\cortext[cor1]{Corresponding author}
\fntext[fn1]{Current address: Amazon Web Services, Seattle, WA}

\author{Naoki Saito}
\ead{saito@math.ucdavis.edu}

\address{Department of Mathematics\\
University of California, Davis \\
One Shields Avenue\\
Davis, California, 95616, United States}

\begin{abstract}

We consider the decomposition of a signal over an overcomplete set of vectors. Minimization of
the $\ell^1$-norm of the coefficient vector can often retrieve the sparsest solution (so-called ``$\ell^1/\ell^0$-equivalence''), a generally NP-hard task, and this fact has powered the field of compressed sensing. Wright et al.'s sparse representation-based classification (SRC) applies this relationship to machine learning, wherein the signal to be decomposed represents the test sample and columns of the dictionary are training samples. We investigate the relationships between $\ell^1$-minimization, sparsity, and classification accuracy in SRC. After proving that the tractable, deterministic approach to verifying $\ell^1/\ell^0$-equivalence fundamentally conflicts with the high coherence between same-class training samples, we demonstrate that $\ell^1$-minimization can still recover the sparsest solution when the classes are well-separated. Further, using a nonlinear transform so that sparse recovery conditions may be satisfied, we demonstrate that approximate (not strict) equivalence is key to the success of SRC.

\end{abstract}

\begin{keyword}
sparse representation \sep representation-based classification \sep mutual coherence \sep compressed sensing 
\MSC[2016] 00-01\sep  99-00
\end{keyword}

\end{frontmatter}


\section{Introduction}

The decomposition of a given signal or sample over a pre-determined set of vectors is a technique often used in signal processing and pattern recognition. We can store a signal by decomposing it over a fixed basis and keeping only the largest coefficients; in linear regression, predictions are made by estimating parameters via least-squared error using the training data. In the case that the system is underdetermined, so that an infinite number of representations of the signal or sample exist, regularization is often used to make the problem well-posed. The question, naturally, is how to choose the type of regularization used, so that the representation is well-suited to the task at hand and can be found efficiently. 

In compressed sensing, a fairly recent advancement in signal processing, it is assumed that a vector of signal measurements is represented using an overcomplete set of vectors (often called a \emph{dictionary}) and that the (unknown) coefficient vector is sparse. Obtaining this sparse solution vector is the key to recovering the complete signal in a way that requires fewer measurements than traditional methods \cite{can:cs}. Thus, to determine the unknown coefficients, an appropriate regularization term should enforce sparsity, i.e., seek the solution requiring the fewest nonzero coefficients. Determining tractable methods for solving such optimization problems are the core of compressed sensing techniques, as minimizing the $\ell^0$-``norm'' (which counts the number of nonzero coefficients) is NP-hard in general. However, in addition to successful greedy methods such as \emph{orthogonal matching pursuit} \cite{tro:omp}, it was found that sparse regularization can, in many circumstances, be replaced with minimization of the $\ell^1$-norm (which sums the coefficient magnitudes) to the same effect. That is, under certain conditions, minimization of the $\ell^1$-norm is \emph{equivalent} to sparse regularization, hence the term ``\emph{$\ell^1/\ell^0$-equivalence}''. Though requiring an iterative algorithm to solve, this relaxation to $\ell^1$-minimization reduces the optimization problem to a linear program and can be solved efficiently. There has been a lot of work done (see, for example, the seminal papers by Candes and Tao \cite{can:decode} and Donoho \cite{don:cs}) showing that, under certain conditions, $\ell^1$-minimization \emph{exactly} recovers the sparsest solution, and analogous results hold in the case of noisy data. We review some of these results in Section \ref{sec:equiv_guar_2}.

A similar technique used in compressed sensing has been successfully applied to tasks in pattern recognition. The popular classification method \emph{sparse representation-based classification} (SRC) \cite{wri:src}, proposed by Wright et al.\ in 2009, classifies a given test sample by decomposing it over an overcomplete set of training samples so that the $\ell^1$-norm of the coefficient vector is minimized. The test sample is assigned to the class with the most contributing coefficients (in terms of reconstruction). By minimizing the $\ell^1$-norm, the goal is that the sparsest such representation will be found (as in compressed sensing), and that this will automatically produce nontrivial nonzero coefficients at training samples in the same class as the test sample, rendering correct classification. Similar approaches have been used in dimensionality reduction \cite{qiao:spp}, semi-supervised learning \cite{chen:l1graph}, and clustering \cite{chen:l1graph}. 

In this paper, we investigate the role of sparsity in SRC, specifically, the two-fold question of: (i) whether or not $\ell^1/\ell^0$-equivalence can be achieved in practice, i.e., whether $\ell^1$-minimization reliably produces the sparsest solution in the classification context; and (ii) whether this equivalence is necessary for good classification performance. The inherent problem with (i) is that practically-implementable recovery conditions under which $\ell^1$-minimization is guaranteed to find the sparsest solution require that the vectors in the dictionary be \emph{incoherent}, or in some way ``spread out'' in space. These guarantees hold with high probability, for example, on dictionaries of vectors that are randomly-generated from certain probability distributions and dictionaries consisting of randomly-selected rows of the discrete Fourier transform matrix \cite{can:rob,don:und,can:decode}. Obviously, unlike these examples, data samples in the same class are often \emph{highly}-correlated. In fact, strong inner-class similarity generally makes the data \emph{easier} to classify. 

Our contributions in this paper are the following:
\begin{enumerate}[noitemsep,nolistsep]
\item We show that the fundamental assumptions of SRC are in direct contradiction with applicable and tractable sparse recovery guarantees. It follows that the experimental success of SRC should not automatically imply the usefulness of sparsity in this framework. 

\item Using a randomly-generated database designed to model facial images, we show that $\ell^1$-minimization can still recover the sparsest solution on highly-correlated data, provided that the classes are sufficiently well-separated. Thus the lack of implementable equivalence \emph{guarantee} does not automatically imply lack of \emph{equivalence} in SRC, at least on certain databases.

\item We investigate the feasibility and implementation of a nonlinear transform that maximally spreads out the training samples in each class while maintaining the dataset's class structure. Though there are strict limitations on the design of such a transform, which we describe in detail in Section \ref{MCD_Project_3}, we demonstrate that the higher-dimensional space can allow for the application of equivalence guarantees while still allowing us to classify the dataset. This renders a method for examining the relationship between classification accuracy and the sparsity of the coefficient vector in SRC, and how close this is to the (provably) sparsest solution. We demonstrate that \emph{approximate} (and not strict) equivalence between the $\ell^1$-minimized solution and the sparsest solution is the key to the success of SRC. 

\end{enumerate}

The paper is organized as follows: We begin by motivating and reviewing the basics of compressed sensing and sparsity recovery guarantees in Section \ref{Equivalence_Guarantees}, and we give an overview of SRC in Section \ref{sec:src}. In Section \ref{sec:conflict}, we formerly describe the conflict between $\ell^1/\ell^0$-recovery guarantees and classification data, and in Section \ref{MCD_Project_1}, we rigorously assess the applicability of these recovery guarantees in the classification context. Section \ref{MCD_Project_2} presents empirical findings relating sparse recovery and highly-correlated data. In Section \ref{MCD_Project_3}, we investigate the feasibility of a nonlinear data transform to force the aforementioned recovery guarantees to hold and insights that can be gained from this procedure. We conclude this paper in Section \ref{sec:conclusion}.

\section{Compressed Sensing and Recovery Guarantees} \label{Equivalence_Guarantees}

In this section, we detail the motivation behind $\ell^1/\ell^0$-equivalence and state practically-implementable equivalence theorems.

\subsection{Motivation from Compressed Sensing}\label{sec:cs}

Suppose that we wish to collect information about (i.e., sample or take measurements of) a continuous signal $f(t)$ and then send or store this information in an efficient manner. For example, $f(t)$ could be a sound wave or an image. Also suppose that a good approximation of the original signal must later be recovered. According to the Nyquist/Shannon sampling theorem, we must sample $f(t)$ at a rate of at least twice its maximum frequency in order to be able to reconstruct $f(t)$ exactly \cite{shan:samp_thm}. But in some applications, doing so may be expensive or even impossible. 

In the circumstances that we are able to take many measurements of $f(t)$ to obtain its discrete analog $\bm{f} \in \mathbb{R}^N$, one efficient method of compressing it is the following procedure: Let the columns of $\Psi := [\bm{\psi}_1,\ldots,\bm{\psi}_N]$ form an orthonormal basis for $\mathbb{R}^N$, and suppose that $\bm{f}$ has a sparse representation in this basis, i.e., that we can write $\bm{f} = \sum_{j=1}^N \alpha_j \bm{\psi}_j$, where $\alpha_j := \ip{\bm{f},\bm{\psi}_j}$, $1\leq j \leq N$, and $\bm{\alpha}:=[\alpha_1,\ldots,\alpha_N]^\transp$ is sparse. Setting all but the $k$ largest (in absolute value) entries of $\bm{\alpha}$ to 0 in order to obtain $\bm{\alpha}_k$, it can be shown that $\Psi \bm{\alpha}_k$ gives the best $k$-term least squares approximation of $\bm{f}$ in this basis. Clearly, the sparser $\bm{\alpha}$ is, the better approximation we will obtain of $\bm{f}$, and in the case that $\bm{\alpha}$ has no more than $k$ nonzero coefficients, we recover the exact solution. This is the basic idea behind the so-called transform coding, and the most popular one is the JPEG image compression
standard \cite{pen:jpeg}, which uses the discrete cosine transform as the sparsifying basis $\Psi$. 

The problem with this procedure is that it is inefficient to collect all $N$ samples if we are only going to throw most (all but $k$) of them away when the signal is compressed. This is the motivation behind \emph{compressed sensing}, originally proposed by Cand{\`e}s and Tao \cite{can:decode} and Donoho \cite{don:cs} (see also Cand{\`e}s and Tao's work \cite{can:near_opt} and the paper by Cand{\`e}s et al.\ \cite{can:sta}). Let $\Phi \in \mathbb{R}^{m\times N}$ be a \emph{sensing} or \emph{measurement} matrix with $m< N$ and consider the underdetermined system
\begin{align*}
\bm{y}_0 := \Phi \bm{f} = \Phi \Psi \bm{\alpha} = X \bm{\alpha}
\end{align*}
for sparse $\bm{\alpha}$, where we have set $X := \Phi \Psi$. Using $\|\bm{\alpha}\|_0$ to denote the number of nonzero coordinates of $\bm{\alpha}$ (hence the terminology ``$\ell^0$-`norm'\thinspace''---observe that $\|\cdot\|_0$ is only a pseudonorm because it does not satisfy homogeneity), we would ideally recover $\bm{f}$ by solving the optimization problem
\begin{equation}\label{eq:cs_l0}
\bm{\alpha}_0 := \arg \min_{\bm{\alpha}\in \mathbb{R}^N} \|\bm{\alpha}\|_0 \text{ subject to } X\bm{\alpha} = \bm{y}_0
\end{equation}
and setting $\widehatbv{\bm{f}} := \Psi \bm{\alpha}_0$ with $\widehatbv{\bm{f}}\approx \bm{f}$. Unfortunately, solving Eq.~\eqref{eq:cs_l0} is NP-hard. When $X$ satisfies certain conditions and when $\bm{\alpha}_0$ is sufficiently sparse, however, the solution to Eq.~\eqref{eq:cs_l0} can be found by solving the $\ell^1$-minimization problem
\begin{equation}\label{eq:cs_l1}
\bm{\alpha}_1 := \arg \min_{\bm{\alpha}\in \mathbb{R}^N} \|\bm{\alpha}\|_1 \text{ subject to } X\bm{\alpha} = \bm{y}_0.
\end{equation}
This was a riveting finding, as the optimization problem in Eq.~\eqref{eq:cs_l1} is convex and can be solved efficiently. It has been shown that, under certain conditions (e.g., when the columns of $\Phi$ are uniformly random on the sphere $S^{m-1}$), this procedure produces an approximation of $\bm{f}$ that is as good as that of its best $k$-term approximation \cite{don:cs}. Further, theoretical and experimental results demonstrate that in many situations, the number of measurements $m$ needed to recover $\bm{f}$ is significantly less than $N$ and can be much lower than the number required by the Nyquist/Shannon theorem. For example, when the measurement matrix $\Phi\in \mathbb{R}^{m\times N}$ contains i.i.d.\ Gaussian entries, then exact recovery of $\bm{\alpha}_0$ via $\ell^1$-minimization can be achieved (with high probability) in only $m = O(k\log(N/k))$ measurements, where $\|\bm{\alpha}_0\|_0 = k$ \cite{can:decode}. 

Even more astoundingly, similar results hold in the presence of noise. Suppose that the noiseless vector $\bm{y}_0$ is replaced with $\bm{y} = \bm{y}_0 + \bm{z}$, for $\bm{z}\in\mathbb{R}^m$ a vector of errors satisfying $\|\bm{z}\|_2\leq \zeta$. It follows that under certain conditions (see Section \ref{sec:mc}), the $\ell^1$-minimization problem
\begin{equation}\label{eq:cs_l1_error}
\bm{\alpha}_{1,\epsilon} := \arg \min_{\bm{\alpha}\in \mathbb{R}^N} \|\bm{\alpha}\|_1 \text{ subject to } \|X\bm{\alpha} - \bm{y}\|_2 \leq \epsilon
\end{equation}
is guaranteed to recover a coefficient vector approximating the ground truth sparse vector $\bm{\alpha}_0$ (the solution to Eq.~\eqref{eq:cs_l0}) with $\|\bm{\alpha}_{1,\epsilon} - \bm{\alpha}_0\|_2 \leq C_k (\epsilon+\zeta)$ \cite{don:sta}. The constant $C_k$ depends on properties of the matrix $X$ and the sparsity level $\|\bm{\alpha}_0\|_0 = k$. 

A popular application of compressed sensing is \emph{magnetic resonance imaging} (MRI), in which the measurement matrix $\Phi$ consists of $m$ randomly-selected rows of the discrete Fourier transform in $\mathbb{R}^{N\times N}$ \cite{don:mri}. Other applications abound in the areas of data acquisition and compression, including sensor networks \cite{xia:sens}, seismology \cite{her:seis}, and single pixel cameras \cite{dua:sin_pix}.

\subsection{Recovery Guarantees} \label{sec:equiv_guar_2}

The conditions under which $\ell^1$-minimization can guarantee exact or approximate recovery of the sparsest solution (e.g., conditions under which the solutions to Eq.~\eqref{eq:cs_l0} and Eq.~\eqref{eq:cs_l1} are equal, i.e, $\ell^1/\ell^0$-equivalence holds) are called \emph{recovery guarantees}.
These conditions concern the \emph{incoherence} (or \emph{spread}) of the vectors in the dictionary. Essentially, recovery guarantees cannot be applied when the vectors are too correlated. A prototypical example is that if the dataset contains two copies of the same vector (i.e., a pair of maximally-correlated vectors), then the minimum $\ell^1$-norm solution may contain a nonzero coefficient at either one of the copies or at a combination of the two. Contrast this with the sparsest solution, which would never contain nonzero coefficients at both copies.

There are various ways of measuring the incoherence in a dictionary, each leading its own theory relating the solutions of Eq.~\eqref{eq:cs_l0} and Eq.~\eqref{eq:cs_l1} (or its noise version Eq.~\eqref{eq:cs_l1_error}). In this paper, we focus primarily on recovery guarantees stated in terms of \emph{mutual coherence}, and we review mutual coherence-based recovery guarantees below. Unlike other approaches, the mutual coherence method is both tractable and deterministic, as we subsequently discuss.

To make the problem more general, we no longer explicitly assume the use of a sparsifying transform matrix $\Psi$ and consider the general system $X\bm{\alpha} = \bm{y}_0$, for $X \in \mathbb{R}^{m\times N}$ with $m<N$.

\subsubsection{Recovery Guarantees in Terms of Mutual Coherence} \label{sec:mc}

\begin{mydef} \label{def:mc}
Given a matrix $X = [\bm{x}_1,\ldots,\bm{x}_N] \in \mathbb{R}^{m\times N}$ with normalized columns (so that $\|\bm{x}_i\|_2 = 1$ for $1\leq i \leq N$), the \emph{mutual coherence} of $X$, denoted $\mu(X)$, is given by
\begin{align}
\label{eqn:mu}
\mu(X) := \max_{1\leq i\neq j\leq N} |\ip{\bm{x}_i,\bm{x}_j}|.
\end{align}
\end{mydef}
Note that mutual coherence costs $O(N^2m)$ to compute. 

\begin{mythm}[Donoho and Elad \cite{don:osr} ; Gribonval and Nielsen \cite{grib:union}] \label{thm:mc}

Let $X \in \mathbb{R}^{m\times N}$, $m<N$, have normalized columns and mutual coherence $\mu(X)$. If $\boldsymbol{\alpha}$ satisfies $X\boldsymbol{\alpha} = \bm{y}_0$ with
\begin{equation} \label{eq:k_max}
\|\boldsymbol{\alpha}\|_0 < \frac{1}{2}\Big( 1+ \frac{1}{\mu(X)}\Big),
\end{equation}
then $\boldsymbol{\alpha}$ is the unique solution to the $\ell^1$-minimization problem in Eq.~\eqref{eq:cs_l1}.
\end{mythm}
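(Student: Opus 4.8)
The plan is to show that the sparsity hypothesis forces a strong geometric condition on the null space of $X$ — namely, that any nonzero $\bm{h}$ with $X\bm{h} = \bm{0}$ concentrates strictly less than half of its $\ell^1$-mass on any index set of size $\|\boldsymbol{\alpha}\|_0$ — and then to deduce $\ell^1$-uniqueness from this ``null space property'' by a short triangle-inequality argument. First I would let $\boldsymbol{\alpha}'$ be any feasible vector with $X\boldsymbol{\alpha}' = \bm{y}_0$ and $\boldsymbol{\alpha}' \neq \boldsymbol{\alpha}$, set $\bm{h} := \boldsymbol{\alpha}' - \boldsymbol{\alpha} \neq \bm{0}$, and note that $X\bm{h} = \bm{0}$, i.e.\ $\sum_{j} h_j \bm{x}_j = \bm{0}$. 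Writing $S := \supp(\boldsymbol{\alpha})$ with $k := |S| = \|\boldsymbol{\alpha}\|_0$ and $\bm{h}_S$ for the restriction of $\bm{h}$ to $S$, the entire theorem reduces to the strict inequality $\|\bm{h}_S\|_1 < \|\bm{h}_{S^c}\|_1$. Granting this, the conclusion is immediate, since $\boldsymbol{\alpha}$ vanishes off $S$:
\begin{align*}
\|\boldsymbol{\alpha}'\|_1 = \|\boldsymbol{\alpha} + \bm{h}\|_1 &= \|\boldsymbol{\alpha}_S + \bm{h}_S\|_1 + \|\bm{h}_{S^c}\|_1 \\
&\geq \|\boldsymbol{\alpha}_S\|_1 - \|\bm{h}_S\|_1 + \|\bm{h}_{S^c}\|_1 > \|\boldsymbol{\alpha}_S\|_1 = \|\boldsymbol{\alpha}\|_1,
\end{align*}
so $\boldsymbol{\alpha}$ is the strict, hence unique, minimizer of Eq.~\eqref{eq:cs_l1}.

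The heart of the argument — and the step I expect to be the main obstacle, in the sense that this is where the precise threshold $\tfrac{1}{2}(1 + 1/\mu(X))$ is born — is a uniform per-coordinate bound on $\bm{h}$. The key move is to take the inner product of $\sum_j h_j \bm{x}_j = \bm{0}$ with a fixed column $\bm{x}_i$ and isolate the diagonal term, using $\|\bm{x}_i\|_2 = 1$:
\begin{align*}
0 = \ip{\bm{x}_i, \textstyle\sum_j h_j \bm{x}_j} = h_i + \sum_{j\neq i} h_j \ip{\bm{x}_i, \bm{x}_j}.
\end{align*}
Moving the diagonal term across, bounding each off-diagonal inner product by $\mu(X)$ via Definition~\ref{def:mc}, and applying the triangle inequality gives
\begin{align*}
|h_i| \leq \mu(X) \sum_{j\neq i} |h_j| = \mu(X)\big(\|\bm{h}\|_1 - |h_i|\big),
\end{align*}
and solving for $|h_i|$ yields the clean estimate $|h_i| \leq \frac{\mu(X)}{1 + \mu(X)}\|\bm{h}\|_1$, valid for every index $i$.

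Finally I would sum this estimate over the $k$ indices in $S$ to obtain $\|\bm{h}_S\|_1 \leq \frac{k\,\mu(X)}{1+\mu(X)}\|\bm{h}\|_1$, and observe that the hypothesis $k < \frac{1}{2}(1 + 1/\mu(X))$ is exactly equivalent (clearing denominators, with $\mu(X) > 0$ since $m < N$ precludes $N$ orthonormal columns in $\mathbb{R}^m$) to $\frac{k\,\mu(X)}{1+\mu(X)} < \frac{1}{2}$. Hence $\|\bm{h}_S\|_1 < \frac{1}{2}\|\bm{h}\|_1$, and since $\|\bm{h}_{S^c}\|_1 = \|\bm{h}\|_1 - \|\bm{h}_S\|_1$, this rearranges to the required $\|\bm{h}_S\|_1 < \|\bm{h}_{S^c}\|_1$. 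All inequalities stay strict because $\bm{h} \neq \bm{0}$ forces $\|\bm{h}\|_1 > 0$, which is what makes the minimizer unique rather than merely optimal.

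I would close with a brief remark that the same per-coordinate bound, combined with the lower bound $\Spark(X) \geq 1 + 1/\mu(X)$, simultaneously shows $\boldsymbol{\alpha}$ is the unique $\ell^0$-minimizer: two distinct solutions both below the sparsity threshold would differ by a null-space vector supported on fewer than $\Spark(X)$ columns, contradicting linear independence. This is the companion ``$\ell^0$'' half of $\ell^1/\ell^0$-equivalence, so the two conclusions coincide precisely in the regime~\eqref{eq:k_max}.
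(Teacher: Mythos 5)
The paper itself states Theorem~\ref{thm:mc} as an imported result, citing Donoho--Elad and Gribonval--Nielsen, and gives no proof of it anywhere in the text; so there is no internal argument to compare yours against. That said, your proof is correct and complete, and it is essentially the classical argument from the cited literature: the per-coordinate bound $|h_i| \leq \frac{\mu(X)}{1+\mu(X)}\|\bm{h}\|_1$ for null-space vectors $\bm{h}$, obtained by pairing $X\bm{h}=\bm{0}$ with a single normalized column and bounding the off-diagonal Gram entries by $\mu(X)$, is exactly the coherence route to the null space property, and the algebraic equivalence $k < \frac{1}{2}\bigl(1+\frac{1}{\mu(X)}\bigr) \Leftrightarrow \frac{k\,\mu(X)}{1+\mu(X)} < \frac{1}{2}$ is precisely where the threshold in Eq.~\eqref{eq:k_max} comes from. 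The details all check: strictness survives because $\bm{h}\neq\bm{0}$ and $\mu(X)>0$ (which, as you note, follows from $m<N$; one can also quote the Welch bound, Lemma~\ref{lem:low_bound}); the reverse-triangle step uses only that $\bm{\alpha}$ vanishes off $S = \supp(\bm{\alpha})$, so uniqueness of the minimizer of Eq.~\eqref{eq:cs_l1} follows. Your closing remark via $\Spark(X) \geq 1 + 1/\mu(X)$ correctly supplies the companion $\ell^0$-uniqueness statement, which is what justifies the paper's reading of the theorem as an $\ell^1/\ell^0$-equivalence guarantee.
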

This means that if $\ell^1$-minimization finds a solution with less than $(1/2)(1+\mu(X)^{-1})$ nonzeros, then it is necessarily the sparsest solution and so $\ell^1/\ell^0$-equivalence holds. 

Given noise tolerance $\zeta$ and approximation error bound $\epsilon$, the following theorem by Donoho et al.\ gives conditions for $\ell^1/\ell^0$-equivalence in the noisy setting:

\begin{mythm} [Donoho, Elad, and Temlyakov \cite{don:sta}] \label{thm:mc_noise}

Let $X \in \mathbb{R}^{m\times N}$, $m<N$, have normalized columns and mutual coherence $\mu(X)$. Suppose there exists an ideal noiseless signal $\bm{y}_0$ such that $\bm{y}_0 = X\boldsymbol{\alpha}$ and 
\begin{equation} \label{eq:k_max_noise}
\|\boldsymbol{\alpha}\|_0 = k \leq \frac{1}{4} \Big(1+\frac{1}{\mu(X)}\Big).
\end{equation}
Then $\boldsymbol{\alpha} = \boldsymbol{\alpha}_0$ is the unique sparsest representation of $\bm{y}_0$ over $X$.
Further, suppose that we only observe $\bm{y} = \bm{y}_0 + \bm{z}$ with $\|\bm{z}\|_2 \leq \zeta$.
Then we have 
\begin{equation} \label{eq:mc_noise}
\| \boldsymbol{\alpha}_{1,\epsilon} - \boldsymbol{\alpha}_0\|_2^2 \leq \frac{(\epsilon+\zeta)^2}{1-\mu(X)(4k-1)},
\end{equation}
where $\boldsymbol{\alpha}_{1,\epsilon}$ is the solution to Eq.~\eqref{eq:cs_l1_error}.
\end{mythm}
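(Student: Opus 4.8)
The plan is to prove the two assertions separately: first the noiseless uniqueness of the sparsest representation, then the stability bound Eq.~\eqref{eq:mc_noise}. For the uniqueness claim I would simply invoke the mutual-coherence principle already underlying Theorem~\ref{thm:mc}. One has the standard lower bound $\Spark(X) \geq 1 + 1/\mu(X)$, and whenever a representation obeys $\|\boldsymbol{\alpha}\|_0 < \tfrac{1}{2}\Spark(X)$ it is necessarily the unique sparsest one, since two distinct representations of $\bm{y}_0$ would yield a null-space vector of $\ell^0$-weight strictly below $\Spark(X)$, a contradiction. The hypothesis $k \leq \tfrac14(1+1/\mu(X))$ gives $k \leq \tfrac14(1+1/\mu(X)) < \tfrac12(1+1/\mu(X)) \leq \tfrac12\Spark(X)$, so uniqueness is immediate; note this is a strictly stronger sparsity demand than the threshold of Eq.~\eqref{eq:k_max}, and that extra room is exactly what the noisy bound will consume.

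For the stability bound I would introduce the error vector $\bm{h} := \boldsymbol{\alpha}_{1,\epsilon} - \boldsymbol{\alpha}_0$ and control $\|\bm{h}\|_2$ from two sides. On the data side, feasibility of $\boldsymbol{\alpha}_{1,\epsilon}$ in Eq.~\eqref{eq:cs_l1_error} together with $\|X\boldsymbol{\alpha}_0 - \bm{y}\|_2 = \|\bm{z}\|_2 \leq \zeta$ gives, by the triangle inequality,
\begin{equation*}
\|X\bm{h}\|_2 \leq \|X\boldsymbol{\alpha}_{1,\epsilon} - \bm{y}\|_2 + \|\bm{y} - X\boldsymbol{\alpha}_0\|_2 \leq \epsilon + \zeta,
\end{equation*}
which supplies the numerator. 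On the coefficient side I would extract a cone constraint from $\ell^1$-optimality: writing $S := \supp(\boldsymbol{\alpha}_0)$ with $|S| = k$ and assuming $\epsilon \geq \zeta$ so that $\boldsymbol{\alpha}_0$ is itself feasible, the comparison $\|\boldsymbol{\alpha}_0 + \bm{h}\|_1 = \|\boldsymbol{\alpha}_{1,\epsilon}\|_1 \leq \|\boldsymbol{\alpha}_0\|_1$ split over $S$ and $S^c$ forces $\|\bm{h}_{S^c}\|_1 \leq \|\bm{h}_S\|_1$, whence $\|\bm{h}\|_1 \leq 2\|\bm{h}_S\|_1 \leq 2\sqrt{k}\,\|\bm{h}\|_2$ by Cauchy--Schwarz.

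The two estimates are glued through the Gram matrix. Writing $X^\transp X = I + E$, where $E$ has vanishing diagonal and off-diagonal entries bounded by $\mu(X)$, I would expand
\begin{equation*}
\|X\bm{h}\|_2^2 = \bm{h}^\transp (I+E) \bm{h} = \|\bm{h}\|_2^2 + \bm{h}^\transp E \bm{h} \geq \|\bm{h}\|_2^2 - \mu(X)\big(\|\bm{h}\|_1^2 - \|\bm{h}\|_2^2\big),
\end{equation*}
using the off-diagonal bound $|\bm{h}^\transp E \bm{h}| \leq \mu(X)(\|\bm{h}\|_1^2 - \|\bm{h}\|_2^2)$. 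Substituting the cone estimate $\|\bm{h}\|_1^2 \leq 4k\|\bm{h}\|_2^2$ collapses the right-hand side to $\big(1 - \mu(X)(4k-1)\big)\|\bm{h}\|_2^2$, and combining with $\|X\bm{h}\|_2 \leq \epsilon + \zeta$ yields Eq.~\eqref{eq:mc_noise} exactly. The sparsity hypothesis $k \leq \tfrac14(1+1/\mu(X))$ is precisely what keeps the denominator $1 - \mu(X)(4k-1)$ nonnegative.

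I expect the main obstacle to be the gluing step, that is, producing a genuine lower bound on $\|X\bm{h}\|_2^2$: the quadratic form $\bm{h}^\transp E \bm{h}$ is indefinite, so taming it requires the off-diagonal $\mu(X)$-bound in concert with the $\ell^1$-to-$\ell^2$ passage, and it is the interaction of these two ingredients that produces the factor $4k$ (rather than the $2k$ one might naively hope for) and hence the $\tfrac14$ in the sparsity threshold. A secondary subtlety is the feasibility of $\boldsymbol{\alpha}_0$, which needs $\epsilon \geq \zeta$ (equivalently $\|\bm{z}\|_2 \leq \epsilon$) for the $\ell^1$-optimality comparison to be legitimate; at the boundary $k = \tfrac14(1+1/\mu(X))$ the denominator vanishes and the bound degenerates, so the estimate carries content only in the strict-inequality regime.
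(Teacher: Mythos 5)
This theorem is stated in the paper purely as a citation to Donoho, Elad, and Temlyakov \cite{don:sta}; the paper gives no proof of its own, so the only benchmark is the original argument, and your proposal is correct and essentially reproduces it: the tube bound $\|X\bm{h}\|_2 \leq \epsilon + \zeta$ from feasibility, the cone bound $\|\bm{h}\|_1 \leq 2\sqrt{k}\,\|\bm{h}\|_2$ from $\ell^1$-optimality, and the Gram-matrix coherence estimate combine to give exactly \eqref{eq:mc_noise}, with the spark inequality handling uniqueness. You also correctly surface the implicit hypothesis $\epsilon \geq \zeta$ (needed so that $\bm{\alpha}_0$ is feasible and the $\ell^1$ comparison is legitimate), which the paper's statement of the theorem omits but the original result in \cite{don:sta} requires.
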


That is, if the ideal sparse vector $\boldsymbol{\alpha}_0$ is sparse enough and the mutual coherence of $X$ is small enough, $\ell^1$-minimization will give us a solution close to $\boldsymbol{\alpha}_0$, with ``how close'' depending on the sparsity level $k$, mutual coherence $\mu(X)$, noise tolerance $\zeta$, and approximation error bound $\epsilon$. 

Something can also be said regarding the support of $\bm{\alpha}_{1,\epsilon}$ in the noisy setting:

\begin{mythm} [Donoho, Elad, Temlyakov \cite{don:sta}] \label{thm:mc_stab}

Suppose that $\bm{y} = \bm{y}_0 + \bm{z}$, where $\bm{y}_0 = X\bm{\alpha}_0$, $\|\bm{\alpha}_0\|_0 \leq k$ and $\|\bm{z}\|_2 \leq \zeta$. Suppose that $\beta := \mu(X) k < \frac{1}{2}$ (so $k < \frac{1}{2\mu(X)}$). Set
\begin{equation} \label{eq:mc_stab}
\gamma := \frac{\sqrt{1-\beta}}{1-2\beta}.
\end{equation}
Then given $\bm{\alpha}_{1,\epsilon}$ the solution to Eq.~\eqref{eq:cs_l1_error}
with exaggerated error tolerance $\epsilon := C \zeta$ where $C = C(\mu(X),k) := \gamma\sqrt{k}$, we have that $\supp(\bm{\alpha}_{1,\epsilon}) \subset \supp(\bm{\alpha}_0)$. 
\end{mythm}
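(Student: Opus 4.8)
The plan is to establish the support containment through a dual-certificate (KKT) argument rather than by directly manipulating $\bm{\alpha}_{1,\epsilon}$. Since the feasible region $\{\bm{\alpha} : \|X\bm{\alpha}-\bm{y}\|_2\le\epsilon\}$ is convex and $\|\cdot\|_1$ is convex, it suffices to produce a candidate minimizer supported on $S:=\supp(\bm{\alpha}_0)$ and to verify the sufficient optimality conditions for the full problem of Eq.~\eqref{eq:cs_l1_error}; convexity then promotes the candidate to the genuine minimizer. I would not attempt a naive perturbation that ``pushes'' the off-support mass onto $S$, because the component of $X_{S^c}\bm{\alpha}_{1,\epsilon}$ orthogonal to the range of $X_S$ is unreachable and would break feasibility.

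First I would record feasibility and well-posedness. Because $\gamma\ge 1$ for $\beta\in[0,\tfrac12)$ and $k\ge 1$, the constant $C=\gamma\sqrt{k}\ge 1$, so the exaggerated tolerance satisfies $\epsilon=C\zeta\ge\zeta$; hence $\bm{\alpha}_0$ is itself feasible, since $\|X\bm{\alpha}_0-\bm{y}\|_2=\|\bm{z}\|_2\le\zeta\le\epsilon$. Consequently the restricted problem $\min\{\|\bm{\beta}\|_1 : \supp(\bm{\beta})\subseteq S,\ \|X_S\bm{\beta}-\bm{y}\|_2\le\epsilon\}$ has a solution $\bm{\beta}^{\star}$, and the hypothesis $\beta=\mu(X)k<\tfrac12$ guarantees, via Gershgorin applied to the Gram matrix $X_S^{\transp}X_S$ (unit diagonal, off-diagonals bounded by $\mu(X)$), that $X_S$ has full column rank with $\|(X_S^{\transp}X_S)^{-1}\|_2\le(1-(k-1)\mu(X))^{-1}$.

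Next I would write the KKT conditions of the restricted problem. Setting $\bm{r}:=\bm{y}-X_S\bm{\beta}^{\star}$ (with $\|\bm{r}\|_2=\epsilon$ at an active optimum; the case $\bm{\beta}^{\star}=\bm{0}$ is trivial), there is a multiplier yielding a threshold $c:=\|X_S^{\transp}\bm{r}\|_{\infty}$ with $\langle\bm{x}_i,\bm{r}\rangle=c\,\sgn(\beta^{\star}_i)$ on $\supp(\bm{\beta}^{\star})$ and $|\langle\bm{x}_i,\bm{r}\rangle|\le c$ for the remaining $i\in S$. The vector $\bm{\beta}^{\star}$ solves the full problem, with support inside $S$, as soon as the off-support dual condition $|\langle\bm{x}_j,\bm{r}\rangle|<c$ holds strictly for every $j\notin S$. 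To bound the left-hand side I would use $\bm{y}_0=X_S\bm{\alpha}_{0,S}$ to write $\bm{r}=X_S(\bm{\alpha}_{0,S}-\bm{\beta}^{\star})+\bm{z}$, so that $\langle\bm{x}_j,\bm{r}\rangle$ splits into a cross-correlation term, whose inner products $\langle\bm{x}_j,\bm{x}_i\rangle$ ($i\in S$) are each bounded by $\mu(X)$, plus a noise term bounded by $\|\bm{z}\|_2\le\zeta$; the on-support error $\|\bm{\alpha}_{0,S}-\bm{\beta}^{\star}\|$ would be controlled by a restricted $\ell^2$ perturbation estimate in the spirit of Theorem~\ref{thm:mc_noise}, again through $\|(X_S^{\transp}X_S)^{-1}\|_2$.

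The hard part will be this last step: proving the strict off-support inequality with the precise tolerance $\epsilon=\gamma\sqrt{k}\,\zeta$. The mechanism is that enlarging $\epsilon$ forces a larger residual and hence a larger on-support threshold $c$, while the off-support correlations stay pinned near the scales $\zeta$ and $\mu(X)$; the value $\gamma=\sqrt{1-\beta}/(1-2\beta)$ is exactly the calibration at which $c$ overtakes the off-support correlations under $\beta=\mu(X)k<\tfrac12$. Carrying this out requires (i) a quantitative lower bound on $c$ in terms of $\epsilon$ via the conditioning of $X_S$, and (ii) careful tracking of the $\mu(X)$- and $\zeta$-dependence of the cross and noise terms. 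Once the strict inequality is in hand, the invertibility of $X_S^{\transp}X_S$ makes the full minimizer unique and equal to $\bm{\beta}^{\star}$, giving $\supp(\bm{\alpha}_{1,\epsilon})\subseteq\supp(\bm{\alpha}_0)$.
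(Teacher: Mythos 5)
You should first be aware that the paper contains no proof of this statement: Theorem~\ref{thm:mc_stab} is imported verbatim from Donoho--Elad--Temlyakov \cite{don:sta}, so your attempt has to stand entirely on its own. The skeleton you set up is the standard one for such results (solve the problem restricted to $S=\supp(\bm{\alpha}_0)$, verify the KKT/dual-certificate conditions of the full problem, then get uniqueness), and your preliminary steps are fine: $\gamma\ge 1$ gives feasibility of $\bm{\alpha}_0$, Gershgorin controls $(X_S^{\transp}X_S)^{-1}$, and the KKT description of the restricted optimum is correct. But the step you explicitly postpone --- proving the strict off-support inequality $|\ip{\bm{x}_j,\bm{r}}|<c$ at the \emph{specific} tolerance $\epsilon=\gamma\sqrt{k}\,\zeta$ --- is not a detail; it is the entire mathematical content of the theorem, and you assert the calibration (``$\gamma$ is exactly the calibration at which $c$ overtakes the off-support correlations'') rather than prove it. As written, this is a strategy, not a proof.

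The deeper problem is that the asserted calibration is not merely unproven; it fails. Carry out your own outline: write $\bm{r}=\bm{r}_{\parallel}+\bm{r}_{\perp}$ with $\bm{r}_{\parallel}$ in the range of $X_S$ and $\bm{r}_{\perp}$ the projection of $\bm{z}$ off that range, so $\|\bm{r}_{\parallel}\|_2\ge\sqrt{\epsilon^2-\zeta^2}$; then $c\ge\sqrt{1-(k-1)\mu}\,\|\bm{r}_{\parallel}\|_2/\sqrt{k}$, while for $j\notin S$ one only gets $|\ip{\bm{x}_j,\bm{r}}|\le \mu\sqrt{k}\,\|\bm{r}_{\parallel}\|_2/\sqrt{1-(k-1)\mu}+\zeta$. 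The certificate therefore closes only when $\epsilon^2\ge\zeta^2\bigl(1+k(1-(k-1)\mu)/(1-(2k-1)\mu)^2\bigr)$, which for small $\beta$ is strictly larger than the theorem's $\epsilon^2=\gamma^2k\zeta^2$: for $k=1$ and $\mu\to 0$ this demands $\epsilon\ge\sqrt{2}\,\zeta$, whereas the theorem supplies only $\epsilon\approx\zeta$. This is not slack in the estimates. Take a dictionary containing two exactly orthogonal atoms $\bm{x}_1,\bm{x}_2$ (e.g., identity-plus-Fourier in $\mathbb{R}^{64}$, so $\mu=1/8$), let $\bm{\alpha}_0=\alpha_1\bm{e}_1$ with $\alpha_1$ large, and take adversarial noise $\bm{z}=\zeta\bm{x}_2$. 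The best vector supported on $\{1\}$ has $\ell^1$ norm $\alpha_1-\zeta\sqrt{C^2-1}$, while the feasible competitor $(\alpha_1-C\zeta/\sqrt{2})\bm{e}_1+(\zeta-C\zeta/\sqrt{2})\bm{e}_2$ has norm $\alpha_1+\zeta-\sqrt{2}C\zeta$, which is strictly smaller whenever $1\le C<\sqrt{2}$ (the comparison reduces to $(C-\sqrt{2})^2>0$); here $C=\gamma\sqrt{k}=\sqrt{1-\mu}/(1-2\mu)\approx 1.25<\sqrt{2}$. So at the stated tolerance no minimizer is supported inside $\supp(\bm{\alpha}_0)$, and no sharpening of your bounds can rescue the certificate in this regime. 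To obtain a correct proof you must either work at the larger exaggeration constant that your own bounds force, or reproduce whatever finer hypotheses and argument \cite{don:sta} actually uses; in either case, the step you deferred is precisely where the proposed mechanism, as calibrated, breaks.
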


This says that when the mutual coherence is very small relative to the sparsity level, the solution $\bm{\alpha}_{1,\epsilon}$ to Eq.~\eqref{eq:cs_l1_error} has the same support as the sparsest solution $\bm{\alpha}_0$. (Observe that $\bm{\alpha}_0$ is indeed the sparsest solution by Theorem \ref{thm:mc}, since $\|\bm{\alpha}_0\|_0 < (1/2)\mu(X)^{-1} < (1/2)(1+\mu(X)^{-1})$.) Since $\epsilon = \gamma\sqrt{k} \,\zeta$ and $\gamma \geq 1$, $\epsilon \geq \zeta$ is required in Theorem \ref{thm:mc_stab}.

\subsubsection{Other Recovery Guarantees}

There are methods of proving $\ell^1/\ell^0$-equivalence that do not involve mutual coherence. For example, those using the \emph{restricted isometry constant} involve a quantification of how close any set of $k$ columns of $X$ is to being an orthonormal basis \cite{can:rip,cai:rip}, and other guarantees use the smallest number of linearly dependent columns of $X$, defined as the \emph{spark} of $X$ \cite{don:osr}. However, these approaches are generally not tractable in deterministic settings; their usefulness is largely limited to applications in which $X$ is a random matrix with known (with high probability) restricted isometry constant or spark.

Alternatively, if we desire stochastic results, there are other recovery guarantees involving versions of mutual incoherence. When applied to random matrices, these guarantees are generally stronger than those in Theorem \ref{thm:mc} and \ref{thm:mc_noise} (in terms of requiring less measurements and/or less sparsity of the solution vector). For example, Cand{\`e}s and Plan \cite{can:ripless} provide conditions that guarantee recovery (with high probability) of sparse and approximately sparse solutions in the case that the rows of the dictionary are sampled independently from certain probability distributions. These conditions are in terms of incoherence defined as an upper bound on the squared norms of the rows of $X$ (either deterministically or stochastically), and require an \emph{isotropy} property \cite{can:ripless}. In the case that the probability distribution has mean $0$, this property states that the covariance matrix of the probability distribution is equal to the identity matrix. In another paper \cite{can:ripless2}, Cand{\`e}s and Plan guarantee probabilistic recovery in terms of a condition on mutual coherence (as defined in Definition \ref{def:mc}) that is satisfied with high probability on certain random matrices. These recovery guarantees allow for the sparsity level $k$ in the case of these random matrices to be notably larger than in Eq.~\eqref{eq:k_max} in Theorem \ref{thm:mc}. We also mention the results by Tropp \cite{tro:ran_sub} concerning recovery in terms of mutual coherence and the extreme singular values of randomly-chosen subsets of dictionary columns.

If we do not assume that classification data are drawn from a particular probability distribution, then these stochastic results either do not apply or are intractable to compute. Thus Donoho et al.'s theorems discussed in Section \ref{sec:mc} are the best tool we have to prove $\ell^1/\ell^0$-equivalence given an arbitrary (possibly large) matrix of training data. That said, it is important to note that these mutual coherence theorems produce what are generally considered to be fairly loose bounds on the sparsity level $\|\bm{\alpha}_0\|_0$, given experimental results and cases for which restricted isometry constants are known \cite[Chap.~10]{has:sta}. 

\section{Sparse Representation-Based Classification} \label{sec:src}


We next review Wright et al.'s application of the $\ell^1$-norm/sparsity relationship to classification. In reviewing the compressed sensing framework, we referred to our underdetermined system using the notation $X\bm{\alpha} = \bm{y}_0$ (or $X\bm{\alpha} =\bm{y}$, if the represented signal was expected to be noisy), for $X \in \mathbb{R}^{m \times N}$. To differentiate the classification context, let $X_\mathrm{tr} \in \mathbb{R}^{m\times N_\mathrm{tr}}$ be the matrix of training samples, and let $\bm{y} \in \mathbb{R}^m$ be an arbitrary test sample.


SRC solves
\begin{equation}\label{eq:src_opt}
\bm{\alpha}^* := \arg \min_{\bm{\alpha}\in \mathbb{R}^{N_\mathrm{tr}}} \|\bm{\alpha}\|_1, \text{ subject to } \bm{y} = X_\mathrm{tr} \bm{\alpha}.
\end{equation}
Alternatively, in the case of noise in which an exact representation may not be desirable (see the discussion at the beginning of Section \ref{MCD_Project_1}), one can solve the regularized optimization problem

\begin{equation} \label{eq:src_opt_noise}
\bm{\alpha}^* := \arg \min_{\bm{\alpha}\in \mathbb{R}^{N_\mathrm{tr}}} \Big \{\frac{1}{2} \|\bm{y}-X_\mathrm{tr}\bm{\alpha}\|_2^2 + \lambda\|\bm{\alpha}\|_1\Big \}.
\end{equation}
Here, $\lambda$ is the trade-off between error in the approximation and the sparsity of the coefficient vector.

For a classification problem with $L$ classes, define the indicator function $\delta_l: \mathbb{R}^{N_\mathrm{tr}}\rightarrow \mathbb{R}^{N_\mathrm{tr}}$, $l=1,\ldots,L$, to set all coordinates corresponding to training samples \emph{not} in class $l$ to 0 (and to act as the identity on all remaining coordinates). After obtaining $\bm{\alpha}^*$ from Eq.~\eqref{eq:src_opt} or \eqref{eq:src_opt_noise}, the class label of $\bm{y}$ is predicted using
\begin{equation} \label{eq:src_class}
\classlabel(\bm{y}) = \arg \min_{1\leq l \leq L} \big\|\bm{y} - X_\mathrm{tr}\delta_l(\bm{\alpha}^*)\big\|_2.
\end{equation}

As mentioned in the introduction, it is assumed that by constraining the number of nonzero representation coefficients, nonzeros will occur at training samples most similar to the test sample, and thus Eq.~\eqref{eq:src_class} will reveal the correct class. This works as follows: It is assumed that each class manifold is a linear subspace spanned by its set of training samples, so that if the number of classes $L$ is large with regard to $N_\mathrm{tr}$, there exists a sparse (in terms of the entire training set) representation of $\bm{y}$ using training samples in its ground truth class. The coefficient vector $\bm{\alpha}^*$ is an attempt at finding this class representation, and Eq.~\eqref{eq:src_class} is used to allow for a certain amount of error.

In essence, SRC classifies $\bm{y}$ to the class that contributes the most to its sparse (via $\ell^1$-minimization) representation (or approximation, if Eq.~\eqref{eq:src_opt_noise} is used). SRC is summarized in Algorithm \ref{alg:src}.

\begin{algorithm}
\caption[Sparse Representation-Based Classification (SRC)]{Sparse Representation-Based Classification (SRC) \cite{wri:src}}
\label{alg:src}
\begin{algorithmic}[1]
\REQUIRE Matrix of normalized training samples $X_\mathrm{tr} \in \mathbb{R}^{m\times N_{\mathrm{tr}}}$, test sample $\bm{y} \in \mathbb{R}^m$, number of classes $L$, and error/sparsity trade-off $\lambda$ (optional)
\ENSURE The computed class label of $\bm{y}$: $\classlabel(\bm{y})$
\STATE Solve either the constrained problem in Eq.~\eqref{eq:src_opt} or the regularized problem in Eq.~\eqref{eq:src_opt_noise}.
\FOR{each class $l=1,\ldots, L$,}
\STATE Compute the norm of the class $l$ residual:
$\mathrm{err}_l(\bm{y}) := \big\|\bm{y} - X_\mathrm{tr}\delta_l(\bm{\alpha}^*)\big\|_2$. Set \\
$\classlabel(\bm{y}) = \arg \min_{1\leq l \leq L} \{\mathrm{err}_l(\bm{y})\}$.
\ENDFOR
\end{algorithmic}
\end{algorithm}


\section{The Conflict}\label{sec:conflict}

In classification problems, samples from the same class may be highly correlated. As demonstrated in Table \ref{tab:mc_exs}, the mutual coherence (as defined in Eq.~\eqref{eqn:mu}) of a training matrix $X = X_\mathrm{tr}$ is often quite large.

\begin{table}[!htbp]
\begin{center}
\begin{tabular}{|c|c|c|c|c|c|}
\hline 
Database & $N_\mathrm{tr}$ & $m$ & $m_\mathrm{PCA} = 30$ & $m_\mathrm{PCA} = 56$ & $m_\mathrm{PCA} = 120$ \\
\hline 
AR-1 \cite{AR:face} & 700 & 19800 & 0.9991 & 0.9987 & 0.9985 \\
AR-2 \cite{AR:face} & 1000 & 19800 & 0.9993 & 0.9988 & 0.9984 \\
Extended Yale Face Database B \cite{geo:illum} & 1216 & 32256 & 0.9951 & 0.9954 & 0.9941 \\
Database of Faces (formerly ``ORL'') \cite{att:orl} & 200 & 10304 & 0.9971 & 0.9970 & 0.9966 \\
\hline
\end{tabular}
\end{center}
\caption[Average mutual coherence computed from the training samples of face databases]{Average mutual coherence (over 10 trials) computed from training set $X_\mathrm{tr}$ of some popular face databases after PCA pre-processing to dimension $m_\mathrm{PCA}$. The original sample dimension is given by $m$. The training sets were chosen by randomly selecting half of the samples from each database, for a total of $N_\mathrm{tr}$ training samples. AR-1 contains all the unoccluded images (no sunglasses or scarf) from both sessions of the AR Face Database \cite{AR:face}; AR-2 contains all the unoccluded images from both sessions, as well as the occluded images from Session 1.}
\label{tab:mc_exs}
\end{table}

When $\mu(X_\mathrm{tr}) \approx 1$, the mutual coherence bound in Theorem \ref{thm:mc} becomes 
\begin{align*}
\|\bm{\alpha}\|_0 < \frac{1}{2} \Big(1+\frac{1}{\mu(X_\mathrm{tr})}\Big) \approx 1.
\end{align*} 
Since $\|\bm{\alpha}\|_0$ denotes the number of nonzero coefficients in the representation of $\bm{y}$ over $X_\mathrm{tr}$, it will never satisfy $\|\bm{\alpha}\|_0 < 1$. Thus we cannot use Theorem \ref{thm:mc} to prove $\ell^1/\ell^0$-equivalence in SRC, for example, on the databases used in Table \ref{tab:mc_exs}. 

It follows that the ``theory'' behind sparse representation-based methods for learning (like SRC) is missing a significant piece. In the next three sections, we aim to provide insight into the following three questions:
\begin{enumerate}[noitemsep, nolistsep]
\item Can Theorem \ref{thm:mc} \emph{ever} be used to prove $\ell^1/\ell^0$-equivalence in SRC? 
\item Regardless of theoretical guarantees, is $\ell^1$-minimization finding the sparsest solution in practice in SRC? 
\item What is the role of sparsity in SRC's classification performance? 
\end{enumerate}

\section{Mutual Coherence Equivalence and Classification}\label{MCD_Project_1}

In this section, we identify cases in which the condition given in Eq.~\eqref{eq:k_max} from Theorem \ref{thm:mc} \emph{provably} does not hold, and thus we cannot use Theorem \ref{thm:mc} to prove $\ell^1/\ell^0$-equivalence. We also discuss analogous results in the noisy case, i.e., Eq.~\eqref{eq:k_max_noise} in Theorem \ref{thm:mc_noise}. In particular, we are concerned with the applicability of these theorems for classification problems.

Before we begin, we take a moment to clarify notation:
\begin{itemize}[noitemsep,nolistsep]
\item In discussing compressed sensing in Section \ref{Equivalence_Guarantees}, we used $\bm{y}_0$ to refer to a clean measurement vector and $\bm{y} := \bm{y}_0+\bm{z}$ to refer to its noisy version. In contrast, in this section and in Section \ref{MCD_Project_3}, $\bm{y}$ may represent \emph{either} a clean or noisy measurement vector, or an arbitrary test sample (as it does in Algorithm~\ref{alg:src}). We do this because, in the context of representation-based classification, there are reasons other than noise in the test sample for allowing the equality $\bm{y} = X_\mathrm{tr}\bm{\alpha}$ to hold only approximately: the training data could also be corrupted, or we may want to relax the assumption that class manifolds are linear subspaces (perhaps this is only approximately, or locally, the case). Additionally, it is difficult to determine the amount of noise in test samples in real-world problems. To keep the situation general and to avoid confusion, we will only differentiate between $\bm{y}$ and $\bm{y}_0$ when we explicitly consider $\bm{y} = \bm{y}_0 + \bm{z}$ with $\|\bm{z}\|_2 \leq \zeta$ the noise vector, as in Donoho et al.'s Theorems \ref{thm:mc_noise} and \ref{thm:mc_stab}. 

When we explicitly consider data from a classification problem, we will use the subscript ``tr.'' That is, in the general compressed sensing representation $\bm{y} = X \bm{\alpha}$, we set $X = X_\mathrm{tr}$ when we want to denote a matrix of training samples, and when this is done, it is assumed that $\bm{y}$ specifically designates a test sample. 

\item For the underdetermined system $\bm{y} = X\bm{\alpha} $, we have already seen several instantiations of the coefficient vector $\bm{\alpha}$. We denoted the sparsest coefficient vector, i.e., the solution to the $\ell^0$-minimization problem given in Eq.~\eqref{eq:cs_l0}, by $\bm{\alpha} = \bm{\alpha}_0$, and we used $\bm{\alpha} = \bm{\alpha}_1$ and $\bm{\alpha} = \bm{\alpha}_{1,\epsilon}$ to denote the coefficient vectors found using $\ell^1$-minimization (in particular, the solutions to Eq.~\eqref{eq:cs_l1} and Eq.~\eqref{eq:cs_l1_error}, respectively). In contrast, $\bm{\alpha} = \bm{\alpha}^*$ denotes the solution to the SRC optimization problem (the solution to Eq.~\eqref{eq:src_opt} or \eqref{eq:src_opt_noise}). It is possible to have $\bm{\alpha}^* = \bm{\alpha}_1$ or $\bm{\alpha}^* = \bm{\alpha}_{1,\epsilon}$, depending on the optimization problem used in SRC and the amount of noise in the test sample. In particular, $\bm{\alpha}^* = \bm{\alpha}_1$ if Eq.~\eqref{eq:src_opt} is used in SRC, and $\bm{\alpha}^* = \bm{\alpha}_{1,\epsilon}$ if Eq.~\eqref{eq:src_opt_noise} is used and the test sample satisfies $\bm{y} = \bm{y}_0 + \bm{z}$ with $\|\bm{z}\|_2 \leq \zeta$.
\end{itemize}

\subsection{Preliminary Results}

We will use the following lemma which gives a lower-bound on mutual coherence in the underdetermined setting:

\begin{mylem}[Welch \cite{WELCH}, Rosenfeld \cite{ros:gram}] \label{lem:low_bound}

For $X \in \mathbb{R}^{m\times N}$ with normalized columns and $m<N$, we have that
\begin{equation}
\mu(X) \geq \sqrt{\frac{N-m}{m(N-1)}}.
\end{equation}
\end{mylem}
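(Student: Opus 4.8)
The plan is to work with the Gram matrix $G := X^\transp X \in \mathbb{R}^{N \times N}$ and to extract the bound from a two-sided estimate of $\operatorname{tr}(G^2)$. Because the columns of $X$ are normalized, $G$ has all diagonal entries equal to $1$ and off-diagonal entries $G_{ij} = \ip{\bm{x}_i, \bm{x}_j}$ satisfying $|G_{ij}| \leq \mu(X)$. Two structural facts will drive the argument: first, $\operatorname{tr}(G) = N$; and second, since $X$ has only $m < N$ rows, $G$ is positive semidefinite of rank at most $m$, so it has at most $m$ nonzero (and nonnegative) eigenvalues.

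First I would compute $\operatorname{tr}(G^2)$ in two ways. Expanding entrywise and separating the diagonal from the off-diagonal terms gives
\[
\operatorname{tr}(G^2) = \sum_{i,j} G_{ij}^2 = N + \sum_{i \neq j} G_{ij}^2,
\]
using that the diagonal contributes $\sum_i 1 = N$. On the spectral side, $\operatorname{tr}(G^2) = \sum_k \lambda_k^2$, where $\lambda_1, \ldots, \lambda_N$ denote the eigenvalues of $G$.

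The key step is to bound $\sum_k \lambda_k^2$ from below. Since at most $m$ of the $\lambda_k$ are nonzero and $\sum_k \lambda_k = \operatorname{tr}(G) = N$, the Cauchy--Schwarz inequality (applied to the nonzero eigenvalues against the all-ones vector of length $m$) yields
\[
\sum_k \lambda_k^2 \geq \frac{\left(\sum_k \lambda_k\right)^2}{m} = \frac{N^2}{m}.
\]
This is the crux of the proof: it is exactly where the deficiency $m < N$ enters and forces the off-diagonal mass of $G$ to be large. Combining the two expressions for $\operatorname{tr}(G^2)$ then gives $\sum_{i \neq j} G_{ij}^2 \geq N^2/m - N = N(N-m)/m$.

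To finish, I would bound the same off-diagonal sum from above via mutual coherence. There are exactly $N(N-1)$ off-diagonal pairs, and each satisfies $G_{ij}^2 \leq \mu(X)^2$, so $\sum_{i \neq j} G_{ij}^2 \leq N(N-1)\,\mu(X)^2$. Chaining the two inequalities produces $N(N-1)\,\mu(X)^2 \geq N(N-m)/m$, and solving for $\mu(X)$ (the factors of $N$ cancel) gives $\mu(X)^2 \geq (N-m)/\bigl(m(N-1)\bigr)$; taking square roots completes the argument. I do not anticipate a genuine difficulty here—the only point requiring care is justifying the rank bound $\operatorname{rank}(G) \leq m$, and hence the count of at most $m$ nonzero eigenvalues, since this is precisely what makes the Cauchy--Schwarz step effective and is the sole place the hypothesis $m < N$ is used.
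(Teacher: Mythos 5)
Your proof is correct: the Gram-matrix/trace argument---using $\operatorname{rank}(X^\transp X) \le m$, Cauchy--Schwarz applied to the at most $m$ nonzero (nonnegative) eigenvalues to get $\operatorname{tr}(G^2) \ge N^2/m$, and then the coherence bound $G_{ij}^2 \le \mu(X)^2$ on the $N(N-1)$ off-diagonal entries---is exactly the classical derivation of the Welch bound, and every step checks out, including the rank bound you correctly identify as the one place the hypothesis $m<N$ does real work. Note that the paper itself contains no proof of this lemma (it is imported by citation to Welch and Rosenfeld), so there is nothing internal to compare against; your argument coincides with the standard proof found in those references.
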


It is straightforward to show that Lemma \ref{lem:low_bound} implies that $\mu(X) \geq 1/m$, since $\sqrt{\frac{N-m}{m(N-1)}}$ monotonically increases in $N \in \mathbb{N}$ for $N>m$, with a minimum value of $1/m$ attained at $N = m+1$. Thus to have even a \emph{chance} of Theorem \ref{thm:mc} or \ref{thm:mc_noise} holding, we must have
\begin{equation}
\|\bm{\alpha}\|_0 < \frac{1}{c} \Big(1+\frac{1}{\mu(X)}\Big) \leq \frac{1}{c} \Big(1+m\Big), 
\end{equation}
where $c=2$ in the noiseless case and $c=4$ in the noisy case. 

We next consider the smallest possible value of the number of nonzeros $\|\bm{\alpha}\|_0$ in any classification problem representation $X_\mathrm{tr}\bm{\alpha} = \bm{y}$. Let us assume that the test sample is not a scalar multiple of any training sample. It follows that $\|\bm{\alpha}\|_0 \geq 2$. Thus in order for Theorem \ref{thm:mc} or \ref{thm:mc_noise} to hold, we must have
\begin{align*}
2 \leq \|\bm{\alpha}\|_0 < \frac{1}{c}\Big(1+\frac{1}{\mu(X_\mathrm{tr})}\Big) &\Rightarrow \mu(X_\mathrm{tr}) < \frac{1}{2c-1} \\
&\Rightarrow \mu(X_\mathrm{tr}) <
\begin{cases}
  1/3, & \text{noiseless case} \\
  1/7, & \text{noisy setting}.
\end{cases}
\end{align*}
Note that these upper bounds for $\mu(X_\mathrm{tr})$ are very small compared to the values of $\mu(X_\mathrm{tr})$ in Table \ref{tab:mc_exs}. 
These findings produce the following small-scale result:

\begin{myprop}
Suppose that $X_\mathrm{tr} \bm{\alpha} = \bm{y}$. If $m \leq 3$ and $\bm{y}$ is not a scalar multiple of any training sample, then the inequality in Eq.~\eqref{eq:k_max} with $X = X_\mathrm{tr}$ does not hold. That is, we cannot use Theorem \ref{thm:mc} to prove $\ell^1/\ell^0$-equivalence in SRC. 
\end{myprop}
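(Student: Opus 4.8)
The plan is to derive the result directly from the two preliminary facts assembled immediately before the statement, rather than from any new computation. The hypothesis that $\bm{y}$ is not a scalar multiple of any training sample pins the number of nonzeros in any representation from below, and Welch's lower bound on mutual coherence (Lemma~\ref{lem:low_bound}) pins the right-hand side of Eq.~\eqref{eq:k_max} from above once $m \le 3$. I would then simply observe that these two bounds cross, so the strict inequality in Eq.~\eqref{eq:k_max} cannot be met.

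In detail, I would proceed in three short steps. First, I would argue $\|\bm{\alpha}\|_0 \ge 2$: if instead $\|\bm{\alpha}\|_0 \le 1$, then $\bm{y} = X_\mathrm{tr}\bm{\alpha}$ would be either the zero vector or a single scaled column $\alpha_i \bm{x}_i$, in both cases a scalar multiple of a training sample, contradicting the hypothesis. Second, since the SRC dictionary is overcomplete we have $m < N_\mathrm{tr}$, so Lemma~\ref{lem:low_bound} applies and yields $\mu(X_\mathrm{tr}) \ge 1/m$, hence $1/\mu(X_\mathrm{tr}) \le m \le 3$. Substituting this into the right-hand side of Eq.~\eqref{eq:k_max} gives
\[
\frac{1}{2}\Big(1 + \frac{1}{\mu(X_\mathrm{tr})}\Big) \le \frac{1}{2}(1+m) \le \frac{1}{2}(1+3) = 2.
\]
Third, if Eq.~\eqref{eq:k_max} held we would have $\|\bm{\alpha}\|_0 < 2$, i.e.\ $\|\bm{\alpha}\|_0 \le 1$, contradicting the first step. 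Hence Eq.~\eqref{eq:k_max} fails and Theorem~\ref{thm:mc} cannot certify $\ell^1/\ell^0$-equivalence.

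There is no genuine technical obstacle here; the argument is a one-line consequence of the prepared inequalities. The only points that warrant a little care are the boundary case $m = 3$ and the role of integrality. When $m = 3$ the upper bound on the right-hand side of Eq.~\eqref{eq:k_max} is exactly $2$, so it is the strictness of the inequality in Eq.~\eqref{eq:k_max}, together with the fact that $\|\bm{\alpha}\|_0$ is an integer, that rules out equality and delivers the contradiction. I would also make explicit the standing assumption $m < N_\mathrm{tr}$, needed to invoke Lemma~\ref{lem:low_bound}, which is automatic in the overcomplete SRC setting.
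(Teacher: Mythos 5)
Your proposal is correct and follows essentially the same route as the paper: both rest on the preliminary observation that $\|\bm{\alpha}\|_0 \geq 2$ when $\bm{y}$ is not a scalar multiple of a training sample, together with the consequence $\mu(X_\mathrm{tr}) \geq 1/m \geq 1/3$ of Lemma~\ref{lem:low_bound}. The paper phrases the contradiction as a violated coherence threshold ($\mu \geq 1/3$ versus the required $\mu < 1/3$) while you substitute into Eq.~\eqref{eq:k_max} directly and invoke integrality, but these are the same inequality rearranged.
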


\begin{proof}
  By Lemma \ref{lem:low_bound}, we must have that $\mu(X_\mathrm{tr}) \geq \frac{1}{m} \geq \frac{1}{3}$. $\square$
An analogous statement holds in the noisy setting (Theorem \ref{thm:mc_noise}) for $m \leq 7$. 
\end{proof}

\subsection{Main Result}

\begin{myprop}[Main Result]\label{prop:main_res}
Suppose that the sparsest representation of $\bm{y} \in \mathbb{R}^m$ over the dictionary $X = [\bm{x}_1,\ldots,\bm{x}_N]\in\mathbb{R}^{m \times N}$ is given by $\bm{y} = \alpha_{j_1} \bm{x}_{j_1} + \ldots + \alpha_{j_k}\bm{x}_{j_k}$ for $\{j_1,\ldots,j_k\} \subset \{1,\ldots,N\}$. Set $\widetilde{N}$ to be the number of columns of $X$ contained in 
\begin{align*}
\widetilde{\mathcal{X}} := \spn\{ \bm{x}_{j_1},\ldots,\bm{x}_{j_k} \}, 
\end{align*}
where clearly $\widetilde{N}\geq k$. If $\widetilde{N} > k$, then the inequality in Eq.~\eqref{eq:k_max} does not hold. That is, we cannot use Theorem \ref{thm:mc} to prove $\ell^1/\ell^0$-equivalence. 
\end{myprop}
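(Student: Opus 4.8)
The plan is to convert the conclusion into a lower bound on $\mu(X)$. Since $\|\bm{\alpha}\|_0 = k$, the inequality in Eq.~\eqref{eq:k_max} reads $k < \frac{1}{2}(1 + 1/\mu(X))$, equivalently $\mu(X) < 1/(2k-1)$. So it suffices to prove the reverse bound, and in fact I will establish the stronger estimate $\mu(X) \geq 1/k$, which for any $k \geq 1$ gives $\frac{1}{2}(1+1/\mu(X)) \leq \frac{1}{2}(1+k) \leq k = \|\bm{\alpha}\|_0$, so the strict inequality fails.

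The first structural step is to pin down the dimension of $\widetilde{\mathcal{X}}$. Because $\bm{y} = \alpha_{j_1}\bm{x}_{j_1} + \cdots + \alpha_{j_k}\bm{x}_{j_k}$ is the \emph{sparsest} representation, the vectors $\bm{x}_{j_1},\ldots,\bm{x}_{j_k}$ must be linearly independent: a nontrivial dependence among them would let me solve for one $\bm{x}_{j_i}$ and substitute, producing a representation of $\bm{y}$ with fewer than $k$ terms and contradicting minimality. Hence $\dim \widetilde{\mathcal{X}} = k$. Next I collect the $\widetilde{N}$ columns of $X$ lying in $\widetilde{\mathcal{X}}$ into a submatrix $\widetilde{X}$; these are unit vectors confined to a $k$-dimensional subspace. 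Fixing an orthonormal basis of $\widetilde{\mathcal{X}}$ and writing each such column in its coordinates realizes them as $\widetilde{N}$ unit vectors in $\mathbb{R}^k$, and since an orthonormal change of coordinates preserves inner products, the mutual coherence of this realized system equals that of the columns of $\widetilde{X}$ computed in $\mathbb{R}^m$.

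With this reduction in hand I would invoke Lemma \ref{lem:low_bound} in dimension $k$. Since $\widetilde{N} > k$ (so $\widetilde{N} \geq k+1$), the system of $\widetilde{N}$ unit vectors in $\mathbb{R}^k$ is overcomplete, and the Welch bound gives $\mu(\widetilde{X}) \geq \sqrt{(\widetilde{N}-k)/(k(\widetilde{N}-1))}$. By the monotonicity observation following Lemma \ref{lem:low_bound}, the right-hand side is minimized over $\widetilde{N} > k$ at $\widetilde{N} = k+1$, where it equals $1/k$; thus $\mu(\widetilde{X}) \geq 1/k$. Finally, since the columns of $\widetilde{X}$ form a subset of the columns of $X$ and $\mu$ is a maximum over column pairs, $\mu(X) \geq \mu(\widetilde{X}) \geq 1/k$, closing the chain from the first paragraph.

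The routine parts are the algebraic rearrangements and the submatrix monotonicity of $\mu$; the conceptual crux lies in the second paragraph. I expect the main obstacle to be making the dimension-reduction rigorous — specifically, justifying that the $\widetilde{N}$ unit columns genuinely sit in a space of dimension exactly $k$ (which rests on the linear-independence-from-minimality argument) so that the Welch bound may be applied with ambient dimension $k$ rather than $m$. Applying Lemma \ref{lem:low_bound} to $X$ directly, without restricting to $\widetilde{\mathcal{X}}$, would only yield the far weaker estimate $\mu(X) \geq 1/m$, which is useless here; the whole point is that the extra column forced by $\widetilde{N} > k$ is trapped in a low-dimensional subspace, where coherence is necessarily large.
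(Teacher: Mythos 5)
Your proof is correct and follows essentially the same route as the paper's: linear independence of $\bm{x}_{j_1},\ldots,\bm{x}_{j_k}$ from minimality gives $\dim\widetilde{\mathcal{X}} = k$, the $\widetilde{N}$ columns in $\widetilde{\mathcal{X}}$ are reduced to unit vectors in ambient dimension $k$ by an inner-product-preserving map, and the Welch bound (Lemma \ref{lem:low_bound}) then forces $\mu(X) \geq \mu(\widetilde{X}) \geq 1/k$, contradicting Eq.~\eqref{eq:k_max}. The only difference is presentational: you invoke coordinates in an abstract orthonormal basis of $\widetilde{\mathcal{X}}$, whereas the paper realizes the same isometry explicitly as $U_k^\transp\widetilde{X}$ via the SVD and verifies coherence preservation by a Gram-matrix computation.
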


\begin{proof}
  Suppose that $\widetilde{N} >k$. Then there are more than $k$ dictionary elements in the subspace $\widetilde{\mathcal{X}}$. Since the vectors $\bm{x}_{j_1},\ldots,\bm{x}_{j_k}$ are linearly independent (because otherwise, $\bm{y}$ could be expressed more sparsely), the dimension of $\widetilde{\mathcal{X}}$ is exactly $k$.

Define $\widetilde{X} \in \mathbb{R}^{m\times \widetilde{N}}$ to be the matrix of the $\widetilde{N}$ dictionary elements contained in $\widetilde{\mathcal{X}}$. Let the singular value decomposition of $\widetilde{X}$ be given by $\widetilde{X} = U \Sigma V^\transp$, and set $U_{k}$ to contain the first $k$ columns of $U$, $V_{k}$ to contain the first $k$ columns of $V$, and $\Sigma_{k}$ to contain the first $k$ columns and rows of $\Sigma$. Because $\widetilde{X}$ has rank $k$, we can alternatively write
\begin{align*}
\widetilde{X} = U_{k} \Sigma_{k} V_{k}^\transp.
\end{align*}
The $k \times \widetilde{N}$ matrix $U_{k}^\transp \widetilde{X}$ has the same mutual coherence as $\widetilde{X}$, since they have the same Gram matrices:
\begin{align*}
(U_{k}^\transp \widetilde{X})^\transp (U_{k}^\transp \widetilde{X}) &= \widetilde{X}^\transp U_{k} U_{k}^\transp \widetilde{X} \\
&= (U_{k} \Sigma_{k} V_{k}^\transp)^\transp U_{k} U_{k}^\transp (U_{k} \Sigma_{k} V_{k}^\transp) \\
&= V_{k}\Sigma_{k}^\transp U_{k}^\transp U_{k} U_{k}^\transp U_{k} \Sigma_{k} V_{k}^\transp \\
&= V_{k}\Sigma_{k}^\transp U_{k}^\transp U_{k} \Sigma_{k} V_{k}^\transp \\
&= (U_{k} \Sigma_{k} V_{k}^\transp)^\transp (U_{k} \Sigma_{k} V_{k}^\transp) \\
&= \widetilde{X}^\transp \widetilde{X}.
\end{align*}

By Lemma \ref{lem:low_bound}, we have that
\begin{align*}
\mu(X) \geq \mu(\widetilde{X}) = \mu(U_{k}^\transp \widetilde{X}) \geq \sqrt{\frac{\widetilde{N}-k}{k(\widetilde{N}-1)}} \geq \sqrt{\frac{(k+1)-k}{k((k+1)-1)}} = \frac{1}{k}.
\end{align*}
Thus the bound on $k$ in Theorem \ref{thm:mc} requires that
\begin{equation}
  \label{eq:k_max_proof}
  k < \frac{1}{2}\Big(1+ \frac{1}{\mu(X)}\Big) \leq \frac{1}{2}(1+k) \Rightarrow k < 1, 
\end{equation}
which contradicts with $k$ being a natural number.
\end{proof}
We present several corollaries to Proposition \ref{prop:main_res}. The first is a consequence applicable to any $\ell^1$-minimization problem, regardless of whether or not the dictionary elements have class structure:

\begin{mycor}[Consequence for general $\ell^1$-minimization] \label{cor:gen_l1_min}
If a measurement vector $\bm{y}\in \mathbb{R}^m$ is not at all sparse over the dictionary $X \in \mathbb{R}^{m\times N}$, i.e., if every representation of $\bm{y}$ requires no less than $m$ dictionary elements, then the condition in Eq.~\eqref{eq:k_max} from Theorem \ref{thm:mc} does not hold. 
\end{mycor}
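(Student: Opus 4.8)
The plan is to reduce this corollary directly to Proposition~\ref{prop:main_res} by showing that the stated hypothesis forces the subspace $\widetilde{\mathcal{X}}$ to be all of $\mathbb{R}^m$, which in turn makes $\widetilde{N}$ as large as the ambient dimension allows. First I would let $k := \|\bm{\alpha}_0\|_0$ denote the sparsity of the sparsest representation of $\bm{y}$ over $X$, so that the hypothesis ``every representation requires no less than $m$ dictionary elements'' reads simply as $k \geq m$.

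The key step is to pin down $k$ exactly. As already observed in the proof of Proposition~\ref{prop:main_res}, the columns $\bm{x}_{j_1},\ldots,\bm{x}_{j_k}$ participating in a sparsest representation must be linearly independent (otherwise $\bm{y}$ would admit a strictly sparser representation). Since these are $k$ linearly independent vectors living in $\mathbb{R}^m$, we must have $k \leq m$. Combined with $k \geq m$, this forces $k = m$, and hence $\widetilde{\mathcal{X}} = \spn\{\bm{x}_{j_1},\ldots,\bm{x}_{j_k}\}$ is an $m$-dimensional subspace of $\mathbb{R}^m$, i.e.\ $\widetilde{\mathcal{X}} = \mathbb{R}^m$.

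From here the conclusion is immediate. Every column of $X$ lies in $\mathbb{R}^m = \widetilde{\mathcal{X}}$, so in the notation of Proposition~\ref{prop:main_res} all $N$ columns are counted, giving $\widetilde{N} = N$. Because the dictionary is overcomplete ($m < N$), we obtain $\widetilde{N} = N > m = k$, so the hypothesis $\widetilde{N} > k$ of Proposition~\ref{prop:main_res} is satisfied. Applying that proposition, the inequality in Eq.~\eqref{eq:k_max} cannot hold, which is exactly the claim.

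As for the main obstacle: there is no deep difficulty here, since the result is essentially a special case of Proposition~\ref{prop:main_res}. The only point requiring care is the reduction itself --- namely recognizing that ``not at all sparse'' ($k \geq m$) together with the linear independence of the active columns collapses $k$ to exactly $m$ and makes $\widetilde{\mathcal{X}}$ the whole ambient space. Once that observation is in place, the overcompleteness assumption $m < N$ supplies $\widetilde{N} > k$ for free, and the machinery of Proposition~\ref{prop:main_res} (ultimately the Welch-type lower bound of Lemma~\ref{lem:low_bound}) finishes the argument.
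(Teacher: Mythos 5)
Your proof is correct and follows essentially the same route as the paper's own (very terse) proof: both reduce to Proposition~\ref{prop:main_res} by noting that the active columns of a sparsest representation are linearly independent, forcing $k=m$, so that $\widetilde{\mathcal{X}}=\mathbb{R}^m$ and hence $\widetilde{N}=N>k$. Your write-up merely fills in the details (the $k\geq m$ versus $k\leq m$ squeeze and the explicit use of $m<N$) that the paper leaves implicit.
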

\begin{proof}
  Because the dimension of $\widetilde{\mathcal{X}}$ (as defined in Proposition \ref{prop:main_res}) $k$ is actually $m$, every dictionary element is contained in
  $\widetilde{\mathcal{X}}$.
\end{proof}

Corollary \ref{cor:gen_l1_min} illustrates the importance of choosing a dictionary that awards a sparse representation of $\bm{y}$ in any application of $\ell^1$-minimization, including compressed sensing. 

The following corollary follows from the proof of Proposition \ref{prop:main_res}:

\begin{mycor} \label{cor:spark}
Let $X \in \mathbb{R}^{m\times N}$ with $m < N$, and let $k$ be any positive integer such that $k < N$. If \emph{any} set of $k$ linearly independent columns of $X$ spans an additional, distinct column of $X$, then the bound 
\begin{align*}
k < \frac{1}{2}\Big(1 + \frac{1}{\mu(X)}\Big)
\end{align*}
does not hold.

\end{mycor}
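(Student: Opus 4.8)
The plan is to observe that the hypothesis of this corollary is simply the condition $\widetilde{N} > k$ from Proposition \ref{prop:main_res}, stated abstractly instead of via the sparsest representation of a fixed $\bm{y}$. Consequently the entire argument of that proof transfers essentially verbatim, and the only genuine work is to set up the correct subspace. First I would fix a set of $k$ linearly independent columns $\bm{x}_{i_1},\ldots,\bm{x}_{i_k}$ satisfying the hypothesis and let $\mathcal{S} := \spn\{\bm{x}_{i_1},\ldots,\bm{x}_{i_k}\}$. Linear independence gives $\dim \mathcal{S} = k$, and by hypothesis $\mathcal{S}$ contains at least one further distinct column of $X$; hence if $\widetilde{N}$ denotes the number of columns of $X$ lying in $\mathcal{S}$, then $\widetilde{N} \geq k+1 > k$. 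This is exactly the situation analyzed in Proposition \ref{prop:main_res}, with $\mathcal{S}$ playing the role of $\widetilde{\mathcal{X}}$.

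Next I would invoke the singular value decomposition and Welch-bound machinery from that proof without modification. Collecting the $\widetilde{N}$ columns of $X$ contained in $\mathcal{S}$ into a matrix $\widetilde{X} \in \mathbb{R}^{m\times\widetilde{N}}$ of rank exactly $k$, I would pass to $U_k^\transp \widetilde{X} \in \mathbb{R}^{k\times\widetilde{N}}$, which shares the Gram matrix of $\widetilde{X}$ and therefore has the same mutual coherence. Since $\widetilde{X}$ is a column-submatrix of $X$, its mutual coherence is at most $\mu(X)$. Applying Lemma \ref{lem:low_bound} to the underdetermined matrix $U_k^\transp \widetilde{X}$ (legitimate precisely because $\widetilde{N} > k$) and using monotonicity in $\widetilde{N}$ yields the chain $\mu(X) \geq \mu(\widetilde{X}) = \mu(U_k^\transp \widetilde{X}) \geq \sqrt{(\widetilde{N}-k)/(k(\widetilde{N}-1))} \geq 1/k$.

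Finally, substituting $\mu(X) \geq 1/k$ into the candidate bound gives $\tfrac{1}{2}(1 + 1/\mu(X)) \leq \tfrac{1}{2}(1+k)$, so that $k < \tfrac{1}{2}(1 + 1/\mu(X))$ would force $k < \tfrac{1}{2}(1+k)$, i.e.\ $k < 1$, contradicting $k \geq 1$; this is the identical inequality chain as in Eq.~\eqref{eq:k_max_proof}. I expect no substantive obstacle here, since the statement is a deliberate abstraction of the main proposition. The only points requiring care are confirming that $\dim\mathcal{S} = k$ follows from linear independence and that the ``additional, distinct column'' hypothesis indeed delivers $\widetilde{N} > k$; once these are in place, the remainder is a direct reuse of the already-established argument.
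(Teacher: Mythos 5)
Your proposal is correct and follows essentially the same route as the paper: the paper derives Corollary \ref{cor:spark} directly from the proof of Proposition \ref{prop:main_res}, and your argument reproduces that proof's key steps (the rank-$k$ subspace, the Gram-matrix/SVD reduction to a $k\times\widetilde{N}$ matrix, the Welch bound giving $\mu(X)\geq 1/k$, and the contradiction $k<1$) with the correct abstraction of the hypothesis. The only difference is that you spell out details the paper leaves implicit, which is fine.
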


Of course, this bound will not hold for any larger values of $k$, either. This means that if we can find an integer $k$ satisfying the conditions of Corollary \ref{cor:spark}, then any attempt to prove $\ell^1/\ell^0$-equivalence using Theorem \ref{thm:mc} will require $X\bm{\alpha} = \bm{y}$ with $\|\bm{\alpha}\|_0 < k$.\footnote{Corollary \ref{cor:spark} can alternatively be proven using the equivalence theorem involving \emph{spark}; see the work of Donoho and Elad \cite{don:osr}.}

The following corollary is an explicit consequence for dictionaries consisting of training samples:

\begin{mycor}[Consequence for Class-Structured Dictionaries] \label{cor:y_dist}
Suppose that $\bm{y}$ is a test sample with $\|\bm{y}\|_2 = 1$, and define $\mu := \mu(X_\mathrm{tr})$. If adding $\bm{y}$ to the set of training samples does not increase its mutual coherence, that is, if $|\ip{\bm{y},\bm{x}_i}|\leq \mu$ for all $1\leq i \leq N_\mathrm{tr}$, i.e., $\mu([\bm{y},X_\mathrm{tr}]) =\mu$, then we cannot have both that (i) $X_\mathrm{tr}\bm{\alpha} = \bm{y}$ and (ii) $\|\bm{\alpha}\|_0 < (1/2)(1+(1/\mu(X_\mathrm{tr})))$.
\end{mycor}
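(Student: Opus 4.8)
The plan is to argue by contradiction and to transfer everything to the augmented dictionary $\widehat{X} := [\bm{y}, X_\mathrm{tr}]$, on which the hypotheses have been arranged precisely so that Corollary \ref{cor:spark} (equivalently the machinery of Proposition \ref{prop:main_res}) applies. First I would discard the trivial case $\mu = 1$: there $(1/2)(1 + 1/\mu) = 1$, so condition (ii) would force $\bm{\alpha} = \bm{0}$ and hence $\bm{y} = \bm{0}$, contradicting $\|\bm{y}\|_2 = 1$; thus I may assume $\mu < 1$. Then $|\ip{\bm{y},\bm{x}_i}| \le \mu < 1$ shows that $\bm{y}$ is not a scalar multiple of any training sample, so $\bm{y}$ enters $\widehat{X}$ as a genuinely new, distinct column. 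Moreover $\widehat{X}$ has normalized columns, and by the standing hypothesis $|\ip{\bm{y},\bm{x}_i}| \le \mu$ together with $\mu(X_\mathrm{tr}) = \mu$ its mutual coherence satisfies $\mu(\widehat{X}) = \mu$. This is exactly the condition $\mu([\bm{y},X_\mathrm{tr}]) = \mu$ in the statement, and it is what prevents the coherence from degrading when the dictionary is enlarged.

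Next, suppose both (i) and (ii) hold. Passing to a representation of $\bm{y}$ over $X_\mathrm{tr}$ of minimal support only decreases $\|\bm{\alpha}\|_0$, so I may assume $\bm{\alpha}$ itself has minimal support $k := \|\bm{\alpha}\|_0$, still satisfying (ii). Minimality forces the support columns $\bm{x}_{j_1},\ldots,\bm{x}_{j_k}$ to be linearly independent, since any linear dependence among them would let me write $\bm{y}$ with strictly fewer nonzeros. These are then $k$ linearly independent columns of $\widehat{X}$ that span $\bm{y}$, an additional distinct column of $\widehat{X}$. This is precisely the hypothesis of Corollary \ref{cor:spark} applied to the matrix $\widehat{X}$ with the integer $k$, whose conclusion is that the bound $k < (1/2)(1 + 1/\mu(\widehat{X})) = (1/2)(1 + 1/\mu)$ does not hold, contradicting (ii).

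Concretely, the engine is the coherence reduction used in the proof of Proposition \ref{prop:main_res}: the vectors $\bm{x}_{j_1},\ldots,\bm{x}_{j_k}$ and $\bm{y}$ all lie in a $k$-dimensional subspace $\widetilde{\mathcal{X}}$ that contains $\widetilde{N} \ge k+1$ columns of $\widehat{X}$; compressing those columns by $U_k^\transp$ preserves their Gram matrix and hence their coherence, and Lemma \ref{lem:low_bound} then yields $\mu(\widehat{X}) \ge 1/k$, which forces $k < 1$. I expect the only real care to lie in (a) justifying that the minimal-support columns are linearly independent, and (b) observing that because this reduction projects onto the $k$-dimensional span $\widetilde{\mathcal{X}}$, the ambient relationship between $m$ and $N_\mathrm{tr}+1$ is irrelevant, so the conclusion holds even outside the regime $m < N_\mathrm{tr}$ in which Corollary \ref{cor:spark} is literally phrased. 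Everything else is routine bookkeeping.
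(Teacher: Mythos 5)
Your proof is correct and takes essentially the same route as the paper's: both pass to the augmented dictionary $[\bm{y},X_\mathrm{tr}]$, use the hypothesis $\mu([\bm{y},X_\mathrm{tr}])=\mu$ to keep the coherence unchanged, and apply Corollary \ref{cor:spark} to the $k$ linearly independent support columns that span the additional distinct column $\bm{y}$. Your extra care --- disposing of the $\mu=1$ case, passing to a minimal-support representation to justify linear independence, and noting that the $m<N$ hypothesis of Corollary \ref{cor:spark} is immaterial because the argument projects onto the $k$-dimensional span --- merely fills in details the paper leaves implicit.
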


\begin{proof}
  If we can write $X_\mathrm{tr}\bm{\alpha} = \bm{y}$ for $\|\bm{\alpha}\|_0 =: k$, then the $k$ (linearly independent) training samples with nonzero coefficients in the representation span a $k$-dimensional subspace containing $\bm{y}$. Setting $X = [\bm{y},X_\mathrm{tr}]$ in Corollary \ref{cor:spark}, we have that
\begin{align*}
k \nless \frac{1}{2}\Big(1+\frac{1}{\mu(X)}\Big) = \frac{1}{2}\Big(1+\frac{1}{\mu}\Big).
\end{align*}
On the other hand, if 
\begin{align*}
k < \frac{1}{2}\Big(1+\frac{1}{\mu(X)}\Big) = \frac{1}{2}\Big(1+\frac{1}{\mu}\Big)
\end{align*}
for some positive integer $k < N_\mathrm{tr}$, then also by Corollary \ref{cor:spark}, it must be the case that $\bm{y}$ is not contained in the subspace spanned by any $k$ linearly independent distinct columns of $X$, i.e., columns of $X_\mathrm{tr}$. Thus we cannot write $X_\mathrm{tr}\bm{\alpha} = \bm{y}$ for any $\bm{\alpha}$ satisfying $\|\bm{\alpha}\|_0 = k$.
\end{proof}

It might initially seem that the hypothesis of Corollary \ref{cor:y_dist} is unlikely to hold. However, if one assumes that the data is sampled randomly with test samples having the same distribution as the training samples in their ground truth classes, then the hypothesis that $\mu([\bm{y},X_\mathrm{tr}]) =\mu(X_\mathrm{tr})$ becomes much more probable. We discuss this further in Section \ref{MCD_Project_3}.

Our final corollary determines conditions under which the bound in Eq.~\eqref{eq:k_max} from Theorem \ref{thm:mc} is theoretically incompatible with the explicit assumptions made in SRC \cite{wri:src}. We review these assumptions briefly:

\begin{myassump}[Linear Subspaces] \label{assump:src_1}
The ground truth class manifolds of the given dataset are linear subspaces.
\end{myassump}

\begin{myassump}[Spanning Training Set]\label{assump:src_2}
The training matrix $X_\mathrm{tr}$ contains sufficient samples in each class to span the corresponding linear subspace.
\end{myassump}

\begin{mycor}[Consequence for SRC]\label{cor:src}
Suppose that the SRC Assumptions \ref{assump:src_1} and \ref{assump:src_2} hold. Let $\bm{y}$ have ground truth class $l$, and suppose that the number of class $l$ training samples, $N_l$, is large, i.e., $N_l > d_l$, for $d_l$ the dimension of the linear subspace representing the class $l$ manifold. Then there exists a test sample $\bm{y}$ which requires the maximum number $d_l$ of class $l$ training samples to represent it. If this representation of $\bm{y}$ is its sparsest representation over the dictionary $X_\mathrm{tr}$, then the condition in Eq.~\eqref{eq:k_max} from Theorem \ref{thm:mc} cannot hold. Thus we cannot use Theorem \ref{thm:mc} to prove $\ell^1/\ell^0$-equivalence in SRC.
\end{mycor}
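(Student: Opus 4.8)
The plan is to obtain this corollary as a direct application of the Main Result (Proposition~\ref{prop:main_res}); the entire task reduces to exhibiting its hypothesis $\widetilde{N} > k$ in the SRC setting. Write $V_l \subset \mathbb{R}^m$ for the class $l$ subspace, which by Assumption~\ref{assump:src_1} is the ground truth class manifold and by Assumption~\ref{assump:src_2} is spanned by the $N_l$ class $l$ training samples; set $d_l := \dim V_l$, with $N_l > d_l$ by hypothesis. In particular, all $N_l$ class $l$ samples lie in the $d_l$-dimensional space $V_l$, which therefore contains strictly more than $d_l$ dictionary columns.

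First I would establish the existence of a test sample $\bm{y} \in V_l$ whose representation over the class $l$ samples genuinely requires all $d_l$ of them. Any representation of a vector using fewer than $d_l$ of the class $l$ samples has its support contained in some subset $S$ of size $d_l - 1$, and each $\spn(S)$ is a subspace of $V_l$ of dimension at most $d_l - 1$, hence proper. There are only finitely many such $S$, and a real vector space is never a finite union of proper subspaces, so I can choose $\bm{y} \in V_l$ lying in none of them. Since the class $l$ samples span $V_l$, such a $\bm{y}$ is representable using some $d_l$ linearly independent class $l$ samples (which form a basis of $V_l$); uniqueness in this basis together with our choice of $\bm{y}$ forces all $d_l$ coefficients to be nonzero, while by construction no representation with $d_l - 1$ or fewer class $l$ samples exists. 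Thus the minimal number of class $l$ samples needed to represent $\bm{y}$ is exactly $d_l$.

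Next I would invoke the corollary's standing hypothesis that this $d_l$-term representation is the sparsest representation of $\bm{y}$ over the full dictionary $X_\mathrm{tr}$, so that $k = d_l$ in the notation of Proposition~\ref{prop:main_res} and the support $\{j_1,\ldots,j_k\}$ consists of the $d_l$ class $l$ samples above. Their span $\widetilde{\mathcal{X}}$ is then all of $V_l$, so every one of the $N_l$ class $l$ training samples lies in $\widetilde{\mathcal{X}}$, giving $\widetilde{N} \geq N_l > d_l = k$. Applying Proposition~\ref{prop:main_res} with $X = X_\mathrm{tr}$, the strict inequality $\widetilde{N} > k$ forces the bound in Eq.~\eqref{eq:k_max} to fail, so Theorem~\ref{thm:mc} cannot be used to certify $\ell^1/\ell^0$-equivalence.

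The only genuinely delicate step is the existence claim: one must rule out that every vector of $V_l$ collapses onto a lower-dimensional span of some subcollection of the redundant class $l$ samples. This is exactly what the ``a vector space over $\mathbb{R}$ is not a finite union of proper subspaces'' fact prevents; the remainder is bookkeeping arranged to feed the hypothesis $\widetilde{N} > k$ of Proposition~\ref{prop:main_res}.
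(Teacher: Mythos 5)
Your proposal is correct and follows essentially the same route as the paper: the corollary is obtained by feeding the hypothesis $\widetilde{N} > k$ of Proposition~\ref{prop:main_res} (with $k = d_l$ and $\widetilde{\mathcal{X}} = V_l$, which contains all $N_l > d_l$ class~$l$ columns), with existence of the required test sample justified by dimension considerations — the paper's version being the remark that otherwise the class~$l$ subspace would have dimension less than $d_l$. Your only addition is to make that existence step rigorous via the fact that a real vector space is not a finite union of proper subspaces, which is exactly the detail the paper leaves implicit.
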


Corollary \ref{cor:src} says that if we have a surplus of class $l$ training samples (i.e., more than enough to span the class $l$ subspace), then, provided that the ``class representations'' (representations of the test samples in terms of their ground truth classes) truly are the sparsest representations of the test samples over the training set (as argued by the SRC authors \cite{wri:src}), there will be some test samples for which Theorem \ref{thm:mc} cannot hold. These test samples are exactly those requiring $k=d_l$ class $l$ training samples in their representations. In general, such test samples must exist; otherwise, the dimension of the class $l$ subspace would be less than $d_l$. To reiterate, if everything we \emph{want} to happen in SRC actually happens (large class sizes, sparse class representations), then we cannot consistently use Theorem \ref{thm:mc} to prove $\ell^1/\ell^0$-equivalence.

On a more positive note, the assumptions in SRC make it possible to estimate whether or not the conditions of Proposition \ref{prop:main_res} hold. Though these conditions are difficult to check in general (if we knew the sparsest solution of $\bm{y}$ over the dictionary, then we would not need to use $\ell^1$-minimization to find it), the linear subspace assumption in SRC gives us a heuristic for doing so. We could potentially estimate the dimension of each class (using a method such as \emph{multiscale SVD} \cite{mag:msvd} or \emph{DANCo} \cite{cer:dan}, for example) and compare this with the number of training samples in that class. If the latter is larger than the former, then we expect that Theorem \ref{thm:mc} cannot be applied for some test samples.

In typical applications, we must deal with noisy data. Thus we should consider the application of Theorem \ref{thm:mc_noise} instead of Theorem \ref{thm:mc}. But this is immediate: Since the mutual coherence condition is stricter in the case of noise, the consequences of Proposition \ref{prop:main_res} and the above corollaries hold whenever the conditions are assumed to hold on the clean version of the data. In particular, Theorem \ref{thm:mc_noise} requires the \emph{existence} of a clean test sample $\bm{y}_0$ (even if it is unknown to us) that satisfies $X\bm{\alpha} = \bm{y}_0$ with $\|\bm{\alpha}\|_0 \leq (1/4)(1+(1/\mu(X)))$. Under the hypothesis of Corollary \ref{cor:y_dist} (setting $\bm{y}_0=\bm{y}$), such a $\bm{y}_0$ cannot exist. 


In concluding this section, we stress that the mutual coherence conditions in Theorems \ref{thm:mc} and \ref{thm:mc_noise} are sufficient, but not necessary, for $\ell^1/\ell^0$-equivalence. Thus it is possible for $\ell^1$-minimization to find (or closely approximate) the sparsest solution even when the conditions of these theorems do not hold. Whether or not this happens in the context of SRC is the topic of the next section.

\section{Equivalence on Highly-Coherent Data} \label{MCD_Project_2}

In this section, we investigate whether sparsity is reliably achieved via $\ell^1$-minimization on highly-correlated data, such as class-structured databases.

\subsection{Inspiration}

We are inspired by the data model and subsequent work of Wright and Ma \cite{wri:dense} (see also the work of Wright et al.\ \cite{wri:srcv}), which produces an $\ell^1/\ell^0$-equivalence guarantee for dictionaries containing vectors assumed to model facial images. We summarize their result briefly. 

Previous work has shown that the set of facial images of a fixed subject (person) under varying illumination conditions forms a convex cone, called an \emph{illumination cone}, in pixel space \cite{geo:illum, bel:what}. Wright and Ma demonstrate that in fact the set of facial images under varying illuminations over \emph{all subjects combined} exhibits this cone structure. For example, they show that this is the case for the entire set of (raw) samples from the Extended Yale B Face Database \cite{geo:illum}. Further, this cone becomes extremely narrow, i.e., a ``bouquet,'' as the number of pixels grows large \cite{wri:dense}. These findings reiterate that class-structured data, particularly face databases, are highly-coherent. 

Lee et al.\ \cite{lee:linss} showed that any image from the illumination cone can be expressed as a linear combination of just a few images of the same subject under varying lighting conditions. In other words, illumination cones are well-approximated by linear subspaces. Thus the SRC condition that class manifolds are (approximately) linear subspaces presumably holds for databases made up of facial images under varying lighting conditions. Given a facial image $\bm{y} \in \mathbb{R}^m$ that may be occluded or corrupted by noise, $\bm{y}$ can thus be expressed as
\begin{equation} \label{eq:signal_error}
\bm{y} = X_\mathrm{tr} \bm{\alpha}_0 + \bm{z}_0,
\end{equation}
 given that certain requirements are satisfied in the sampling of the training data. By the above model, $\bm{\alpha}_0$ is assumed to be non-negative (a result of the illumination cone model \cite{wri:srcv, geo:illum}) and sparse, containing nonzeros at training samples that represent the same subject as $\bm{y}$ (i.e., are in the same class). Additionally, $\bm{z}_0$ is an (unknown) error vector with nonzeros in only a fraction of its coordinates; i.e., the model assumes that only a portion of the pixels are occluded or corrupted \cite{wri:srcv}. Note that this is not quite the same situation as in the condition for $\ell^1/\ell^0$-equivalence in the noisy setting given in Theorem \ref{thm:mc_noise}. One difference is that in Eq.~\eqref{eq:signal_error} above, $\bm{z}_0$ is bounded in terms of $\ell^0$-norm (sparsity) with no limit on $\ell^2$-norm (magnitude), whereas in Theorem \ref{thm:mc_noise}, $\bm{z}$ is bounded in terms of magnitude but not sparsity.

The goal, as one might expect, is to recover $\bm{\alpha}_0$ from Eq.~\eqref{eq:signal_error}. In the SRC paper \cite{wri:src}, Wright et al.\ use $\ell^1$-minimization to do this. In particular, they solve
\begin{equation} \label{eq:opt_prob}
(\widehatbv{\bm{\alpha}}_1,\bm{z}_1) := \arg\min \|\bm{\alpha}\|_1 + \|\bm{z}\|_1 \text{\textnormal{ subject to }} \bm{y} = X_\mathrm{tr} \bm{\alpha} + \bm{z},
\end{equation}
and they show that this version of SRC produces very good classification results on occluded or corrupted facial images. (Again, note that $\widehatbv{\bm{\alpha}}_1$ is different from both $\bm{\alpha}_1$ and $\bm{\alpha}_{1,\epsilon}$ discussed earlier, as there is a sparsity constraint instead of an $\ell^2$-norm bound on the noise component $\bm{z}_0$.)

In a later paper, Wright, et al.\ \cite{wri:srcv} correctly note that the usual $\ell^1/\ell^0$-equivalence theorems do not hold on the highly-correlated data in $X_\mathrm{tr}$, and so it cannot be determined whether or not the $\ell^1$-minimized solution $\widehatbv{\bm{\alpha}}_1$ in Eq.~\eqref{eq:opt_prob} is equal to (what is assumed to be) the true sparsest solution $\bm{\alpha}_0$. Fortunately, Wright and Ma \cite{wri:dense} proved a theorem that gives sufficient conditions for this equivalence under an assumed model (called the \emph{bouquet model}) of facial images; see also Wright et al.'s version \cite{wri:srcv}. To state the theorem, we will need the following definition:

\begin{mydef} [Proportional Growth \cite{wri:dense}]
A sequence of signal-error problems $\bm{y} = X \bm{\alpha}_0 + \bm{z}_0$, for $X \in \mathbb{R}^{m \times N}$, exhibits \emph{proportional growth} with parameters $\delta>0$, $\rho \in (0,1)$, and $\beta >0$, if $N = \floor{\delta m}$, $\|\bm{z}_0\|_0 = \floor{\rho m}$, and $\|\bm{\alpha}_0\|_0 = \floor{\beta m}$. 
\end{mydef}
It follows that $\delta$ is the redundancy factor in the dictionary $X$ and $\rho$ and $\beta$ control the sparsity of $\bm{z}_0$ and $\bm{\alpha}_0$, respectively. Here, $\beta$ is assumed to be small and may depend on $\delta$ and $\rho$.

We are now in a position to state Wright and Ma's main theorem:

\begin{mythm} [Wright and Ma \cite{wri:dense}]\label{thm:wri_mu}
Fix any $\delta >0$ and $\rho<1$. Suppose that $X$ is distributed according to the \emph{bouquet model} given by 
\begin{equation} \label{eq:cross}
X = [\bm{x}_1,\ldots,\bm{x}_N]\in\mathbb{R}^{m \times N},\: \bm{x}_i \stackrel{\mathrm{i.i.d.}}{\sim} \mathcal{N}(\bm{\mu},(\nu^2/m)I_m), \:\|\bm{\mu}\|_2 = 1, \: \|\bm{\mu}\|_\infty \leq C_\mu m^{-1/2},\: C_\mu \geq 1
\end{equation} 
for $\nu$ sufficiently small. Also suppose that the sequence of signal-error problems $\bm{y} = X \bm{\alpha}_0 + \bm{z}_0$ for $X \in \mathbb{R}^{m \times N}$ exhibits proportional growth with parameters $\delta$, $\rho$, and $\beta$. Suppose further that $J \subset \{1,\ldots,m\}$ is a uniform random subset of size $\rho m$, and that $\bm{\sigma} \in \mathbb{R}^m$ with entries of $\bm{\sigma}_J$ i.i.d.\ $\pm 1$ (independent of $J$) and $\bm{\sigma}_{J^C} = \bm{0}$. Lastly assume that $m$ is sufficiently large. Then with probability at least $1-C\exp(-\gamma^* m)$ in $X$, $J$, and $\bm{\sigma}$, for all $\bm{\alpha}_0$ with $\|\bm{\alpha}_0\|_0 \leq \beta^* m$ and any $\bm{z}_0$ with sign vector $\bm{\sigma}$ and support $J$, we have
\begin{align*}
(\bm{\alpha}_0,\bm{z}_0) = \arg\min_{\bm{\alpha},\bm{z}} \|\bm{\alpha}\|_1 + \|\bm{z}\|_1 \text{\textnormal{ subject to }} X\bm{\alpha} + \bm{z} = X\bm{\alpha}_0 + \bm{z}_0. 
\end{align*}
\end{mythm}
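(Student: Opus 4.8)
The plan is to recast the statement as a single exact-recovery result for basis pursuit over a concatenated dictionary and then to produce a dual certificate whose feasibility I verify by concentration of measure. First I would stack the unknowns: writing $\bm{w} := (\bm{\alpha}^\transp, \bm{z}^\transp)^\transp \in \mathbb{R}^{N+m}$ and $B := [X \mid I_m] \in \mathbb{R}^{m\times(N+m)}$, the constraint $X\bm{\alpha} + \bm{z} = X\bm{\alpha}_0 + \bm{z}_0$ becomes $B\bm{w} = B\bm{w}_0$ with $\bm{w}_0 := (\bm{\alpha}_0^\transp, \bm{z}_0^\transp)^\transp$, and the objective $\|\bm{\alpha}\|_1 + \|\bm{z}\|_1$ is exactly $\|\bm{w}\|_1$. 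The theorem is therefore equivalent to the claim that $\bm{w}_0$ is the \emph{unique} minimizer of $\|\bm{w}\|_1$ subject to $B\bm{w} = B\bm{w}_0$. Its support splits as $S = S_\alpha \sqcup J$, where $S_\alpha \subset \{1,\ldots,N\}$ indexes the active columns of $X$ with $|S_\alpha| = \floor{\beta m}$ and $J$ indexes the active standard-basis columns with $|J| = \floor{\rho m}$.

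Next I would invoke the standard Fuchs/Tropp-type optimality criterion: $\bm{w}_0$ is the unique minimizer provided $B_S$ has full column rank and there exists a dual vector $\bm{\lambda} \in \mathbb{R}^m$ with $B_S^\transp \bm{\lambda} = \sgn(\bm{w}_{0,S})$ and $\max_{i\notin S} |\ip{\bm{b}_i,\bm{\lambda}}| < 1$. The canonical candidate is the minimum-norm interpolant $\bm{\lambda} := B_S(B_S^\transp B_S)^{-1}\sgn(\bm{w}_{0,S})$, which satisfies the equality on $S$ by construction. Verifying the theorem then reduces to two strict inequalities: $\|\bm{\lambda}_{J^c}\|_\infty < 1$ for the inactive identity columns, and $\max_{i\notin S_\alpha}|\ip{\bm{x}_i,\bm{\lambda}}| < 1$ for the inactive bouquet columns. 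To make these tractable I would use the bouquet decomposition $X = \bm{\mu}\bm{1}^\transp + (\nu/\sqrt{m})\,G$, with $G$ having i.i.d.\ $\mathcal{N}(0,1)$ entries, so that $B_S$ is a rank-one mean part along $\bm{\mu}$, plus a small Gaussian fluctuation, plus the perfectly conditioned identity block indexed by $J$.

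The main obstacle, and the heart of the argument, is that the active columns of $X$ are nearly collinear (all within $O(\nu)$ of the common direction $\bm{\mu}$), so $B_S^\transp B_S$ is severely ill-conditioned and the naive operator-norm control of $(B_S^\transp B_S)^{-1}$ diverges; the incoherence-based certificates of Section~\ref{sec:mc} simply cannot apply here, since $\mu(X)\approx 1$. I would resolve this by separating the bouquet axis from its orthogonal complement, treating the rank-one mean $\bm{\mu}\bm{1}^\transp$ and the Gaussian fluctuation $(\nu/\sqrt{m})G$ differently. The pathological conditioning lives entirely in the common-mode $\bm{\mu}$-direction, but because $\bm{\mu}$ is spread out ($\|\bm{\mu}\|_\infty \leq C_\mu m^{-1/2}$) and the bouquet is narrow ($\nu$ small), the certificate can be taken with a tightly controlled $\bm{\mu}$-component, so the common-mode inner product $\ip{\bm{\mu},\bm{\lambda}}$ stays bounded away from $1$ and does not saturate the sign constraints. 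The fluctuation block, once the mean is projected off, behaves like a genuine Gaussian submatrix whose restricted extreme singular values concentrate by standard Wishart bounds, which is what supplies the effective well-conditioning needed to invert $B_S^\transp B_S$ and to carry the distinct signs $\sgn(\bm{w}_{0,S})$.

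Finally I would close the two strict inequalities uniformly using the randomness of $J$, $\bm{\sigma}$, and $G$. Each inactive coordinate $\lambda_j$ and each inactive inner product $\ip{\bm{x}_i,\bm{\lambda}}$ is, after conditioning on the restriction of $G$ to $S$, a sum of independent terms carrying the random signs $\bm{\sigma}$ on $J$ together with the few adversarial signs on $S_\alpha$; Hoeffding- and Gaussian-concentration then give each inequality with failure probability exponentially small in $m$, the independence of $\bm{x}_i$ from $B_S$ for $i \notin S_\alpha$ being used for the bouquet terms. The smallness of $\beta$ is precisely what tames the worst-case contribution of the arbitrary signs of $\bm{\alpha}_0$. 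A union bound over the $O(m)$ inactive coordinates, over the $\binom{N}{\floor{\beta m}}$ possible supports $S_\alpha$, and over their $2^{\floor{\beta m}}$ sign patterns, whose combined log-cardinality is $O(\beta m \log(1/\beta))$, yields the asserted $1 - C\exp(-\gamma^* m)$ once $\beta \leq \beta^*$ is small, $\nu$ is small, and $m$ is large. I expect the delicate point to be the calibration of the threshold $\beta^*$ so that the entropy of this support-and-sign union bound is strictly dominated by the exponential concentration rate, while the transverse-conditioning estimates for $(B_S^\transp B_S)^{-1}$ remain uniform over all admissible supports simultaneously.
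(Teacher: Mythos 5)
First, a point of order: this paper does not prove Theorem \ref{thm:wri_mu} at all --- it is quoted from Wright and Ma \cite{wri:dense} as an imported result --- so there is no in-paper proof to compare against, and your proposal must stand on its own. It does not: the fatal step is the choice of the oblivious minimum-norm certificate $\bm{\lambda} = B_S(B_S^\transp B_S)^{-1}\sgn(\bm{w}_{0,S})$ together with the closing appeal to ``the independence of $\bm{x}_i$ from $B_S$ for $i\notin S_\alpha$.'' In the bouquet model that independence works against you, not for you. Write $\bm{x}_i = \bm{\mu} + (\nu/\sqrt{m})\bm{g}_i$ and let $t := \ip{\bm{\mu},\bm{\lambda}}$. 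The active constraints $\ip{\bm{x}_i,\bm{\lambda}} = 1$, $i\in S_\alpha$ (take $\bm{\alpha}_0\geq 0$, as the model requires), force $\ip{\bm{g}_i,\bm{\lambda}} = \sqrt{m}\,(1-t)/\nu$ for every active $i$; since the $\floor{\beta m}$ vectors $\bm{g}_i/\sqrt{m}$ are nearly orthonormal, this gives $\|\bm{\lambda}\|_2 \geq \sqrt{\beta m}\,|1-t|/\bigl(\nu(1+\sqrt{\beta})\bigr)$, and in any case $\|\bm{\lambda}\|_2 \geq \|\bm{\sigma}_J\|_2 = \sqrt{\rho m}$ because $\bm{\lambda}_J = \bm{\sigma}_J$ is pinned. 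Now condition on $\bm{\lambda}$ and examine an \emph{inactive} bouquet column: $\ip{\bm{x}_k,\bm{\lambda}} = t + (\nu/\sqrt{m})\ip{\bm{g}_k,\bm{\lambda}} \sim \mathcal{N}\bigl(t,\;\nu^2\|\bm{\lambda}\|_2^2/m\bigr)$, independently over $k\notin S_\alpha$ --- precisely because $\bm{g}_k$ is independent of $B_S$. The ratio of the gap $1-t$ to the fluctuation scale $\nu\|\bm{\lambda}\|_2/\sqrt{m}$ is therefore bounded by the constant $(1+\sqrt{\beta})/\sqrt{\beta}$ uniformly in $m$ (and if $t\approx 1$ the fluctuation scale is still at least $\nu\sqrt{\rho}$ while the gap vanishes). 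Hence each of the $\Theta(m)$ inactive columns violates $|\ip{\bm{x}_k,\bm{\lambda}}| < 1$ with probability bounded below by a fixed positive constant, so your certificate is infeasible with probability $1 - e^{-cm}$ --- the exact opposite of the claimed bound, and this already fails for a single fixed support before any union bound. No tuning of the $\bm{\mu}$-component repairs this: a gap of order one forces $\|\bm{\lambda}\|_2 \sim \sqrt{\beta m}/\nu$, and the gap-to-noise ratio stays $O(1/\sqrt{\beta})$. The obstruction is structural: inactive bouquet columns are statistically identical to active ones, so no certificate built without reference to them can separate them; any valid certificate must be adapted to \emph{all} columns of $X$.

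Two further problems. You silently strengthen the theorem by allowing arbitrary signs of $\bm{\alpha}_0$ and unioning over $2^{\floor{\beta m}}$ sign patterns; Wright and Ma's result (and the illumination-cone model described immediately before the theorem in this paper) assumes $\bm{\alpha}_0 \geq 0$, and this is essential, since it is what ties the sign pattern on $S_\alpha$ to the common direction $\bm{\mu}\bm{1}^\transp$ so that a single consistent certificate can exist at all; the claim that ``smallness of $\beta$ tames the arbitrary signs'' is unsupported. Second, your framing suggests this is a routine Fuchs/Tropp-plus-Hoeffding argument of the kind that works for incoherent dictionaries; the actual proof in \cite{wri:dense} is long precisely because it must avoid the trap above, exploiting the decomposition $X = \bm{\mu}\bm{1}^\transp + (\nu/\sqrt{m})G$ jointly over active and inactive columns, the flatness $\|\bm{\mu}\|_\infty \leq C_\mu m^{-1/2}$ (which protects the common mode, since $\|\bm{\mu}_{J^c}\|_1 \sim \sqrt{m}$), and the randomness of $J$ and $\bm{\sigma}$.
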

Here, $C$ is a numerical constant and $\beta^*$ and $\gamma^*$ are positive constants (independent of $m$) which depend on $\delta$, $\rho$, and $\nu$. By ``$\nu$ sufficiently small'' and ``$m$ sufficiently large,'' Wright and Ma mean that there exist constants $0<\nu<\nu^*$ and $m >m^*$ (independent of $m$) such that $\nu^*(\delta,\rho) >0$ and $m^*(\delta,\rho,\nu)>0$, respectively.\footnote{The relationship between $\beta^*$ and $\beta$ is not explicitly stated, but it makes sense that $\beta^*\leq \beta$ by the proportional growth assumption. Further, if $\beta = \beta(\delta,\rho)$, then since $\beta^* = \beta^*(\delta,\rho,\nu)$, we can likely alternatively write $\beta^* = \beta^*(\beta,\nu)$.} 
This theorem illustrates that $\ell^1/\ell^0$-equivalence can provably hold on
the classification of highly-coherent data via random database model.

\begin{remark}
Despite its applicability to highly-coherent data, Theorem \ref{thm:wri_mu} does not prove that $\ell^1/\ell^0$-equivalence holds in SRC. First of all, the theorem requires that $m$ be sufficiently large, which may not be the case, especially when feature extraction is used. Second, the model in Theorem \ref{thm:wri_mu} does not explicitly deal with class-structured data. A true face recognition model should account for the individual subjects, with samples in the same class being (on average) more correlated than those from different classes. Thus our model should contain ``sub-bouquets'' (i.e., the classes) inside the larger bouquet. 
\end{remark}

\subsection{Experiments}
With these changes in mind, we design a random database model that will allow us to study the relationship between sparsity and $\ell^1$-minimization on highly-coherent and class-structured data, such as the images used in face recognition. First, we specify the dimension $m$, the number of classes $L$, and the number of samples $N_l \equiv N_0$, $1\leq l \leq L$ in each training class. We require that $N_\mathrm{tr}=N_0 L > m$ so that the resulting dictionary of training samples leads to an underdetermined system. We then randomly generate training data with an increasing amount of cone/bouquet structure as well as class structure, along with a test sample---with known sparse coefficient vector $\bm{\alpha}_0$---generated as a linear combination of training samples from a single class. We run a fixed number of trials of the experiment at each of 11 increasing values of coherence (we call these \emph{stages}) and determine at which stages $\ell^1$-minimization can closely (or exactly) recover $\bm{\alpha}_0$.

\subsubsection{Experimental Setup}

For each generated training set $X_\mathrm{tr} = [X^{(1)},\ldots,X^{(L)}] \in \mathbb{R}^{m\times N_\mathrm{tr}}$, we set the (clean) test sample $\bm{y}_0$ to be a random vector in the positive span of the class 1 data. That is, we set 
\begin{align*}
\bm{y}_0 := \alpha_1^{(1)}\bm{x}_1^{(1)} + \ldots + \alpha_{N_0}^{(1)}\bm{x}_{N_0}^{(1)}, 
\end{align*}
where $X^{(1)} := [\bm{x}_1^{(1)},\ldots,\bm{x}_{N_0}^{(1)}]$ and $\alpha_j^{(1)} \sim \unif(0,1)$, $1\leq j \leq N_0$. We then define 
\begin{align*}
\bm{\alpha}_0 := [\alpha_1^{(1)},\ldots,\alpha_{N_0}^{(1)},0,\ldots,0]^\transp \in \mathbb{R}^{N_\mathrm{tr}}.
\end{align*}
Given this setup, we want to see if $\ell^1$-minimization will recover $\bm{\alpha}_0$, i.e., if the solution 
\begin{align*}
\bm{\alpha}_1 := \arg\min_{\bm{\alpha}\in\mathbb{R}^{N_\mathrm{tr}}} \|\bm{\alpha}\|_1 \text{ subject to } X_\mathrm{tr}\bm{\alpha} = \bm{y}_0
\end{align*}
is equal to $\bm{\alpha}_0$. Note that for large $L$, $\bm{\alpha}_0$ can be viewed as a sparse vector. 

In Stage 1 of our model, the training data has no class or cone structure and is randomly generated on the unit sphere $S^{m-1}$. It has been shown experimentally that, for $N_\mathrm{tr} = 2m$ and $m$ sufficiently large, an $\ell^1$-minimization solution with no more than $(3/10)m$ nonzeros is enough to ensure it is the sparsest solution with high probability \cite{don:und}. Thus we expect to see exact recovery in Stage 1 for values of $N_0$, $m$, and $L$ satisfying these requirements. 

To add both bouquet and class (or sub-bouquet) structure to the training set in subsequent stages, we define the cone mean $\overline{\bm{x}}$ and the class means $\{\overline{\bm{x}}_1,\ldots,\overline{\bm{x}}_L\}$. At Stage $i$, $1\leq i \leq 11$, we set $\overline{\bm{x}} \sim \mathcal{N}(\bm{0},I_m)$ and then modify $\overline{\bm{x}} \leftarrow \mu_i\overline{\bm{x}}/\|\overline{\bm{x}}\|_2$, where $\mu_i := (i-1)/10$ effectively increases the cone mean from $\bm{0}$ as $i$ increases. Next, each class mean is randomly generated depending on $\overline{\bm{x}}$ as follows: For each class $1,\ldots,L$, we sample $\overline{\bm{x}}_l$ from $\mathcal{N}(\overline{\bm{x}},\eta_i m^{-1/2}I_m)$ for $\eta_i := 2/i$ (so that each class mean becomes increasingly close to the cone mean) and then modify $\overline{\bm{x}}_l \leftarrow \mu_i\overline{\bm{x}}_l/\|\overline{\bm{x}}_l\|_2$, $1\leq l \leq L$. Lastly, to generate the training samples in class $1\leq l \leq L$, we sample $\bm{x}_j^{(l)}$ from $\mathcal{N}(\overline{\bm{x}}_l,(\eta_im^{-1/2}/L)I_m)$ and then modify $\bm{x}_j^{(l)} \leftarrow \bm{x}_j^{(l)} / \|\bm{x}_j^{(l)}\|_2$, $1\leq j \leq N_0$. Figure \ref{fig:Ex_Stages} shows an example of Stage $i \in \{1,3,\ldots,11\}$ with $m=3$, $N_0 = 5$, and $L=4$.

\begin{figure}
\centering
\begin{subfigure}[b]{0.26\textwidth}
\centering
	\includegraphics[width=\linewidth]{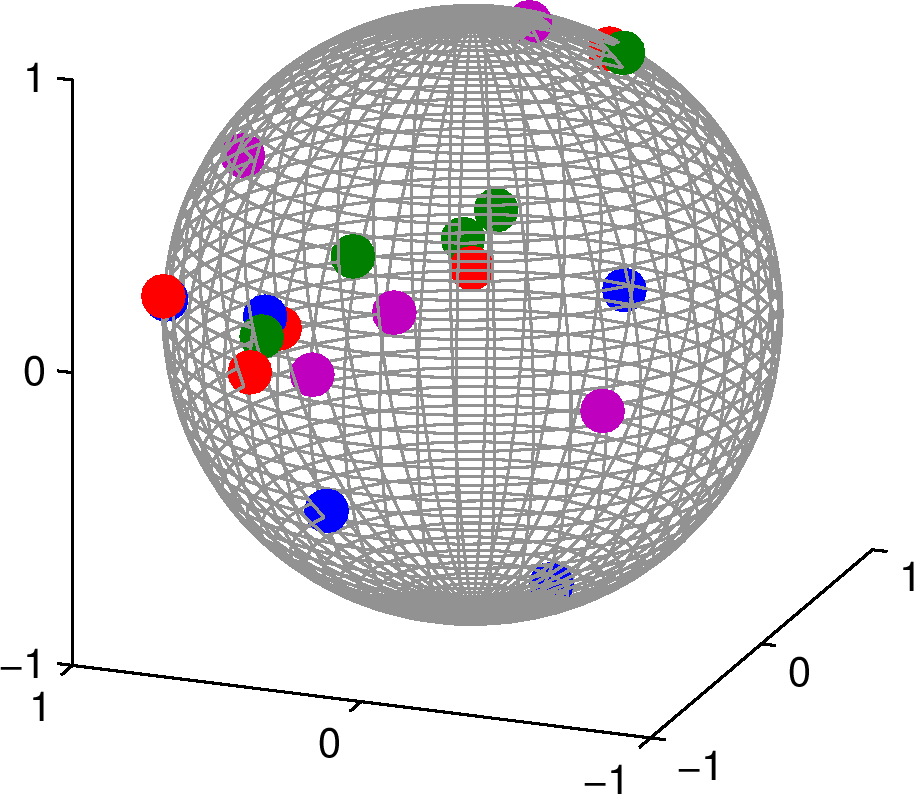}
		\caption{Stage 1}
	\label{fig:stage_1} \hfill%
\end{subfigure}
\begin{subfigure}[b]{0.26\textwidth} 
\centering
	\includegraphics[width=\linewidth]{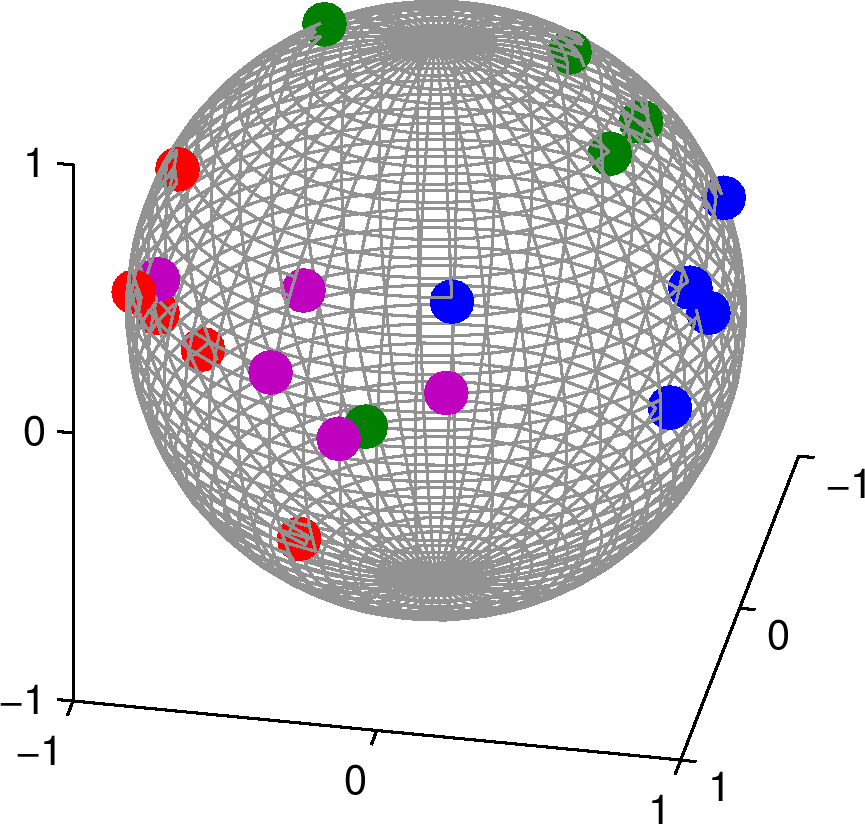}
		\caption{Stage 3}
	\label{fig:stage_3}
	\hfill%
\end{subfigure}
\begin{subfigure}[b]{0.26\textwidth} 
\centering
	\includegraphics[width=\linewidth]{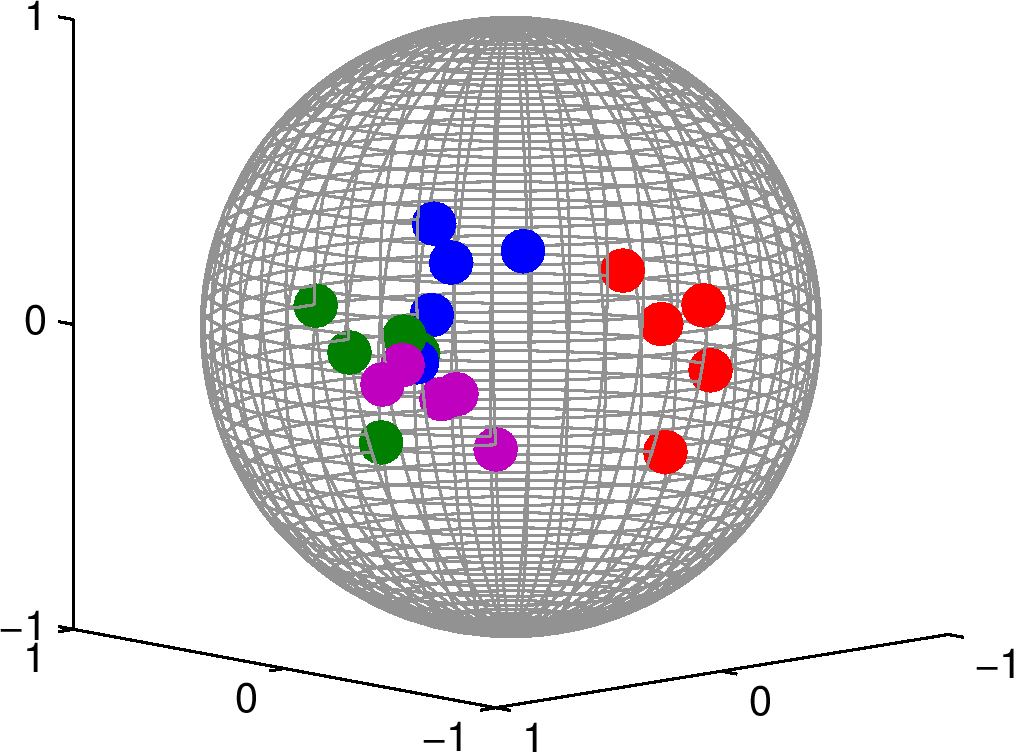}
		\caption{Stage 5}
	\label{fig:stage_5} \hfill%
\end{subfigure}
\begin{subfigure}[b]{0.26\textwidth}
\centering
	\includegraphics[width=\linewidth]{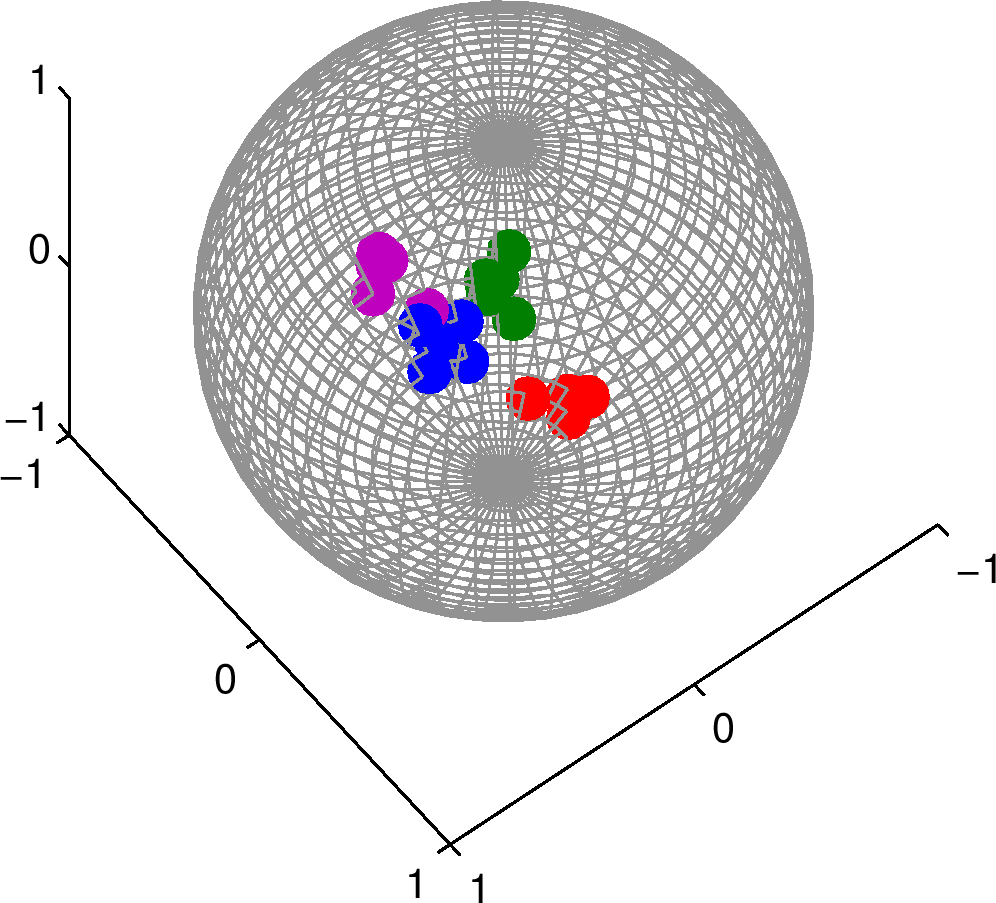}
		\caption{Stage 7}
	\label{fig:stage_7} \hfill%
\end{subfigure}
\begin{subfigure}[b]{0.26\textwidth} 
\centering
	\includegraphics[width=\linewidth]{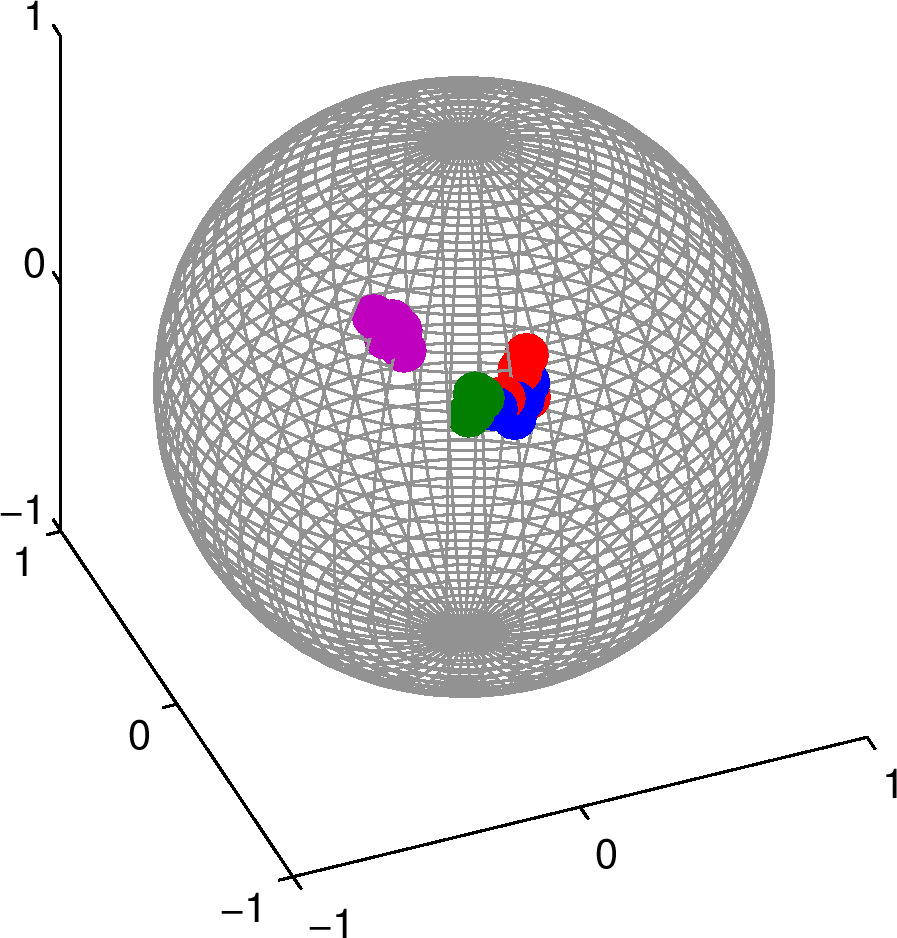}
		\caption{Stage 9}
	\label{fig:stage_9}
	\hfill%
\end{subfigure}
\begin{subfigure}[b]{0.26\textwidth} 
\centering
	\includegraphics[width=\linewidth]{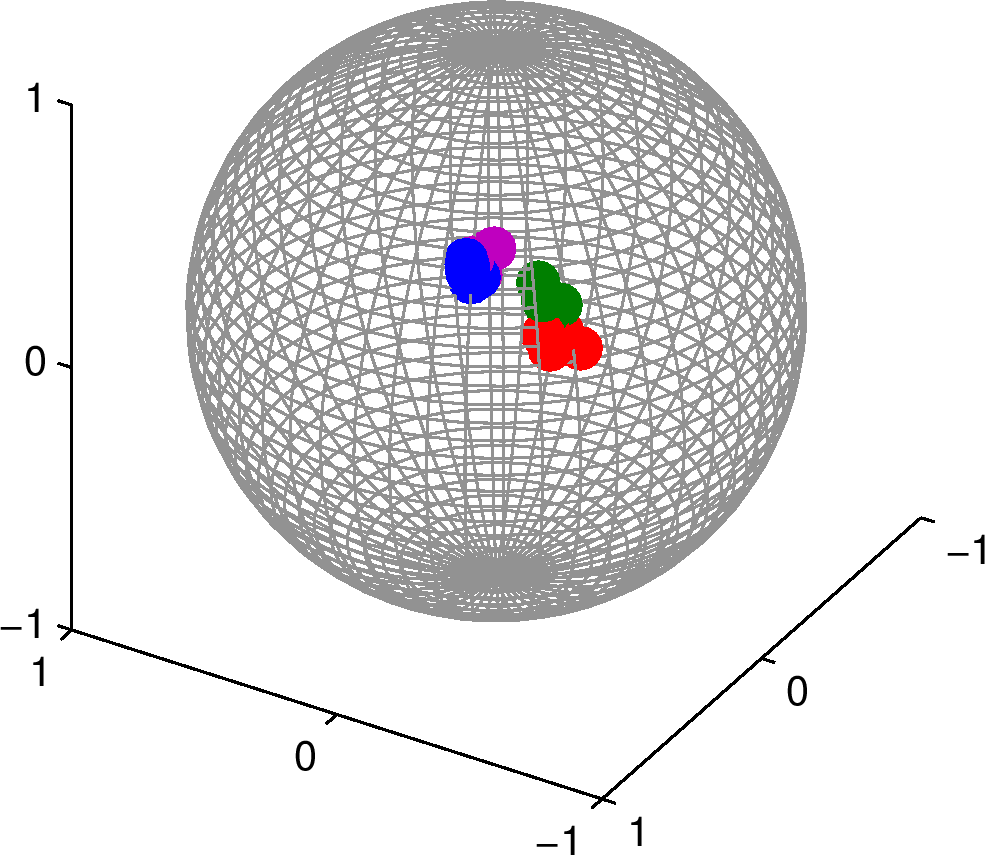}
		\caption{Stage 11}
	\label{fig:stage_11} \hfill%
\end{subfigure}
\caption[Illustration of random database model over increasing values of within-class correlation]{An example of the generated training data from the random database model across odd-numbered stages (as mutual coherence increases) with $m=3$, $N_0 =5$, and $L=4$. The colors denote the classes. Plots have been manually rotated to aid in visualization. (a) At Stage 1, data is uniformly spread out on the sphere; (b)-(f) At increasingly higher stages, the dataset as a whole becomes more bouquet-shaped, as does the data in each class.}
\label{fig:Ex_Stages}
\end{figure}

We perform experiments using four different specifications for the triples $(N_0,m,L)$, as shown in Table \ref{tab:triples}. By design, we have that $\|\bm{\alpha}_0\|_0 = N_0$ in our experiments (though we will also briefly look at the case that $\|\bm{\alpha}_0\|_0 < N_0$). Note that: (i) the inequality $\|\bm{\alpha}_0\|_0< (3/10)m$ is satisfied for each of the specifications in Table \ref{tab:triples}; and (ii) these numbers are similar to what we might expect to see in classification of a face database (after some method of feature extraction is applied, as is generally required by SRC for face classification). 

\begin{table}[!htbp]
\begin{center}
\resizebox{\columnwidth}{!}{%
\begin{tabular}{|c|c|c|c|c|c|}
\hline 
ID & $(N_0,m,L)$ &$\|\bm{\alpha}_0\|_0/m$ & $\|\bm{\alpha}_0\|_0/N_\mathrm{tr}$ & Redundancy ($N_0L/m$) & Comments \\
\hline 
DB-1 & (5,50,20) & 1/10 & $1/20 $ & 2:1 & \small{Baseline redundancy; $N_0$ small with respect to $m$, $N_\mathrm{tr}$} \\   
DB-2 & (10,50,10) & 1/5 & $1/10 $ & 2:1 & \small{Baseline redundancy; $N_0$ less small with respect to $m$, $N_\mathrm{tr}$} \\ 
DB-3 & (10,50,50) & 1/5 & $1/50$ & 10:1 & \small{High redundancy; large $L$} \\ 
DB-4 & (5,200,50) & 1/40 & $1/50$ & 5:4 & \small{Low redundancy; large $L$} \\ 
 \hline
\end{tabular}%
}
\end{center}
\caption[Specification of parameters in the random database model]{Specification of parameters in the random database model.}
\label{tab:triples}
\end{table}

\subsubsection{Experimental Results: No Noise} \label{sec:exps_no_noise}

\textit{Accuracy of recovery:} We consider the following quantities for evaluating the success of $\ell^1/\ell^0$-recovery:
\begin{itemize}[noitemsep,nolistsep]
\item The average normalized $\ell^2$-error 
\begin{equation}
\err_{\ell^2}:=\|\bm{\alpha}_1 - \bm{\alpha}_0\|_2/\|\bm{\alpha}_0\|_2
\end{equation} 
between the $\ell^1$-minimized solution $\bm{\alpha}_1$ and $\bm{\alpha}_0$,
\item The average number of nonzeros of $\bm{\alpha}_1$ occurring at training samples \emph{not} in class 1 (we call these ``off-support'' nonzeros, because they are nonzeros not in the support of $\bm{\alpha}_0$), divided by the total number of nonzeros. That is, let $\bm{\alpha}_1^{\mathrm{off-supp}} $ be the result of setting all entries in $\bm{\alpha}_1$ that are in class 1 to zero. Then this error is defined as 
\begin{align*}
\err_{\supp}:= \frac{\|\bm{\alpha}_1^{\mathrm{off-supp} }\|_0}{\|\bm{\alpha}_1\|_0},
\end{align*}
\item Since $\err_\mathrm{supp}$ does not provide information regarding the \emph{size} of the off-support nonzero coefficients, we also consider
\begin{align*}
\err_{\mathrm{supp}({\ell^2})} := \frac{\|\bm{\alpha}_1^{\mathrm{off-supp} }\|_2}{\|\bm{\alpha}_1\|_2} \text{ and }
\err_{\mathrm{supp}({\ell^1})} := \frac{\|\bm{\alpha}_1^{\mathrm{off-supp} }\|_1}{\|\bm{\alpha}_1\|_1},
\end{align*}
\item The average mutual coherence of the training set, $\mu(X_\mathrm{tr}) =: \mu$.
\end{itemize}

It is informative to consider the effect that the support error quantities would (hypothetically) have on the classification performance of SRC. Recall that, in the case that the clean test sample $\bm{y}_0$ is known, SRC computes the class residuals $\err_l(\bm{y}_0):=\|\bm{y}_0 - X_\mathrm{tr}\delta_l(\bm{\alpha}_1)\|_2$, $1\leq l\leq L$, and assigns $\bm{y}_0$ to the class with the smallest residual. Thus if $\err_{\supp}$, $\err_{\mathrm{supp}({\ell^2})}$, and $\err_{\mathrm{supp}({\ell^1})}$ are small, we expect that SRC will have an easier time classifying the test sample correctly (recall that these quantities measure the residual from the correct class $l=1$). For example, if all the support error quantities are 0, then $\delta_1(\bm{\alpha}_1) = \bm{\alpha}_1$ and it follows that the class 1 residual $\err_1(\bm{y}_0) = 0$ and $\err_l(\bm{y}_0) = \|\bm{y}_0\|_2$ for $2\leq l \leq L$. This corresponds to the ideal classification scenario.

We compute the average quantities $\err_{\ell^2}$, $\err_{\supp}$, $\err_{\mathrm{supp}({\ell^2})}$, $\err_{\mathrm{supp}({\ell^1})}$, and $\mu$ over 1000 trials at each stage, using the $\ell^1$-minimization algorithm HOMOTOPY \cite{don:hom, asif:hom} with error/sparsity trade-off parameter $\lambda = 10^{-10}$ (to force near-exactness in the approximation). The results are shown in Figure \ref{fig:HOM_no_noise}.

\begin{figure}%
\centering
\begin{subfigure}[b]{0.45\textwidth}
\centering
	\includegraphics[width=\linewidth]{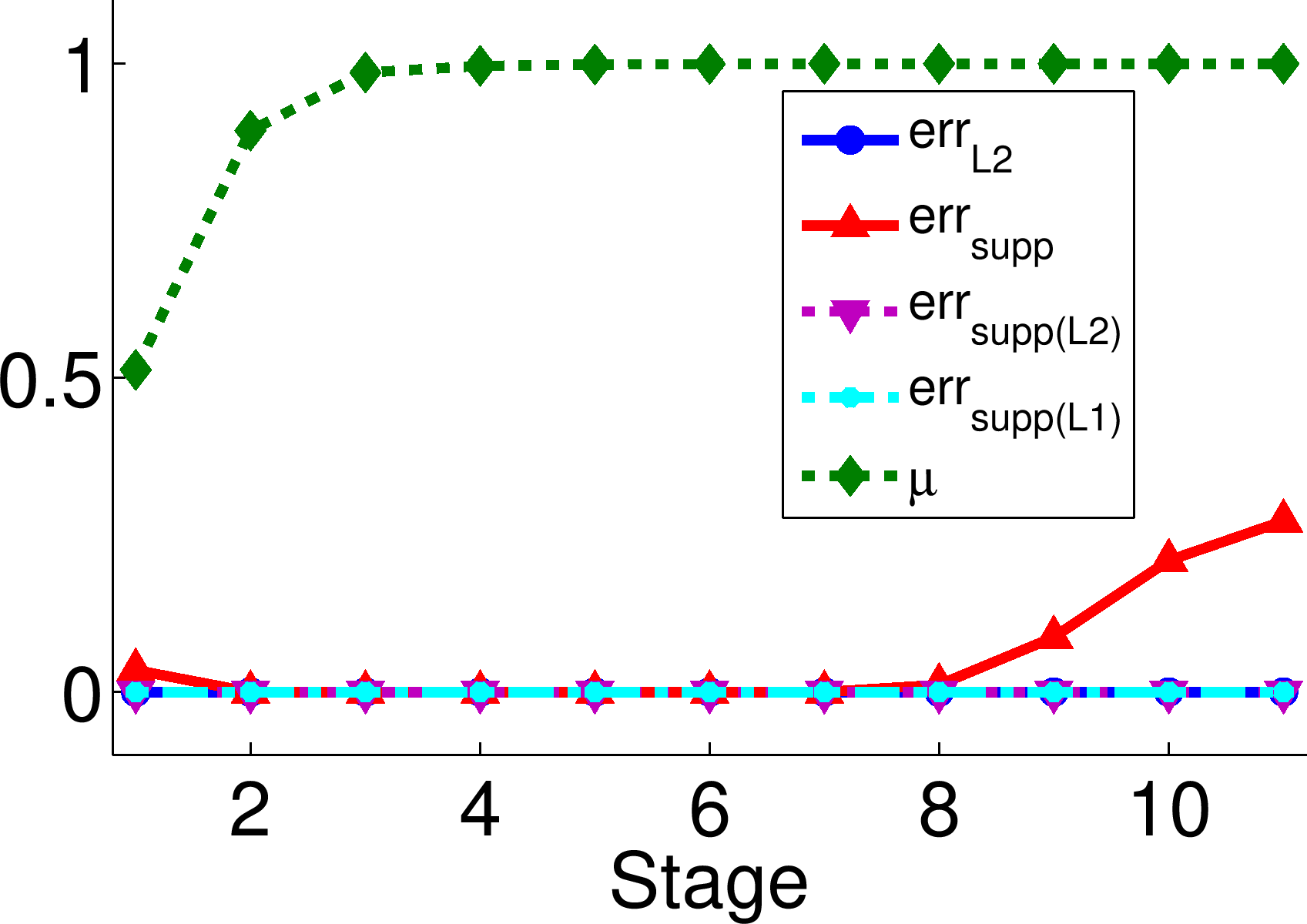}
		\caption{DB-1: $(N_0,m,L) = (5,50,20)$}
	\label{fig:DB_1_HOM_no_noise} \hfill%
\end{subfigure}
\begin{subfigure}[b]{0.45\textwidth} 
\centering
	\includegraphics[width=\linewidth]{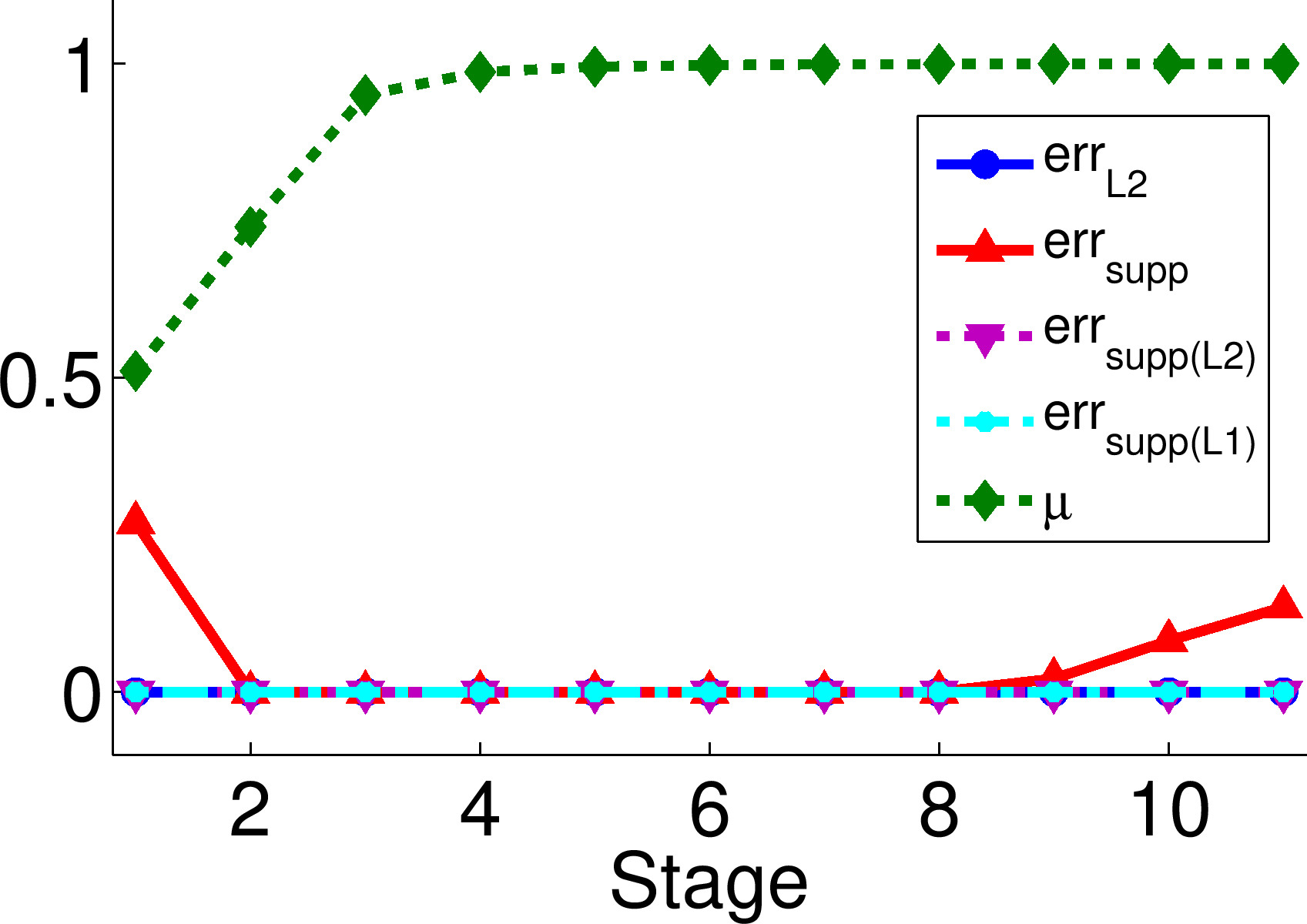}
		\caption{DB-2: $(N_0,m,L) = (10,50,10)$}
	\label{fig:DB_2_HOM_no_noise} \hfill%
\end{subfigure}
\begin{subfigure}[b]{0.45\textwidth} 
\centering
	\includegraphics[width=\linewidth]{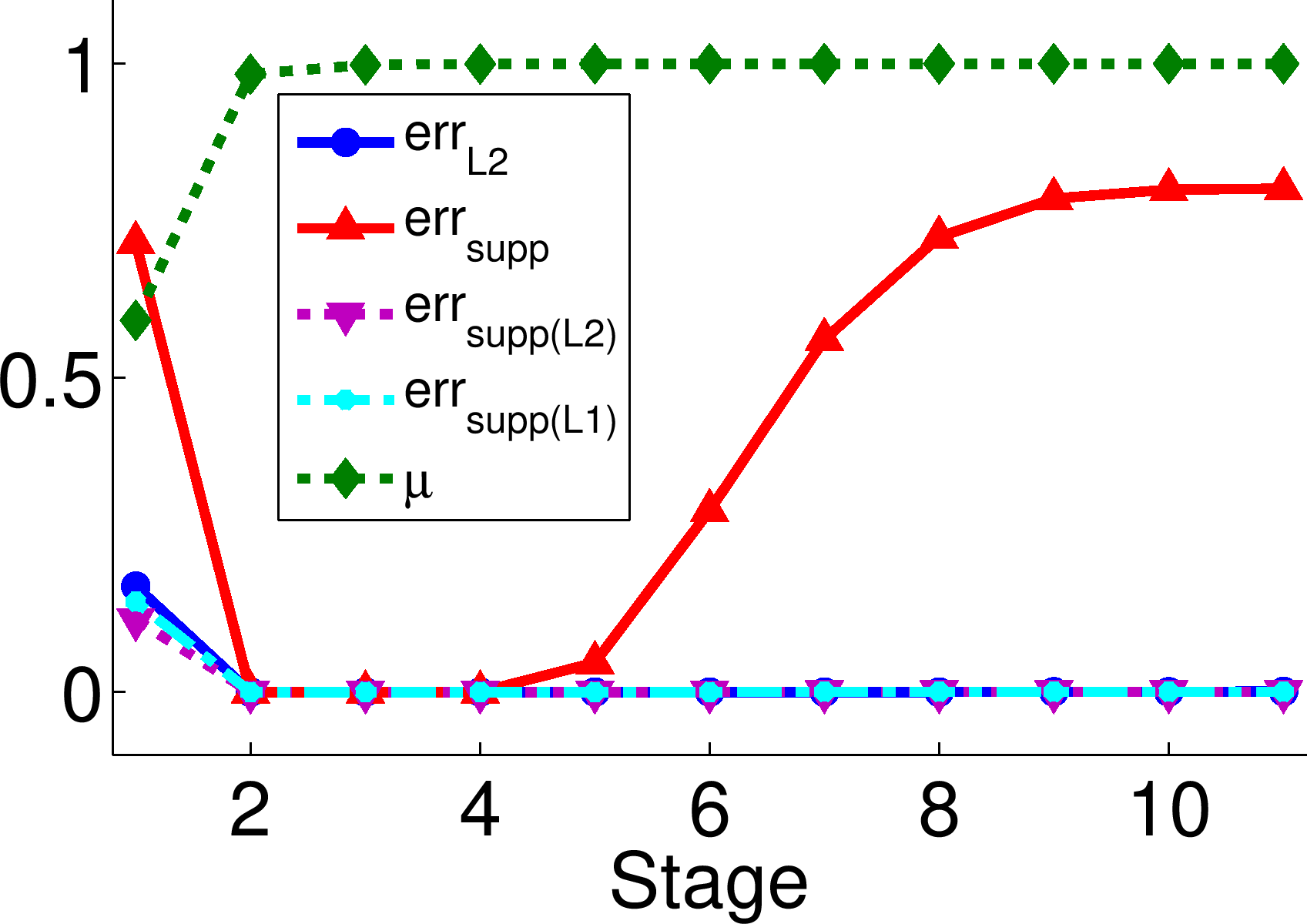}
		\caption{DB-3: $(N_0,m,L) = (10,50,50)$}
	\label{fig:DB_3_HOM_no_noise}
	\hfill%
\end{subfigure}
\begin{subfigure}[b]{0.45\textwidth}
\centering
	\includegraphics[width=\linewidth]{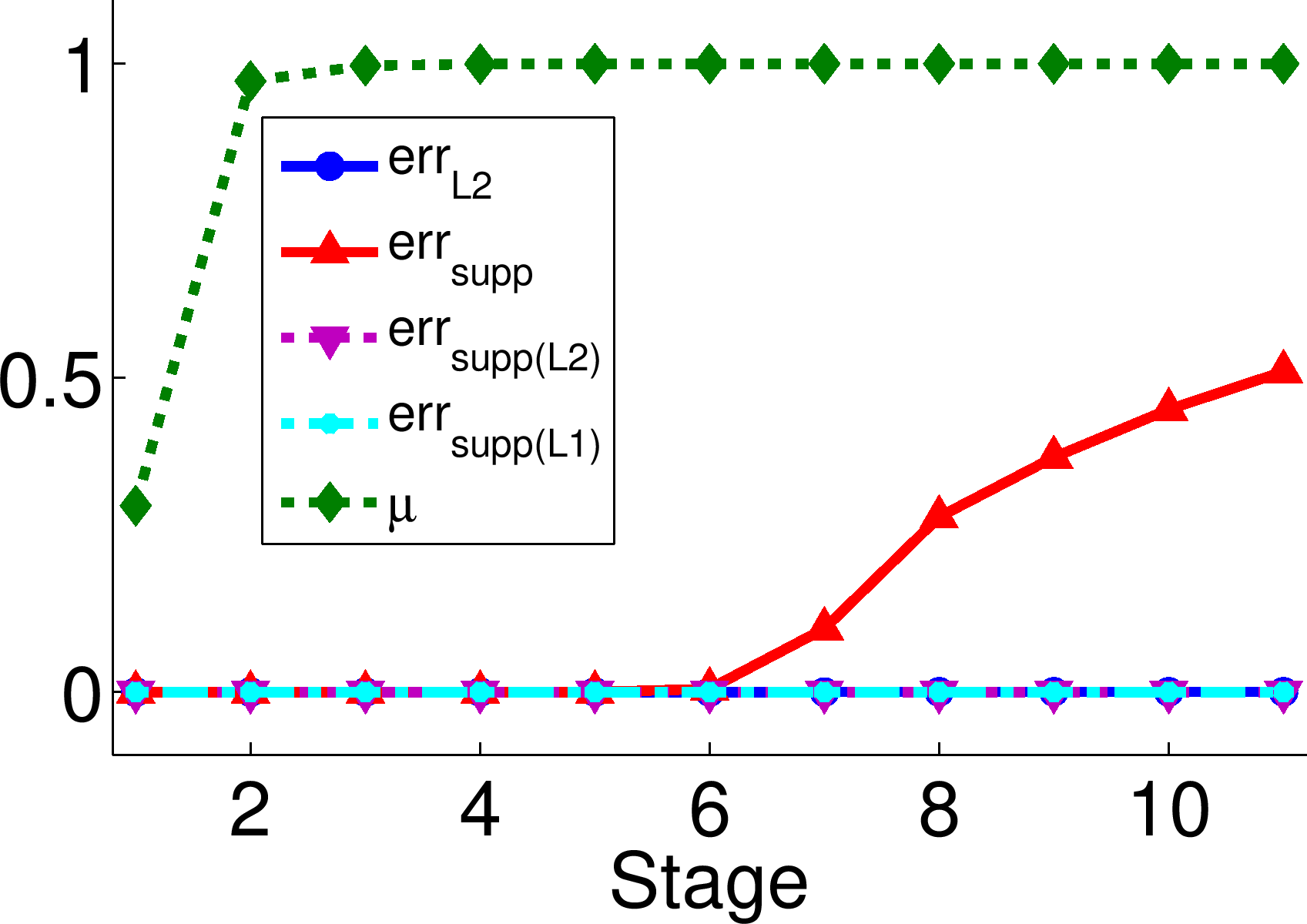}
		\caption{DB-4: $(N_0,m,L) = (5,200,50)$}
	\label{fig:DB_4_HOM_no_noise} \hfill%
\end{subfigure}
\caption[Recovery results on random database model in the case of no noise]{Recovery results on random database model (average of 1000 trials) in the case of no noise.}
\label{fig:HOM_no_noise}
\end{figure}

Considering that $\err_\mathrm{supp}$ records \emph{any} off-support nonzeros, regardless of how small, the results are quite good. In many cases, $\ell^1$-minimization was able to recover the exact solution $\bm{\alpha}_0$ on highly-correlated data, and when errors in the support occurred, they were generally small.

We see two different things happening at either end of the Stage axis. At Stage 1, we see support errors in every database except DB-4 (the low-redundancy case). Further, there are nonzero values of $\err_{\ell^2}$, $\err_{\mathrm{supp}(\ell^2)}$, and $\err_{\mathrm{supp}(\ell^1)}$ for DB-3 (the high-redundancy case) at this stage. At high stages, we see similar small support errors as the data became very correlated; these support errors were numerous (accounting for around half the nonzero coefficients) for both DB-3 and DB-4.

We start by explaining the results at Stage 1. Given the plots in Figure \ref{fig:HOM_no_noise}, our instinct may be to suspect that something wrong happened here, especially considering the exact recovery on all databases at Stage 2. For the cases that we had a ratio of 2-to-1 redundancy, does this contradict the experimental result \cite{don:und} that having $ N_0 = \|\bm{\alpha}_0\|_0 < (3/10)m$ nonzeros guarantees $\ell^1/\ell^0$-equivalence with high probability? It would, but for the fact that this result holds asymptotically. To test this, we repeated the experiments for increasing values of $m$, scaling $N_0$ and $L$ accordingly so that the redundancy remained constant. More precisely, we defined $r_1 := m/N_\mathrm{tr}$ and $r_2 := N_0/L$ and then set $\tilde{L} := [\sqrt{\tilde{m}/(r_1 r_2)}]$ and $\tilde{N}_0 :=r_2 \tilde{L}$. Here, $[\;\cdot\;]$ denotes the nearest integer function and $\tilde{m}$, $\tilde{L}$, and $\tilde{N}_0$ denote the increased values of $m$, $L$, and $N_0$, respectively. As we illustrate in Figure \ref{fig:HOM_no_noise_assympt}, the value of $\err_\mathrm{supp}$ decreased to 0 as $\tilde{m}$ increased. As is to be expected, both the amount of redundancy and the relationship $N_0/m$ affected the speed of convergence. We exclude results for DB-4, as we already see perfect recovery at Stage 1 in Figure \ref{fig:DB_4_HOM_no_noise}. 

\begin{figure}
\centering
\begin{subfigure}[b]{0.45\textwidth}
\centering
	\includegraphics[width=\linewidth]{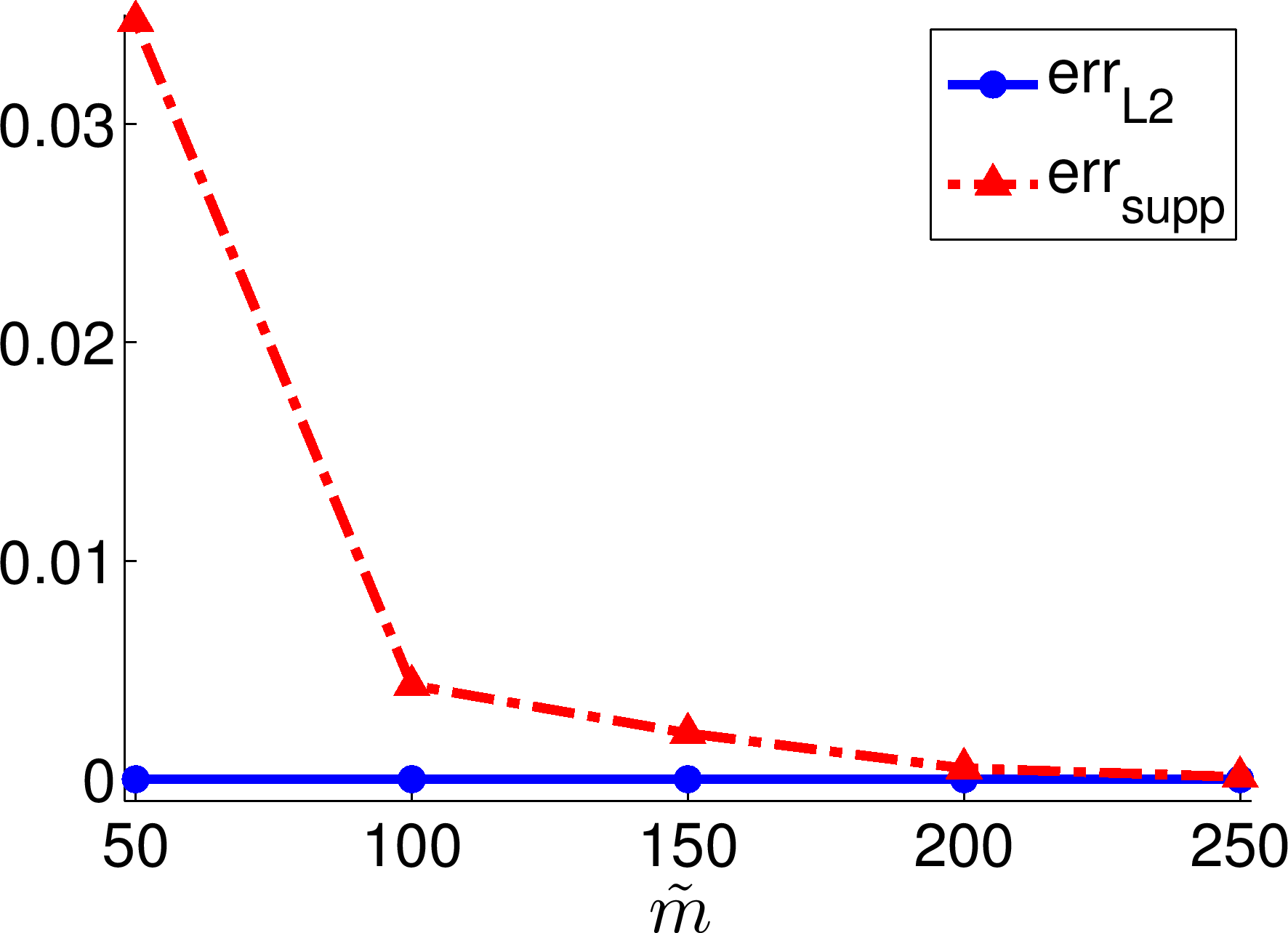}
		\caption{DB-1: $r_1 = 1/2$, $r_2 = 1/4$}
	\label{fig:DB_1_HOM_no_noise_assympt} \hfill%
\end{subfigure}
\begin{subfigure}[b]{0.45\textwidth} 
\centering
	\includegraphics[width=\linewidth]{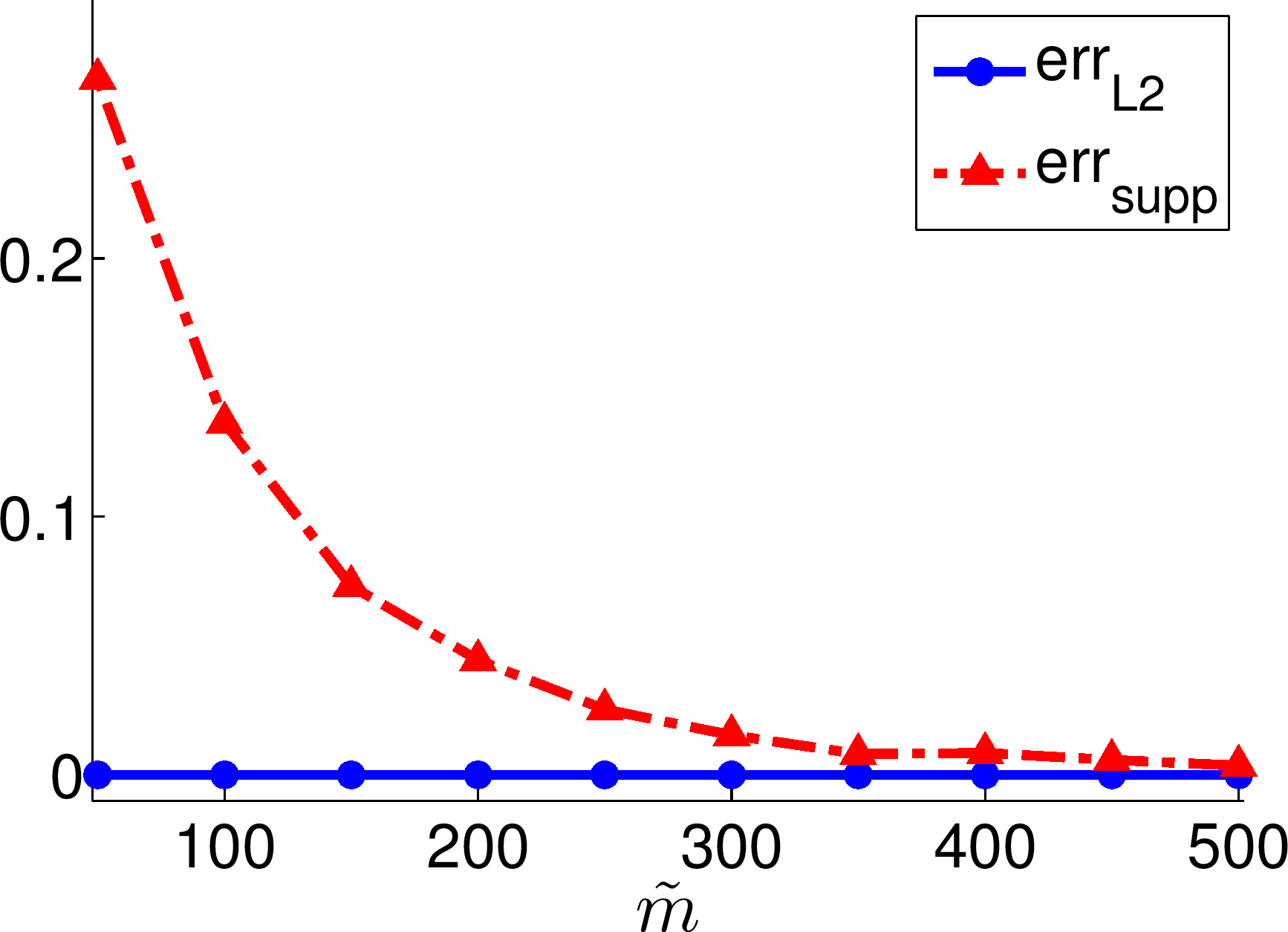}
		\caption{DB-2: $r_1 = 1/2$, $r_2 = 1$}
	\label{fig:DB_2_HOM_no_noise_assympt} \hfill%
\end{subfigure}
\begin{subfigure}[b]{0.45\textwidth} 
\centering
	\includegraphics[width=\linewidth]{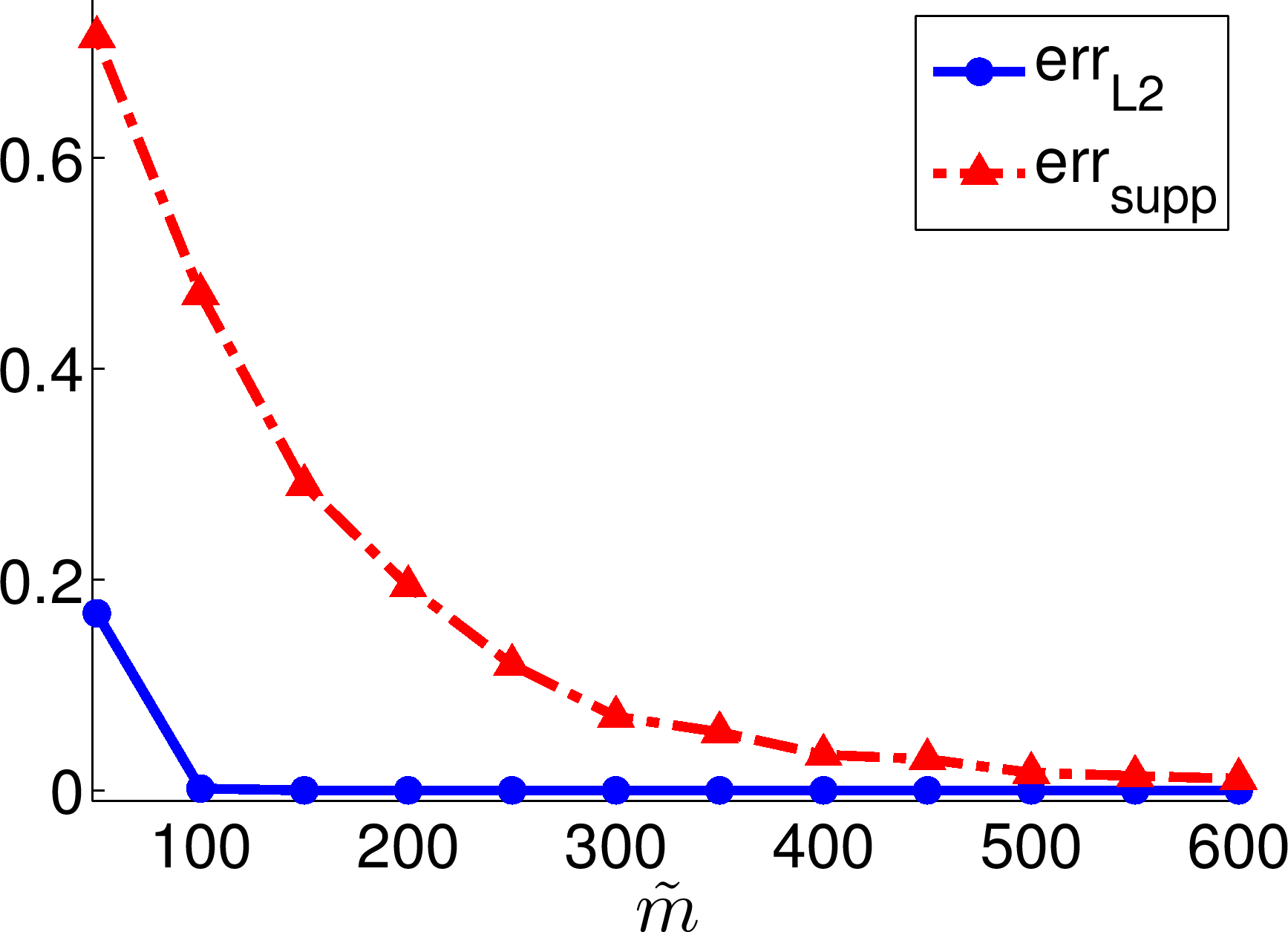}
		\caption{DB-3: $r_1 = 1/10$, $r_2 = 1/5$}
	\label{fig:DB_3_HOM_no_noise_assympt}
	\hfill%
\end{subfigure}
\caption[Asymptotic recovery results on the random database model]{Asymptotic recovery at Stage 1 of the random database model (average of 1000 trials). Note the different scales.}
\label{fig:HOM_no_noise_assympt}
\end{figure}

In comparing the Stage 1 results to those from data with bouquet/cone structure (i.e., Stages 2-11), it is initially surprising that small to moderate levels of correlation in the data samples appear to \emph{improve} sparse recovery. As mentioned, we see near-perfect recovery of $\bm{\alpha}_0$ at Stage 2 for every tried $(N_0,m,L)$ triple; this is in stark contrast to the recovery accuracy at Stage 1, especially for DB-3 (Figure \ref{fig:DB_3_HOM_no_noise}). This sharp change coincides with a significant increase in the within-class correlation between Stages 1 and 2 in our model, whereas the correlation between classes essentially remains unchanged. Though the exact specifics will depend on the $\ell^1$-minimization algorithm used, we strongly suspect that the relative clustering of the samples in the support of $\bm{\alpha}_0$ at Stage 2 (as compared to their random distribution at Stage 1) make it much easier for the algorithm to recover the desired solution. 

Conversely, at high stages, it appears that the loss of class structure negatively affected the recovery of $\bm{\alpha}_0$. As the standard deviation of the class mean distributions grew small, the class cones began to significantly overlap, and $\ell^1$-minimization could not exactly recover the support of $\bm{\alpha}_0$. Notice that we see an especially large number of support errors $\err_{\mathrm{supp}}$ for databases with large values of $L$, namely, DB-3 (Figure \ref{fig:DB_3_HOM_no_noise}) and DB-4 (Figure \ref{fig:DB_4_HOM_no_noise}). For DB-3, the nonzero values of $\err_\mathrm{supp}$ at Stages 5 and 6 (compared to $\err_\mathrm{supp}\approx 0$ at these stages for DB-4) confirms that redundancy, as well as the number of classes, affects recovery.

\textit{Effect on classification:} We earlier discussed the relationship between the support error quantities $\err_{\supp}$, $\err_{\mathrm{supp}({\ell^2})}$, and $\err_{\mathrm{supp}({\ell^1})}$ on the classification performance of SRC, in particular, their effect on the class residuals $\err_l(\bm{y}_0):=\|\bm{y}_0 - X_\mathrm{tr}\delta_l(\bm{\alpha}_1)\|_2$. Here, we consider these residuals explicitly. For each of the four databases, we computed the average residual $\err_l(\bm{y}_0)$ (over 1000 trials) for each class $1\leq l \leq L$ at each of the 11 values of coherence. 

Not surprisingly given the small support error quantities determined in the previous section, there is a stark difference between the residual of class 1 and those of the other classes at all stages. More precisely, the ideal classification scenario occurs in all cases, with $\err_1(\bm{y}_0) \approx 0$ and $\err_l(\bm{y}_0) \approx \|\bm{y}_0\|_2$ for all $2\leq l \leq L$. The approximations are of the order $10^{-8}$ (or better), except for the highly-redundant database DB-3 at Stage 1. In this case, the average quantities were $\err_1(\bm{y}_0) = 0.230$ and 
\begin{align*}
\|\bm{y}_0\|_2 - \underset{2\leq l \leq L}\mean \err_l(\bm{y}_0) = 0.004. 
\end{align*}
These findings are consistent with the results in Figure \ref{fig:DB_3_HOM_no_noise}. Even though these quantities at Stage 1 are nonzero, it is important to note that good classification would still be achieved, as $\min_{2\leq l \leq L} \err_l(\bm{y}_0) = 1.806$, which is much greater than $\err_1(\bm{y}_0) = 0.230$.

\textit{Varying the sparsity level:} We next consider what happens when the sparsity level $\|\bm{\alpha}_0\|_0$ is strictly less than the number of class 1 training samples $N_0$. This is important to investigate: can $\ell^1$-minimization identify the correct training samples from among the rest of the (highly-correlated) training data in that class? For DB-2 and DB-3, we generated $\bm{\alpha}_0$ (and subsequently $\bm{y}_0$) using the first five samples in class 1. Figures \ref{fig:DB_2_HOM_no_noise_k_5} and \ref{fig:DB_3_HOM_no_noise_k_5} show the recovery results, and Figures \ref{fig:DB_2_HOM_no_noise_repeat} and \ref{fig:DB_3_HOM_no_noise_repeat} repeat the plots in Figures \ref{fig:DB_2_HOM_no_noise} and \ref{fig:DB_3_HOM_no_noise} (in which $\|\bm{\alpha}_0\|_0 = N_0$) for convenient comparison.

\begin{figure}
\centering
\begin{subfigure}[b]{0.45\textwidth}
\centering
	\includegraphics[width=\linewidth]{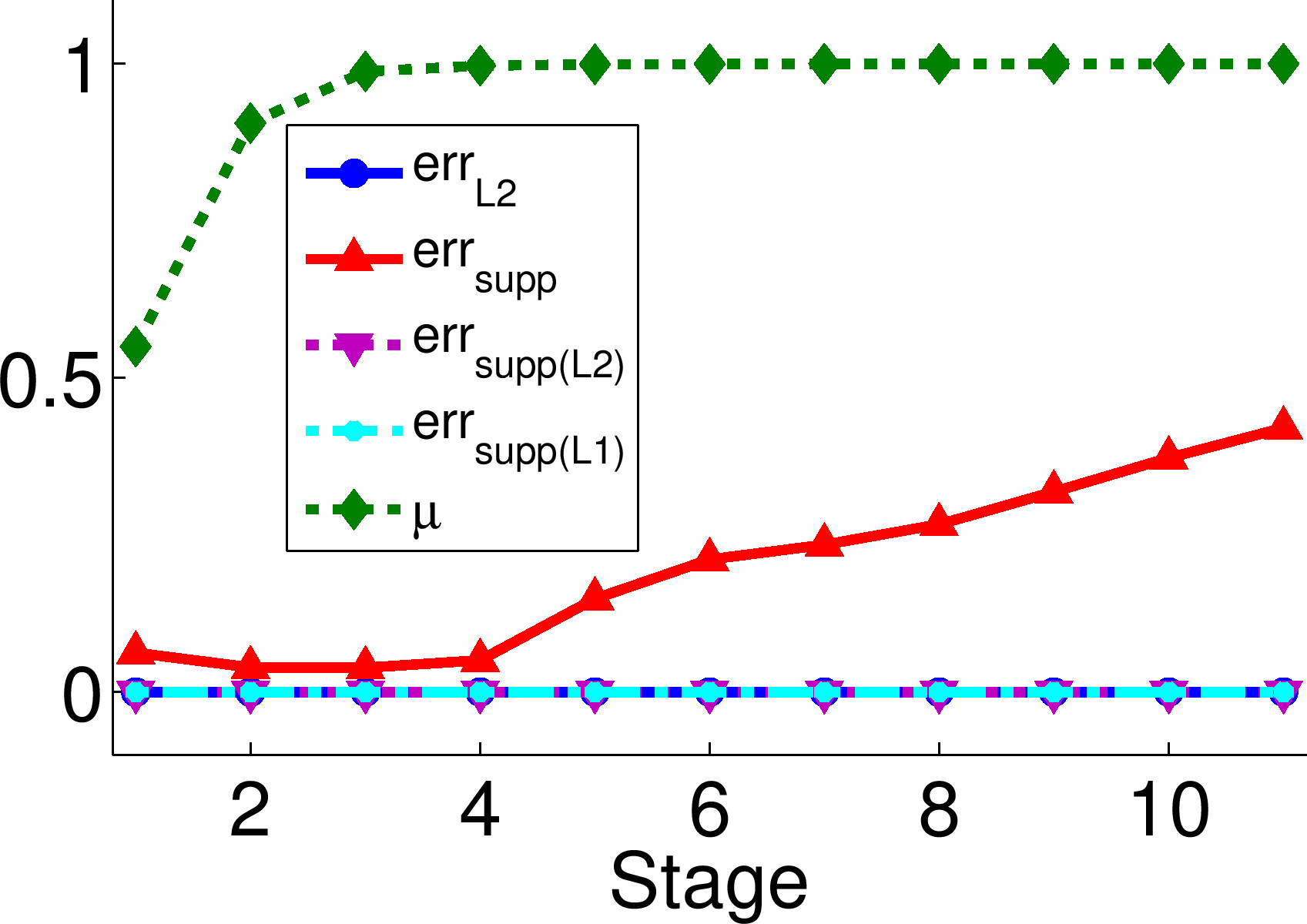}
		\caption{DB-2: $(N_0,m,L) = (10,50,10)$, $\|\bm{\alpha}_0\|_0=5$}
	\label{fig:DB_2_HOM_no_noise_k_5} \hfill%
\end{subfigure}
\begin{subfigure}[b]{0.45\textwidth} 
\centering
	\includegraphics[width=\linewidth]{DB_2_HOM_2_no_noise}
		\caption{DB-2: $(N_0,m,L) = (10,50,10)$, $\|\bm{\alpha}_0\|_0=N_0$}
	\label{fig:DB_2_HOM_no_noise_repeat} \hfill%
\end{subfigure}
\begin{subfigure}[b]{0.45\textwidth} 
\centering
	\includegraphics[width=\linewidth]{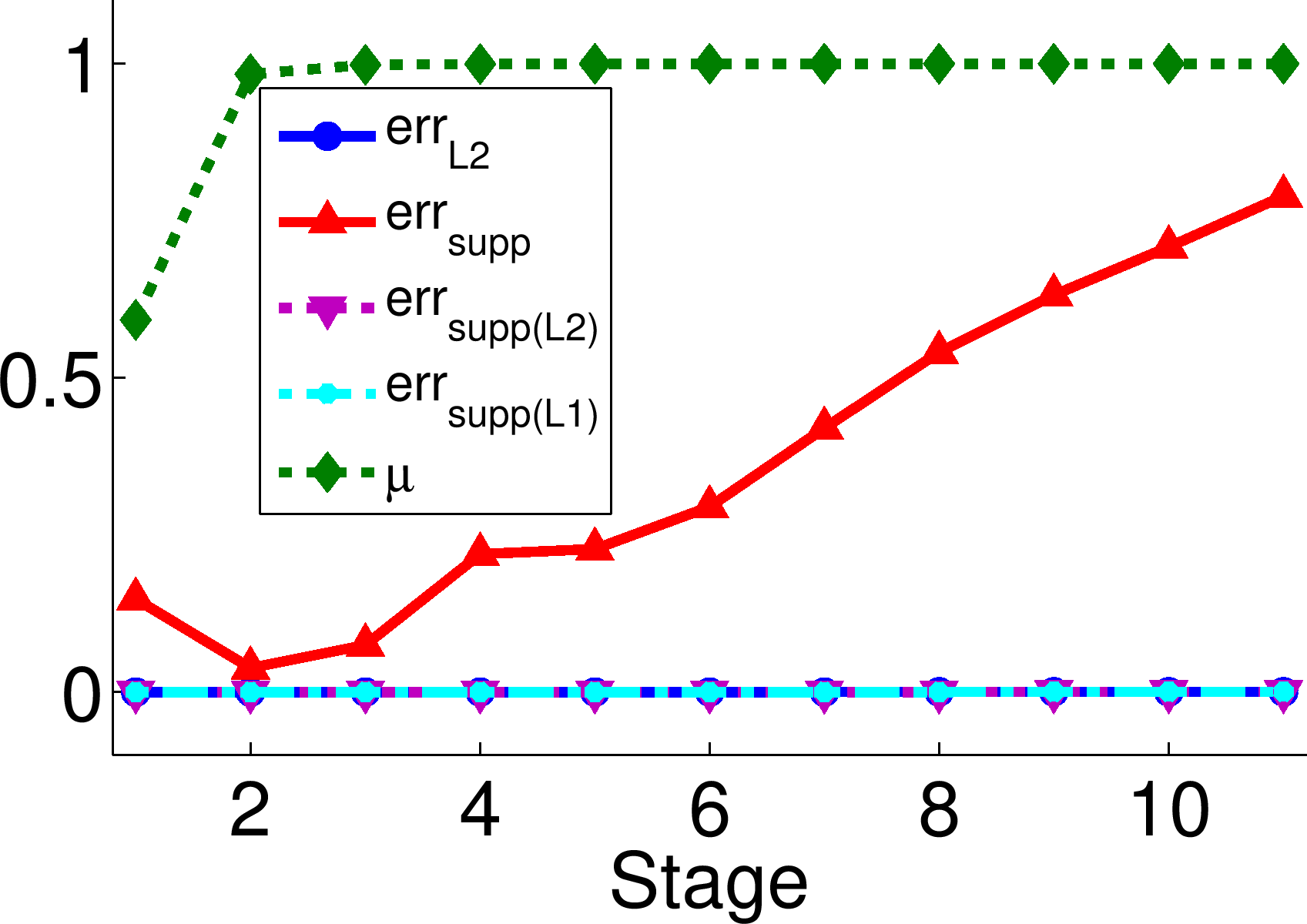}
		\caption{DB-3: $(N_0,m,L) = (10,50,50)$, $\|\bm{\alpha}_0\|_0=5$}
	\label{fig:DB_3_HOM_no_noise_k_5}
	\hfill%
\end{subfigure}
\begin{subfigure}[b]{0.45\textwidth} 
\centering
	\includegraphics[width=\linewidth]{DB_3_HOM_2_no_noise}
		\caption{DB-3: $(N_0,m,L) = (10,50,50)$, $\|\bm{\alpha}_0\|_0=N_0$}
	\label{fig:DB_3_HOM_no_noise_repeat}
	\hfill%
\end{subfigure}
\caption[Recovery results on the random database model as the sparsity level is varied]{Comparing $\|\bm{\alpha}_0\|_0 < N_0$ and $ \|\bm{\alpha}_0\|_0 = N_0$ sparsity levels (average of 1000 trials) on the random database model in the case of no noise.}
\label{fig:HOM_no_noise_k}
\end{figure}

At Stage 1, we see that the support of $\bm{\alpha}_1$ was more concentrated on the correct training samples when $\|\bm{\alpha}_0\|_0$ was smaller, evidenced by smaller values of $\err_\mathrm{supp}$. This is to be expected, as the ground truth solution became sparser. For the lower-redundancy case DB-2, we see far more support errors as the correlation increased when $\|\bm{\alpha}_0\|_0=5$ (Figure \ref{fig:DB_2_HOM_no_noise_k_5}) than for the case $\|\bm{\alpha}_0\|_0=N_0$ (Figure \ref{fig:DB_2_HOM_no_noise_repeat}); however, the values of these off-support coefficients were very small, as demonstrated by the near-zero values of $\err_{\mathrm{supp}(\ell^2)}$ and $\err_{\mathrm{supp}(\ell^1)}$. Though class 1 training samples not in the support of $\bm{\alpha}_0$ were mistakenly selected as the data in class 1 became more correlated, these samples played a negligible role in the representation. For the high-redundancy case DB-3, we similarly see more small-valued, off-support coefficients at Stages 2-5 when $\|\bm{\alpha}_0\|_0=5$ (Figure \ref{fig:DB_3_HOM_no_noise_k_5}) than in the case $\|\bm{\alpha}_0\|_0=N_0$ (Figure \ref{fig:DB_3_HOM_no_noise_repeat}). The value of $\err_\mathrm{supp}$ for $\|\bm{\alpha}_0\|_0<N_0$ was actually smaller than it was for $\|\bm{\alpha}_0\|_0=N_0$ at many of the higher stages, however, suggesting that the added degree of sparsity helped to counter-balance the high redundancy of this database (and its negative effect on recovery) in these cases.

\textit{Eliminating errors by thresholding:} Before we turn to the noisy setting, we demonstrate that the small support errors in $\bm{\alpha}_1$ depicted in Figure \ref{fig:HOM_no_noise} can be completely remedied using thresholding in all but the high-redundancy case DB-3. After determining $\bm{\alpha}_1$ as before, we set its small coefficients (those with absolute value less than some threshold $\tau$) to zero, obtaining the vector $\bm{\alpha}_1^{\tau}$. We then re-solved the equation $X_\mathrm{tr}\bm{\alpha} = \bm{y}_0$ with the constraint that the solution, denoted $\hat{\bm{\alpha}}_1$, had the same support as the thresholded $\bm{\alpha}_1^{\tau}$. For simplicity, we did this by setting the columns of $X_\mathrm{tr}$ corresponding to zero-coordinates in $\bm{\alpha}_1^{\tau}$ to $\bm{0}$, thus obtaining the matrix $\hat{X}_\mathrm{tr}$. We then used MATLAB's ``$\backslash$'' operator to define $\hat{\bm{\alpha}}_1 := \hat{X}_\mathrm{tr}\backslash \bm{y}_0$. In our case, since $X_\mathrm{tr}$ was not square, the desired least squares solution was found by (MATLAB's implementation of) QR-factorization. 

For all but the highly-redundant database DB-3, $\hat{\bm{\alpha}}_1$ was equal to the sparsest solution $\bm{\alpha}_0$ (up to nearly machine-precision) for the thresholding value $\tau = 10^{-5}$. For $\tau \in \{0.001,0.01\}$ on these three databases (DB-1, DB-2, and DB-3), we saw small nonzero values of $\err_{\ell^2}$, but these errors were indiscernible in plots on the same scale as those in Figure \ref{fig:HOM_no_noise}, and so we do not show them here. For $\tau = 0.1$, there was a consistent, small but nontrivial $\ell^2$-error across all stages, as small coefficients corresponding to class 1 training samples were incorrectly set to 0. For all four values of $\tau$, there were no support errors.

For the high-redundancy case DB-3, we continued to see errors at Stage 1, similar to those in Figure \ref{fig:DB_3_HOM_no_noise}. For the thresholding values $\tau \in \{0.001,0.01,0.1\}$ (i.e., for $\tau$ large enough), there were no support errors at other stages. However, similarly to the other databases, we saw nontrivial $\ell^2$-error when $\tau = 0.1$. We plot the results for DB-3 in Figure \ref{fig:th}, stressing that the results for the other databases contained errors too small to produce nontrivial plots.

\begin{figure}
\centering
\begin{subfigure}[b]{0.45\textwidth}
\centering
	\includegraphics[width=\linewidth]{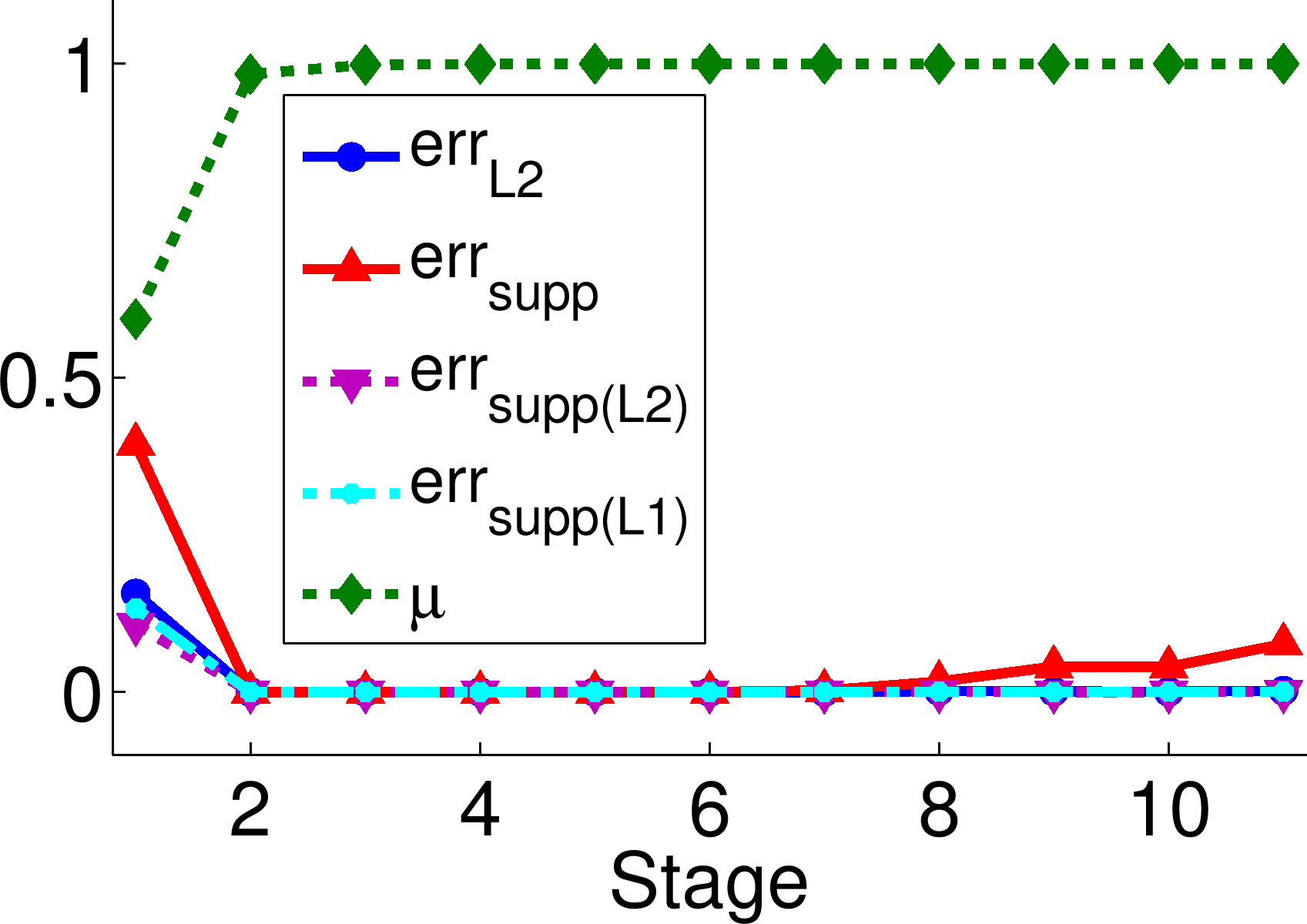}
		\caption{$\tau = 10^{-5}$}
	\label{fig:DB_3_HOM_no_noise_th_1e_5} \hfill%
\end{subfigure}
\begin{subfigure}[b]{0.45\textwidth} 
\centering
	\includegraphics[width=\linewidth]{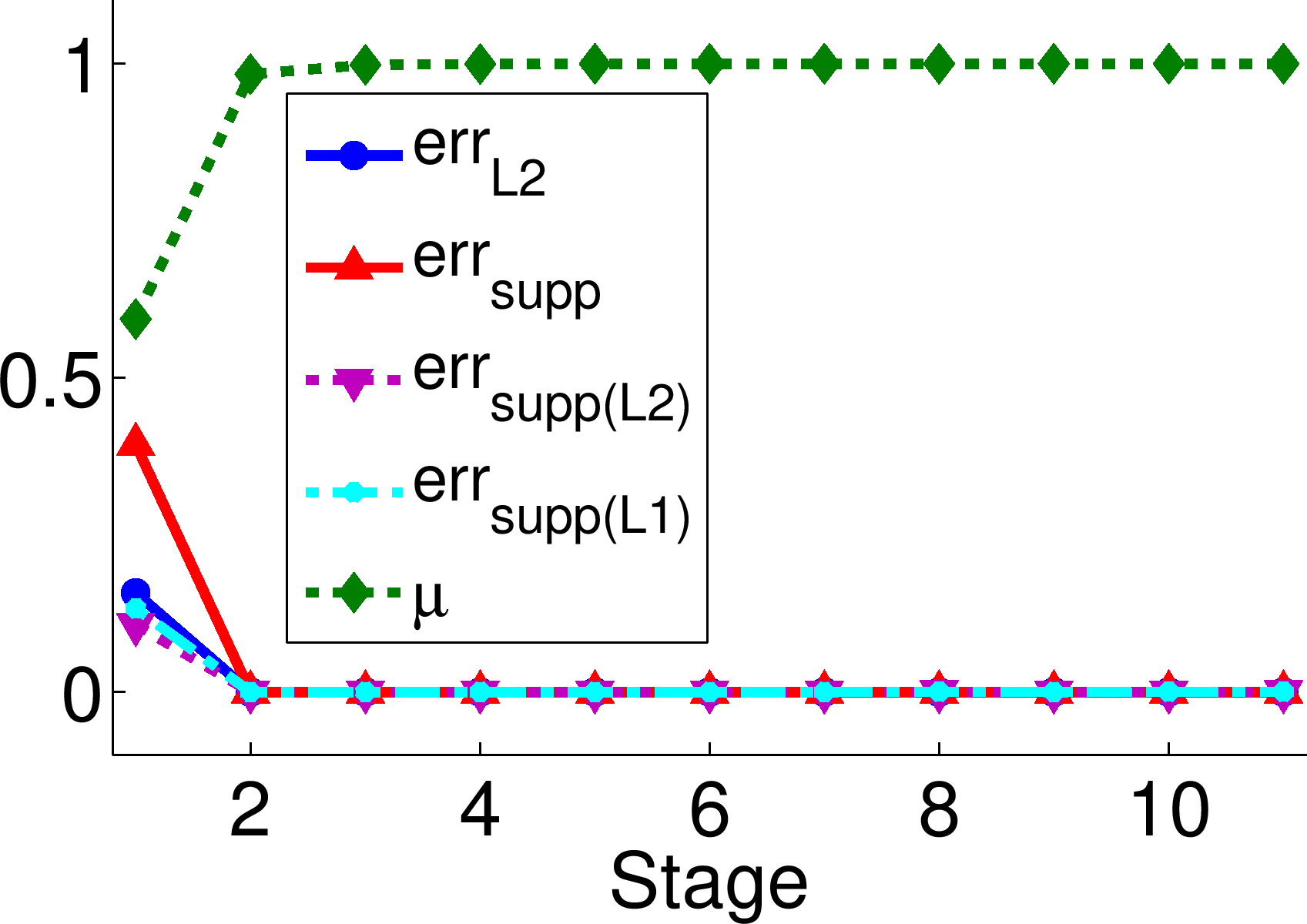}
		\caption{$\tau = 0.001$}
	\label{fig:DB_3_HOM_no_noise_th_1e_3} \hfill%
\end{subfigure}
\begin{subfigure}[b]{0.45\textwidth} 
\centering
	\includegraphics[width=\linewidth]{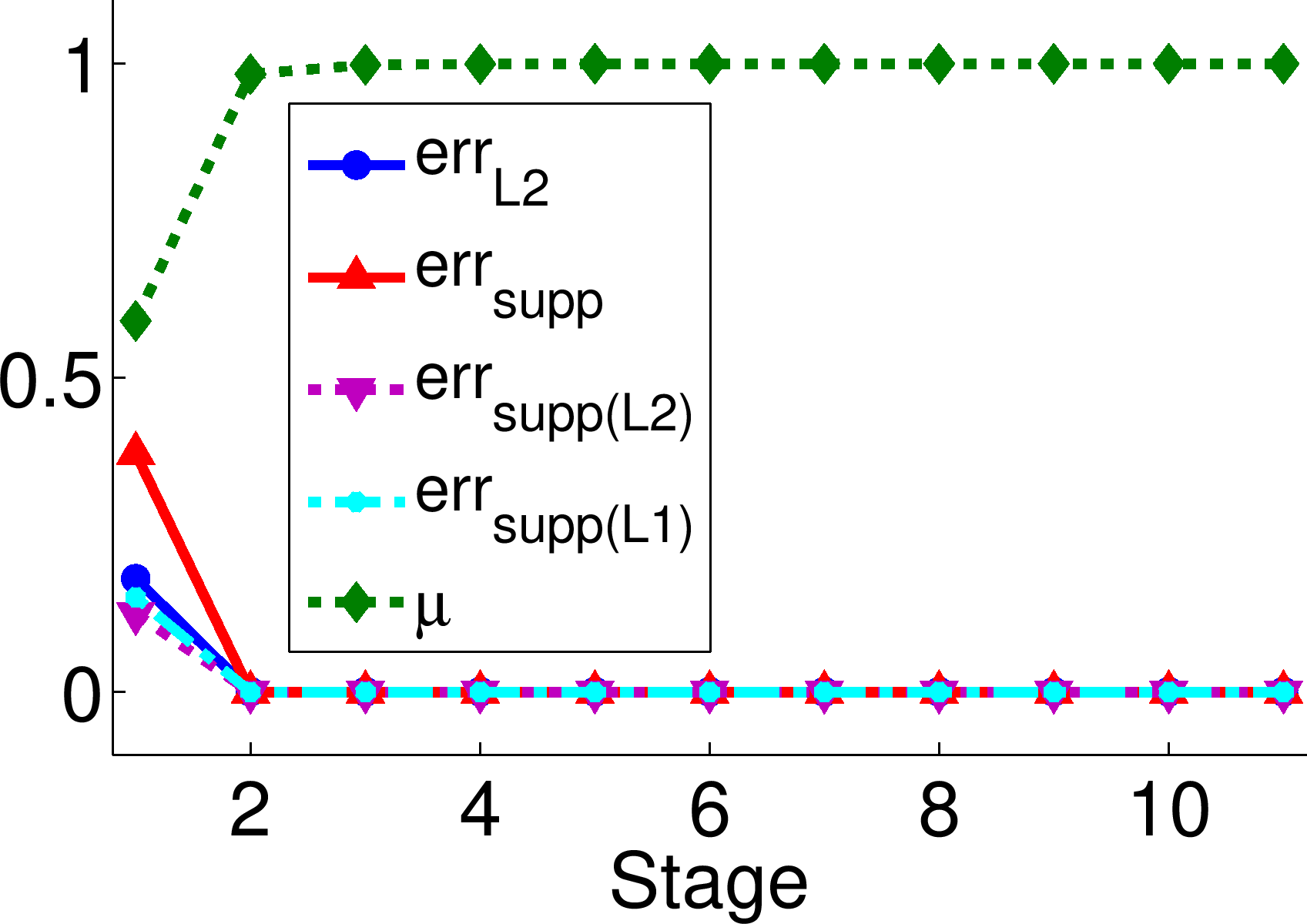}
		\caption{$\tau = 0.01$}
	\label{fig:DB_3_HOM_no_noise_th_1e_2}
	\hfill%
\end{subfigure}
\begin{subfigure}[b]{0.45\textwidth} 
\centering
	\includegraphics[width=\linewidth]{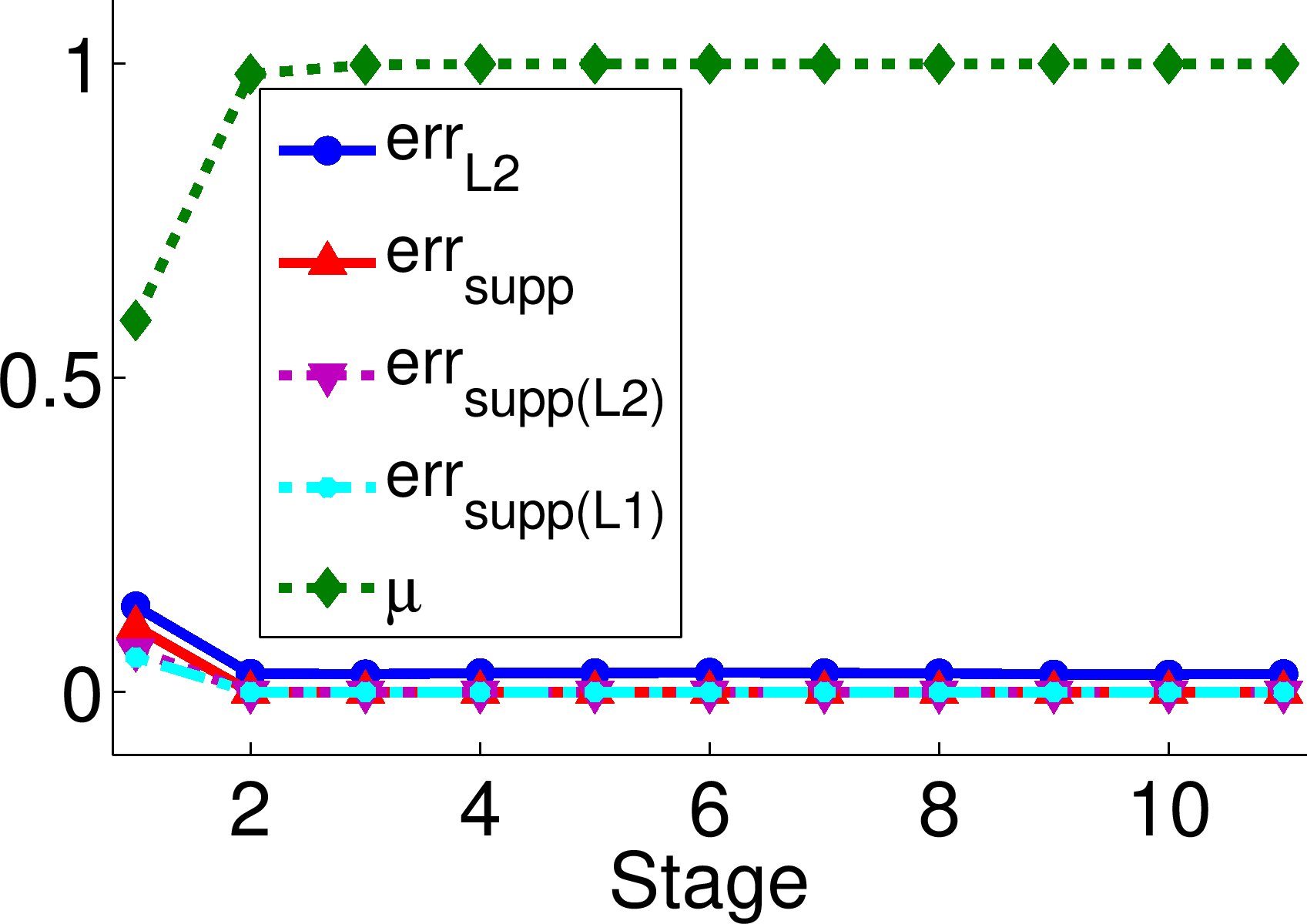}
		\caption{$\tau = 0.1$}
	\label{fig:DB_3_HOM_no_noise_th_1e_1}
	\hfill%
\end{subfigure}
\caption[Recovery results on the random database model in the case of thresholding]{The results of thresholding (average of 1000 trials) on the highly-redundant database DB-3: $(N_0,m,L) = (10,50,50)$ in the case of no noise.}
\label{fig:th}
\end{figure}

\subsubsection{Experimental Results: Noisy Setting}

In these experiments, we examine $\ell^1/\ell^0$-recovery when noise is added to the test sample $\bm{y}_0$. Recall the theorems by Donoho et al.\ regarding $\ell^1/\ell^0$-equivalence in the noisy setting stated in Theorems \ref{thm:mc_noise} and \ref{thm:mc_stab}.

\textit{Accuracy of recovery:} Unfortunately, Eq.~\eqref{eq:mc_noise} and Eq.~\eqref{eq:mc_stab} in the referenced theorems do not make sense for large mutual coherence $\mu(X)$. However, we can still look for a correlation between $\|\bm{\alpha}_0-\bm{\alpha}_{1,\epsilon}\|_2$ (where $\boldsymbol{\alpha}_{1,\epsilon}$ is the solution to Eq.~\eqref{eq:cs_l1_error_2} below) and the values of the noise tolerance $\zeta$ (see that statement of Theorem \ref{thm:mc_noise}), the approximation error bound $\epsilon$, $N_0 = \|\bm{\alpha}_0\|_0 := k$, and $\mu(X_\mathrm{tr})$, with $\epsilon = :C\zeta$ for some constant $C>0$. We modify the experiments in Section \ref{sec:exps_no_noise} as follows: First, we specify the noise tolerance $\zeta$ and the constant $C$. After generating the training data and the (noise-free) test sample $\bm{y}_0$, we set $\bm{y} := \bm{y}_0 + \bm{z}$, where the entries of $\bm{z}$ are drawn from $\mathcal{N}(0,\zeta/(2\sqrt{m}))$. Then $\|\bm{z}\|_2 \leq \zeta$ with probability at least $95\%$. From here, we set $\epsilon := C\zeta$ and find
\begin{equation} \label{eq:cs_l1_error_2}
\bm{\alpha}_{1,\epsilon} := \arg \min_{\alpha} \|\bm{\alpha}\|_1 \text{ subject to } \|\bm{y} - X_\mathrm{tr}\bm{\alpha}\|_2 \leq \epsilon.
\end{equation}
We set $\zeta = 0.01$, and we used two values of $C$: $C=5$, and $C=10$, producing the $(\zeta,\epsilon)$-pairs $(0.01,0.05)$ and $(0.01,0.1)$. In order to ensure that the reconstruction error $\|X_\mathrm{tr}\bm{\alpha}_{1,\epsilon} - \bm{y}\|_2$ was less than $\epsilon$, we used the basis pursuit denoising version of the $\ell^1$-minimization algorithm SPGL1 \cite{fried:par,fried:spgl1}. 

In Figure \ref{fig:spgl1_noise}, we plot the normalized $\ell^2$-error, the fraction of off-support nonzeros, the normalized $\ell^2$ and $\ell^1$-norms of the off-class support vectors, and the mutual coherence $\mu(X_\mathrm{tr})=:\mu$. Note that we modify the corresponding definitions given in Section \ref{fig:spgl1_noise} (for $\err_{\ell^2}$, $\err_\mathrm{supp}$, $\err_{\mathrm{supp}(\ell^2)}$ and $\err_{\mathrm{supp}(\ell^1)}$) to use $\bm{\alpha}_{1,\epsilon}$ instead of $\bm{\alpha}_1$ and do not change the notation. We report the averages over 1000 trials at each stage. 

As we can see, there is clearly a relationship between $\err_{\ell^2}$ and the amount of correlation in the data. As the data became increasingly bouquet-shaped, both within each class and as a dataset as a whole, the normalized $\ell^2$-distance between $\bm{\alpha}_{1,\epsilon}$ and $\bm{\alpha}_0$ increased. The rate of increase of this error appears to be related the redundancy of the database. It is evident that mutual coherence was not a good indicator of $\err_{\ell^2}$, as the plots show that $\err_{\ell^2}$ could be relatively low even after $\mu(X_\mathrm{tr})$ had reached its maximum value. 

Perhaps more importantly, the supports of the solution vectors $\bm{\alpha}_{1,\epsilon}$ and $\bm{\alpha}_0$ were nearly identical at stages greater than 1. This means that the vast majority of nonzeros in $\bm{\alpha}_0$ occurred at positions corresponding to class 1 training samples. To fix the small support errors, we could use the thresholding technique discussed in the previous section, choosing $\tau$ by trial-and-error. This method could also be used to ameliorate the numerous support errors for the databases DB-2 and DB-3 at Stage 1. In this case, we found that $\tau = 0.01$ greatly reduced the Stage 1 support errors but did not eliminate them completely. 

Lastly, we consider the differences between setting $C=5$ and $C=10$. For the most part, the plots are quite similar. We see that setting $C=5$ produced slightly better recovery than $C=10$ at Stage 1, but in general, the normalized $\ell^2$-error $\err_{\ell^2}$ was the same for the two settings at higher stages. This is informative, as it tells us that $\ell^1/\ell^0$-recovery on this kind of highly-correlated data is potentially quite robust to the setting of $C$ in the approximation error tolerance $\epsilon = C\zeta$. Once again, we attribute this to the class structure of the data making it easier for the $\ell^1$-minimization algorithm to find the class solution $\bm{\alpha}_0$. 

\begin{figure}
\centering
\begin{subfigure}[b]{0.35\textwidth}
\centering
	\includegraphics[width=\linewidth]{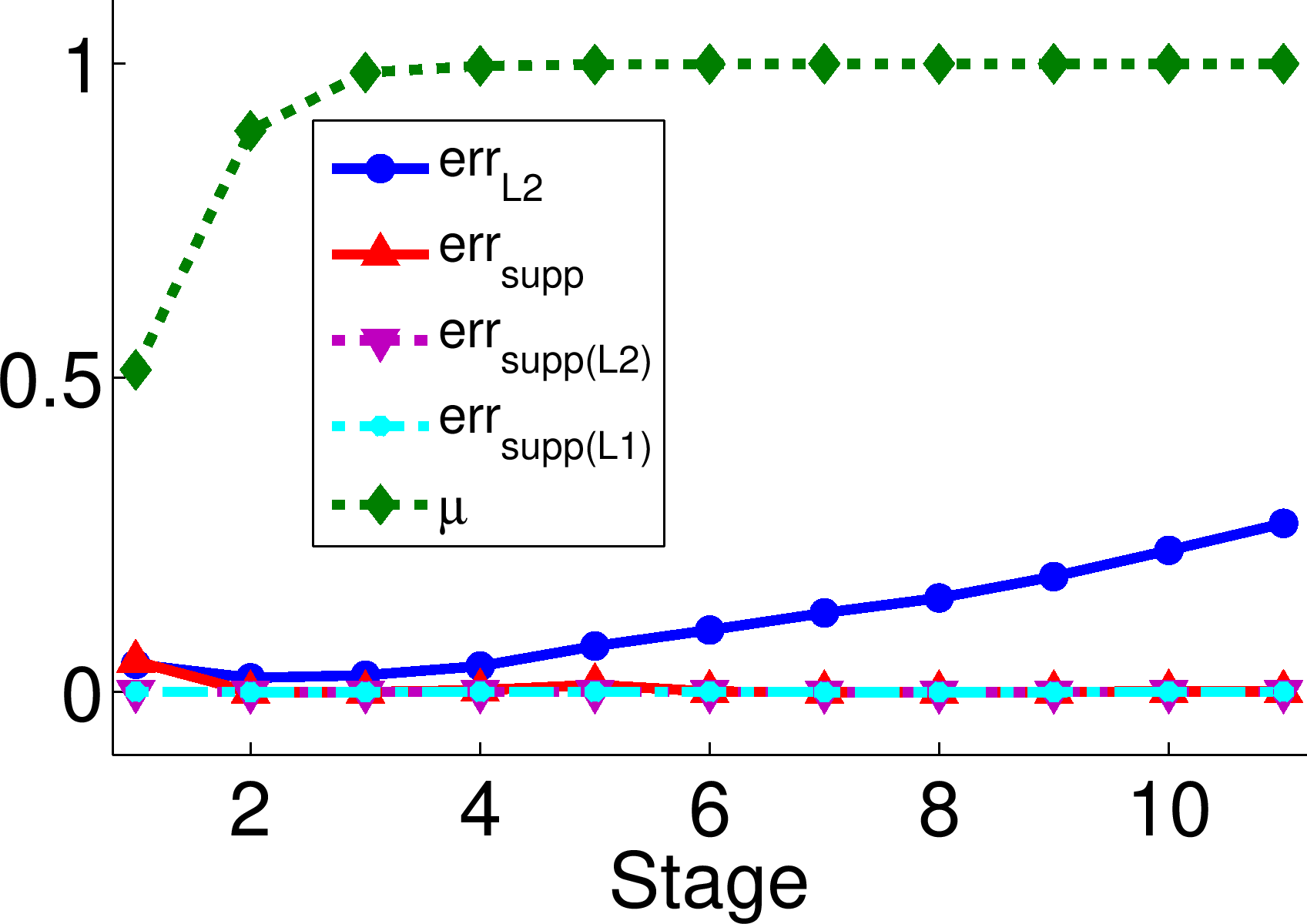}
		\caption{DB-1, $C=5$}
	\label{fig:DB_1_spgl1_noise_C_5} \hfill%
\end{subfigure}
\begin{subfigure}[b]{0.35\textwidth}
\centering
	\includegraphics[width=\linewidth]{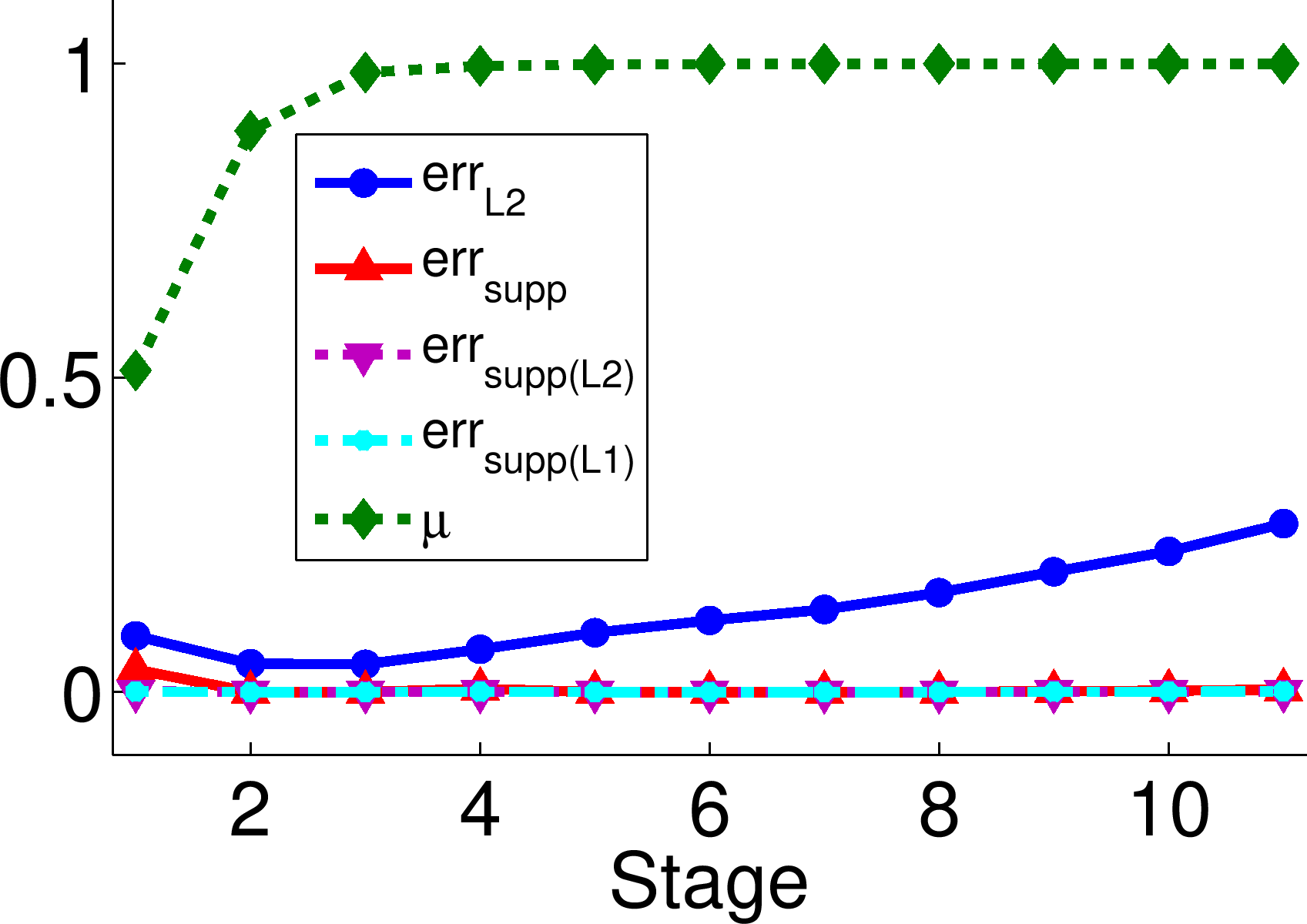}
		\caption{DB-1, $C=10$}
	\label{fig:DB_1_spgl1_noise_C_10} \hfill%
\end{subfigure}
\begin{subfigure}[b]{0.35\textwidth} 
\centering
	\includegraphics[width=\linewidth]{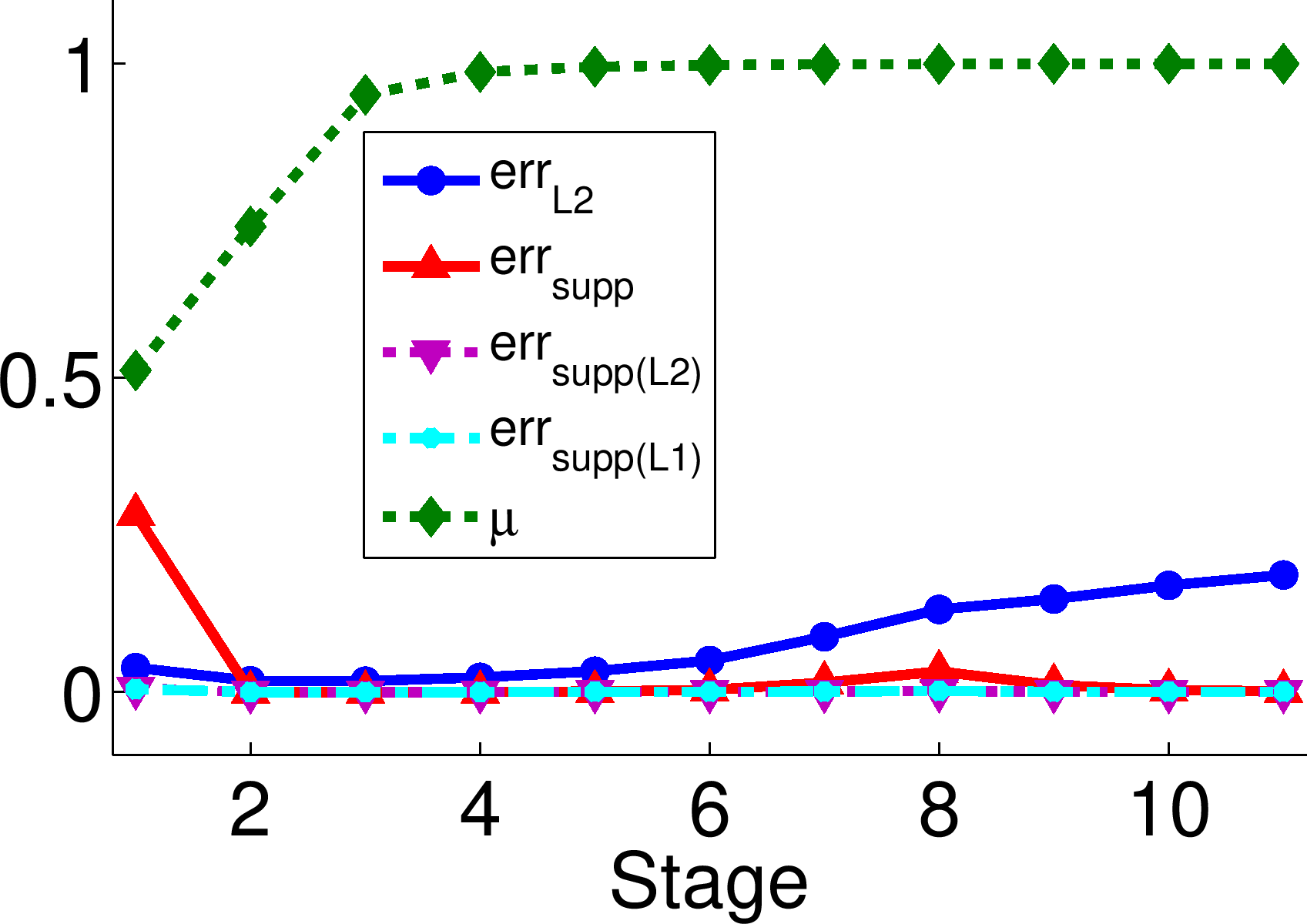}
		\caption{DB-2, $C=5$}
	\label{fig:DB_2_spgl1_noise_C_5}
	\hfill%
\end{subfigure}
\begin{subfigure}[b]{0.35\textwidth} 
\centering
	\includegraphics[width=\linewidth]{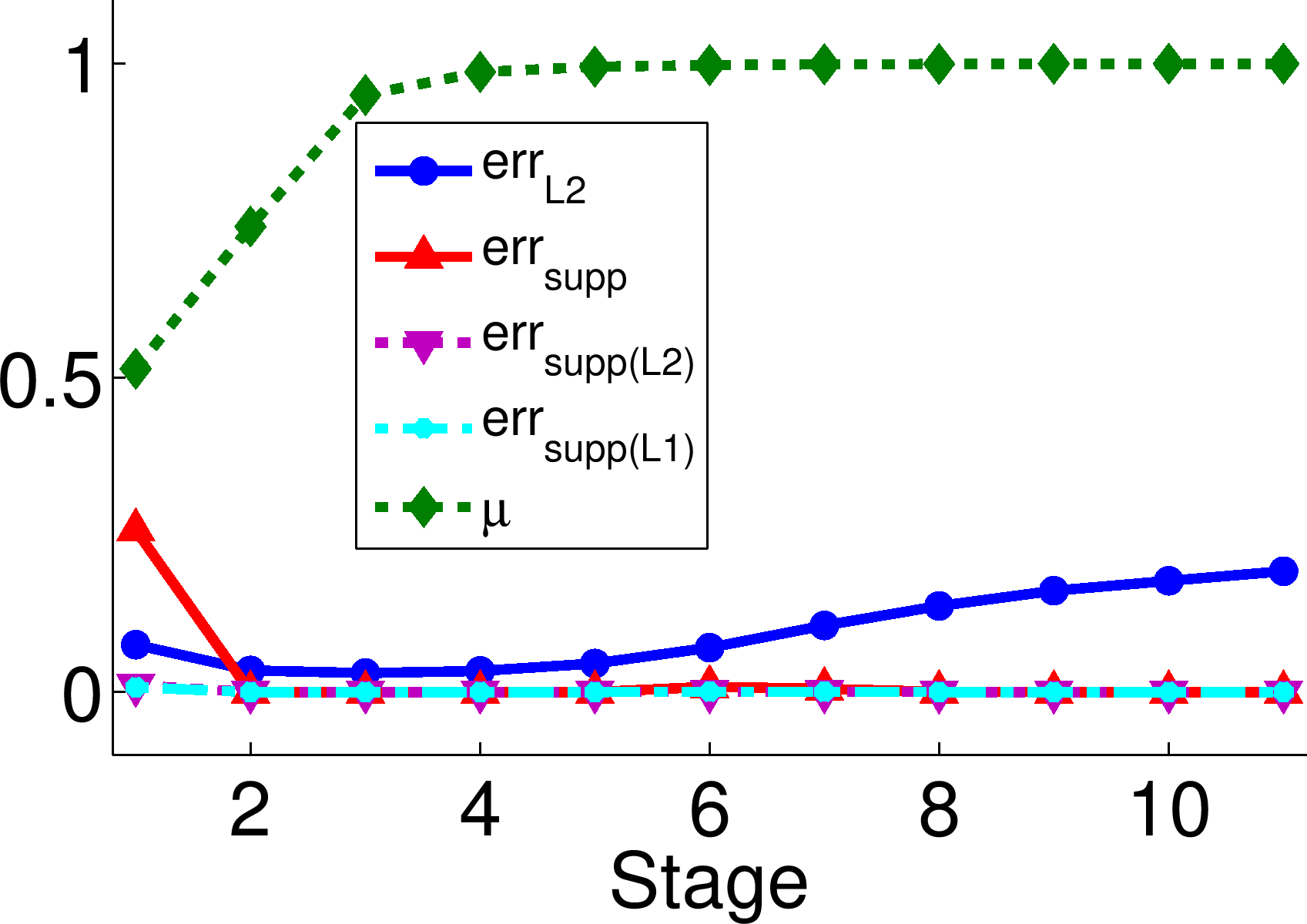}
		\caption{DB-2, $C=10$}
	\label{fig:DB_2_spgl1_noise_C_10}
	\hfill%
\end{subfigure}
\begin{subfigure}[b]{0.35\textwidth}
\centering
	\includegraphics[width=\linewidth]{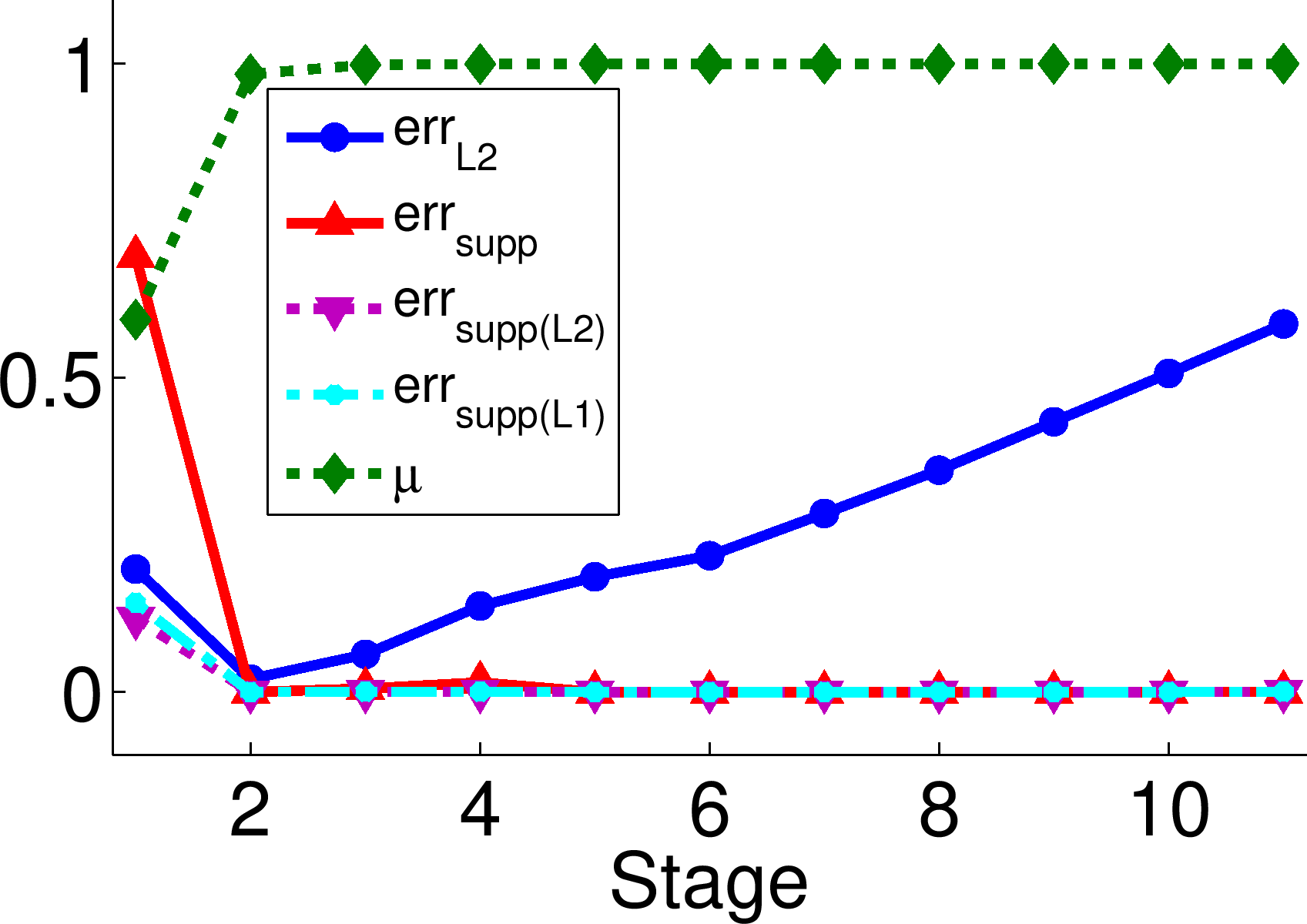}
		\caption{DB-3, $C=5$}
	\label{fig:DB_3_spgl1_noise_C_5} \hfill%
\end{subfigure}
\begin{subfigure}[b]{0.35\textwidth}
\centering
	\includegraphics[width=\linewidth]{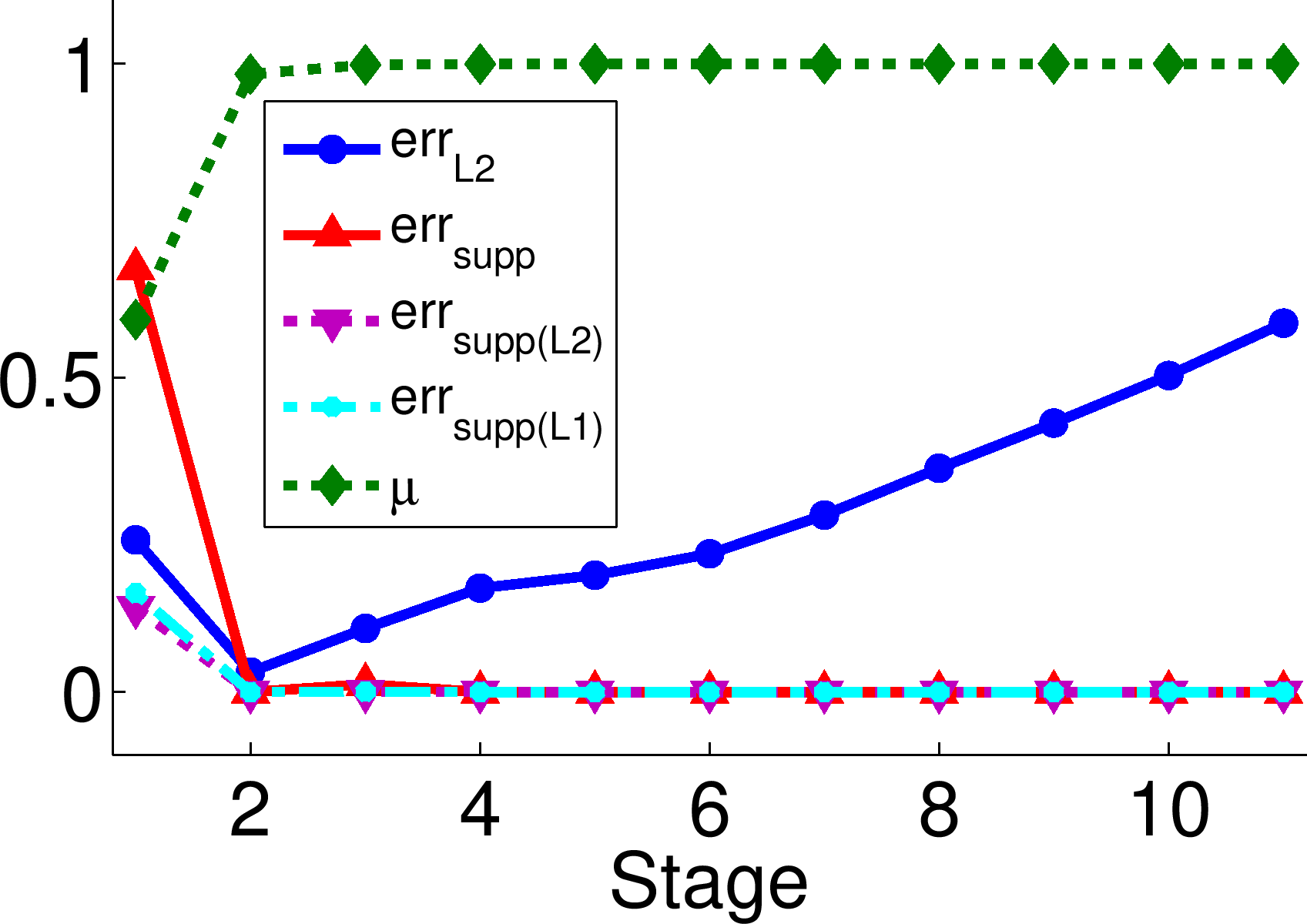}
		\caption{DB-3, $C=10$}
	\label{fig:DB_3_spgl1_noise_C_10} \hfill%
\end{subfigure}
\begin{subfigure}[b]{0.35\textwidth} 
\centering
	\includegraphics[width=\linewidth]{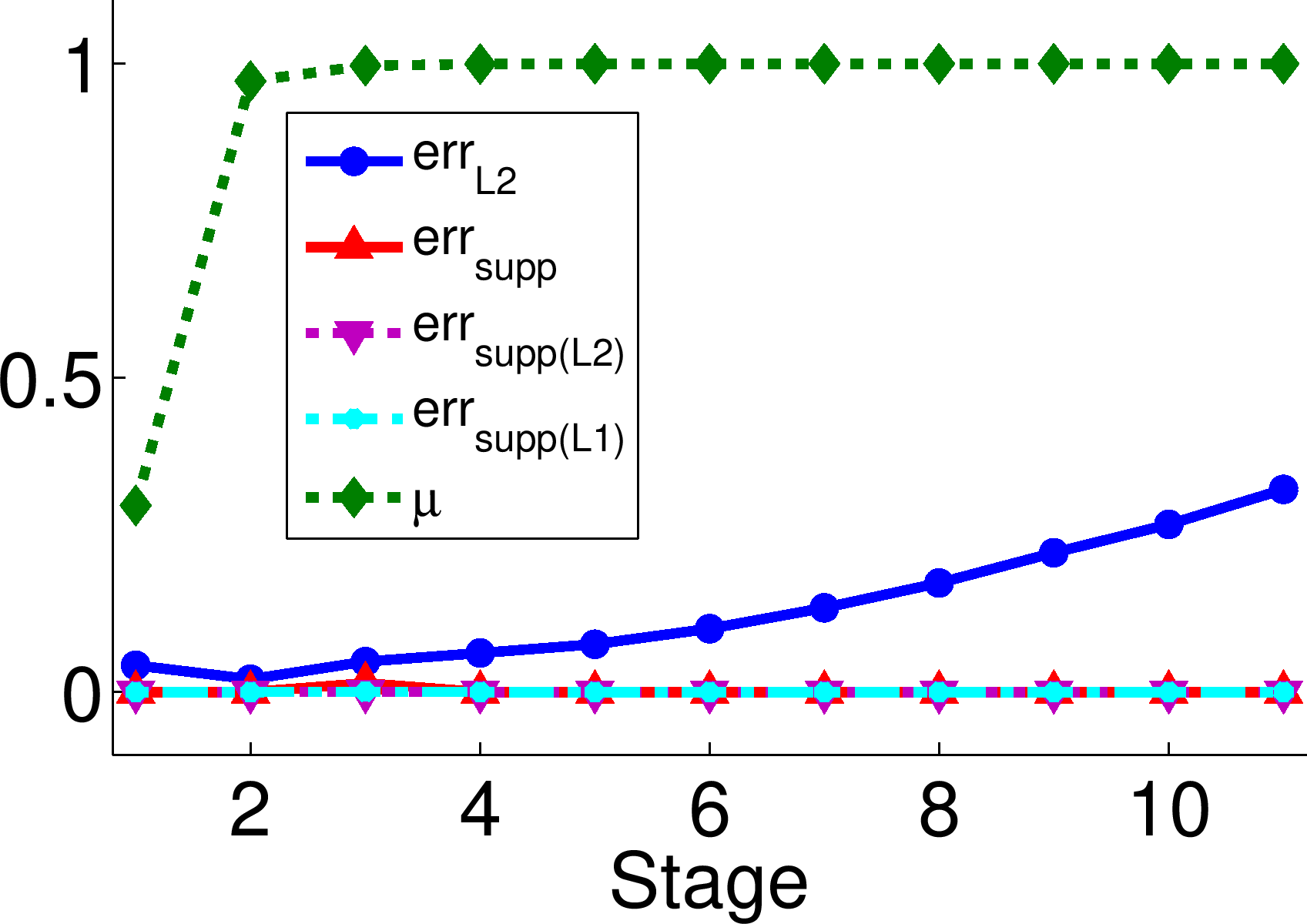}
		\caption{DB-4, $C=5$}
	\label{fig:DB_4_spgl1_noise_C_5}
	\hfill%
\end{subfigure}
\begin{subfigure}[b]{0.35\textwidth} 
\centering
	\includegraphics[width=\linewidth]{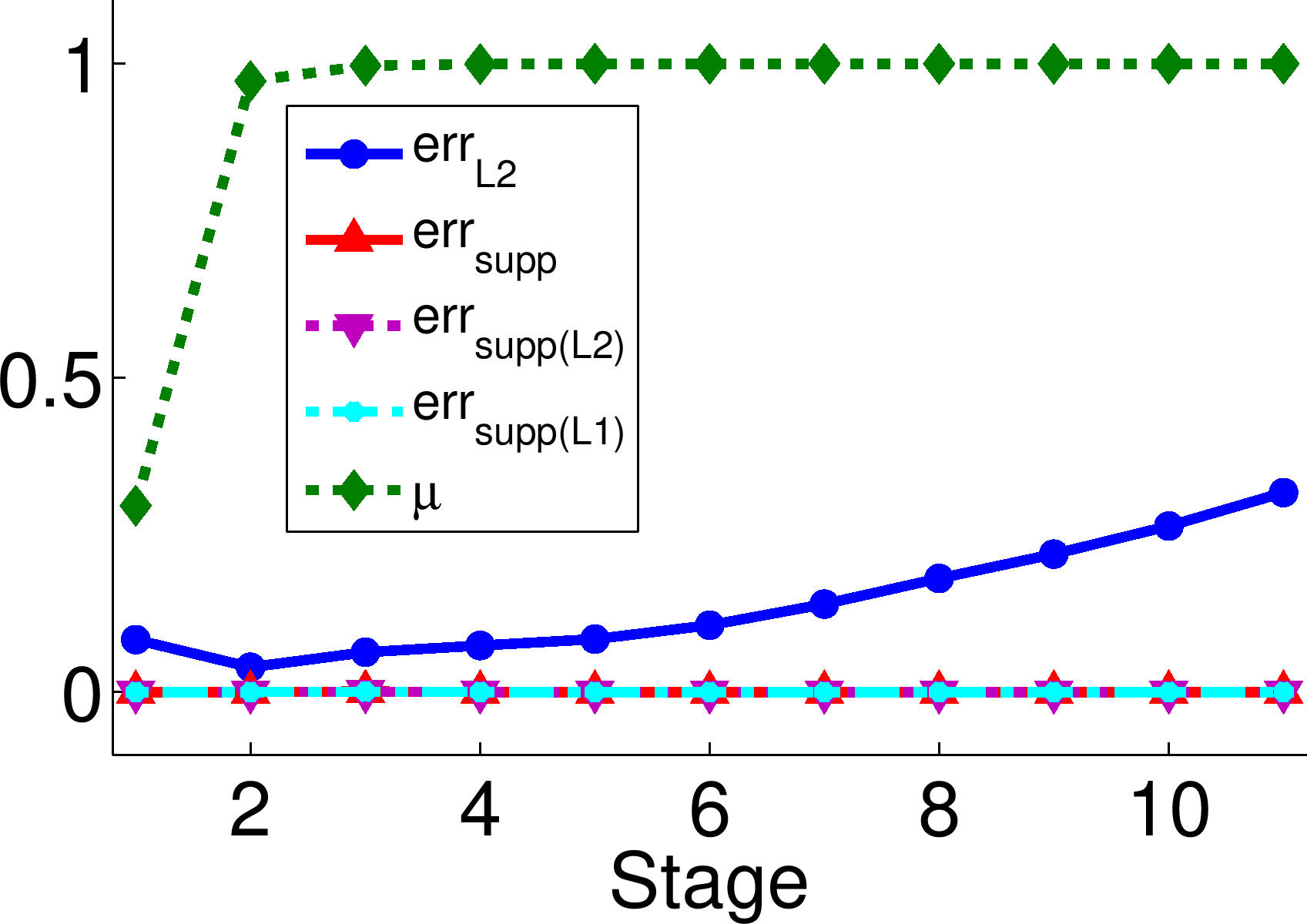}
		\caption{DB-4, $C=10$}
	\label{fig:DB_4_spgl1_noise_C_10}
	\hfill%
\end{subfigure}
\caption[Recovery results on the random database model in the case of noise]{Recovery results on the random database model in the case of noise.}
\label{fig:spgl1_noise}
\end{figure}

\textit{Effect on classification:} As in the noise-free scenario, we compute the class residuals $\err_l(\bm{y}):=\|\bm{y} - X_\mathrm{tr}\delta_l(\bm{\alpha}_{1,\epsilon})\|_2$ for each of the four databases at each of the 11 values of coherence. Specifically, we are interested in how close the class 1 residual is to 0 (signifying perfect reconstruction of $\bm{y}$ using class 1) and how close the next smallest class residual $\min_{2\leq l \leq L} \err_l(\bm{y})$ is to this value. If it is close, then it means that we should have less confidence in the SRC classification assignment than if these quantities were far apart, i.e., that SRC distinguishes the correct class less clearly.

The average relevant class residuals (over 1000 trials) are displayed in Table \ref{tab:class_resids_noise}. Since the results for $C=5$ and $C=10$ were very similar, we only include the results for $C=5$. 

\begin{table*}[!htb] 
\centering
\resizebox{\columnwidth}{!}{%
\begin{tabular}{|c|c|c|c|c|c|c|c|c|}
\hline
& \multicolumn{2}{|c|}{DB-1} & \multicolumn{2}{|c|}{DB-2} & \multicolumn{2}{|c|}{DB-3} & \multicolumn{2}{|c|}{DB-4} \\
\hline
Stage & $\err_1(\bm{y})$ & $\displaystyle\min_{2\leq l \leq L} \err_l(\bm{y})$ & $\err_1(\bm{y})$ & $\displaystyle\min_{2\leq l \leq L} \err_l(\bm{y})$ & $\err_1(\bm{y})$ & $\displaystyle\min_{2\leq l \leq L} \err_l(\bm{y})$ & $\err_1(\bm{y})$ & $\displaystyle\min_{2\leq l \leq L} \err_l(\bm{y})$ \\
\hline
1	& 0.05	&	1.27	&	0.06	&	1.81	&	0.28	&	1.80	&	0.05	&	1.27	\\
2	& 0.05	&	2.30	&	0.05	&	3.77	&	0.05	&	4.92	&	0.05	&	2.47	\\
3	& 0.05	&	2.47	&	0.05	&	4.76	&	0.04	&	4.96	&	0.04	&	2.46	\\
4	& 0.05	&	2.52	&	0.05	&	4.92	&	0.04	&	5.00	&	0.05	&	2.53	\\
5	& 0.05	&	2.51	&	0.05	&	5.04	&	0.05	&	5.02	&	0.05	&	2.49	\\
6	& 0.05	&	2.50	&	0.05	&	4.99	&	0.05	&	5.03	&	0.05	&	2.51	\\
7	& 0.05	&	2.51	&	0.05	&	5.01	&	0.05	&	4.99	&	0.05	&	2.50	\\
8	& 0.05	&	2.53	&	0.05	&	4.99	&	0.05	&	4.99	&	0.05	&	2.48	\\
9	& 0.05	&	2.51	&	0.05	&	5.00	&	0.05	&	5.05	&	0.05	&	2.50	\\
10	& 0.05	&	2.56	&	0.05	&	4.96	&	0.05	&	4.97	&	0.05	&	2.52	\\
11	& 0.05	&	2.50	&	0.05	&	5.01	&	0.05	&	5.00	&	0.05	&	2.50	\\

\hline
\end{tabular}}
\caption[Average SRC class residuals on the random database model in the case of noise]{Average SRC class residuals $\err_1(\bm{y}) := \|\bm{y} - X_\mathrm{tr}\delta_1(\bm{\alpha}_{1,\epsilon})\|_2$ and $\min_{2\leq l \leq L} \{\err_l(\bm{y}):=\|\bm{y} - X_\mathrm{tr}\delta_l(\bm{\alpha}_{1,\epsilon})\|_2\}$ (over 1000 trials) on the random database model in the case of noise.}
\label{tab:class_resids_noise}
\end{table*}

Noting that $\epsilon:= C\zeta = 0.05$, we see that the ideal classification scenario occurred in nearly all cases. That is, since $\err_1(\bm{y})\approx \epsilon$ almost always, class 1 training samples made up essentially the entire approximation of the test sample. The exception, again, was DB-3 at Stage 1, for which $\err_1(\bm{y})$ and $\min_{2\leq l \leq L} \err_l(\bm{y})$ were the least separated (i.e., relatively close in value). However, correct classification would still be achieved. 

The reader might notice that the quantities $\min_{2\leq l \leq L} \err_l(\bm{y})$ at Stage 1 are lower than at higher stages; this is because 
\begin{align*}
\min_{2\leq l \leq L} \err_l(\bm{y}) = \min_{2\leq l \leq L} \|\bm{y}-X_\mathrm{tr}\delta_l(\bm{\alpha}_{1,\epsilon})\|_2 \approx \|\bm{y}-X_\mathrm{tr}\bm{0}\|_2= \|\bm{y}\|_2 
\end{align*}
is smaller in this case, due to the class 1 training samples being uniformly distributed on $S^{m-1}$.

\subsection{Summary}

In this section, we designed a model, inspired by the work of Wright and Ma \cite{wri:dense}, for facial recognition and other similar classification databases. To model the mechanisms of SRC \cite{wri:src}, we randomly generated a test sample as a non-negative linear combination of a single class's training samples. We computed the corresponding (sparse) coefficient vector and then ran experiments to test whether or not $\ell^1$-minimization, as it is used in the SRC setting, could recover this vector under increasing values of correlation, both within-class and in the database as a whole.

The results demonstrate that the within-class correlation in this model consistently improves $\ell^1/\ell^0$-recovery when compared to randomly-generated uniform data on the sphere. This is an important empirical result, as this latter type of data is one of the ``golden children'' of $\ell^1/\ell^0$-equivalence; i.e., these type of dictionaries produce, in some sense, ideal recovery (see, e.g., the work of Donoho \cite{don:und}). However, those results are strongly asymptotic, and our experiments dealt only with small databases. More work is needed to determine if our findings hold up on larger datasets. 

It is not too surprising, given the mutual coherence recovery condition studied in the last section, that very large correlation in the database as a whole can degrade recovery. When the global correlation in our model was very high, so that the classes, or sub-bouquets, began to overlap, we saw that $\ell^1$-minimization did not find the correct support of the sparse solution. However, we showed that the support could be completely fixed by a simple thresholding technique.

We also demonstrated that $\ell^1$-minimization achieved a good approximation of the sparsest solution in the case of noise in our model. Though the accuracy of the approximation generally decreased as the data became more correlated, this deterioration was slow compared to the increase in mutual coherence of the database. Further, the amount of $\ell^2$-error appeared to be less dependent on the relationship between noise $\zeta$ and error tolerance $\epsilon$ than it was on the amount of redundancy in the database.

Assuming that test samples truly are linear combinations of their ground truth class training samples, as is done in SRC, these experiments suggest that $\ell^1$-minimization will recover this class representation, leading to good classification in SRC and similar classification algorithms. This of course assumes that our model is appropriate for the given dataset, and that its values of $N_0$, $m$, and $L$ are comparable to those used in our experiments, so that the class representation is sparse. 

Our results are purely empirical; however, they strongly suggest that theoretical recovery results are possible. We conjecture that exact recovery can be provably obtained whenever the classes are sufficiently non-overlapping and that a similar result can be obtained in the case of noise. The amount of redundancy in the database and the number of classes will play a crucial role in this analysis.

Finally, though we explicitly modeled the cone structure of facial images, our results are likely applicable to other areas of classification as well. In particular, as long as it is assumed that the training samples within each class are highly correlated, we could amend our model so that the sign of each training sample was chosen randomly and so that the test sample was generated in the linear (not necessarily positive) span of its same-class training samples. However, since $\ell^1$-minimization is invariant to multiplication of the dictionary elements by $\pm 1$, we suspect that our results would be the same.

\section{Proving Equivalence via Nonlinear Embedding}\label{MCD_Project_3}

\subsection{The Idea}

As we have seen, class structure often results in the training set having high mutual coherence, making it impossible to apply the mutual coherence recovery guarantees given in Theorems \ref {thm:mc} and \ref{thm:mc_noise} in the context of SRC. We consider a resolution to this conflict through the use of \emph{more space}. That is, if we had many ``extra'' dimensions, the data in each class could conceivably be spread out and we would still have enough ``room'' to keep the classes well-separated from each other, allowing for both low mutual coherence and class-structured data.

Let us illustrate this in low dimension. Consider the toy example in which we have $L=2$ classes, each containing $2$ samples in $\mathbb{R}^m$ for $m=2$. First, let the goal be to arrange the samples in a way that minimizes their mutual coherence while at the same time provides some indication of class. Assuming that the samples must be normalized (as in SRC), this class-structure criterion can reasonably be interpreted as the requirement that 
\begin{align*}
\Big|\ip{\bm{x}_i^{(1)},\bm{x}_j^{(1)}}\Big| > \Big|\ip{\bm{x}_i^{(1)},\bm{x}_j^{(2)}}\Big| \text{ and }
\Big|\ip{\bm{x}_i^{(2)},\bm{x}_j^{(2)}}\Big| > \Big|\ip{\bm{x}_i^{(2)},\bm{x}_j^{(1)}}\Big|,
\end{align*}
for $i,j \in \{1,2\}$. In other words, the samples in the same class must be more correlated than samples in different classes.

One solution is given by the class matrices

\begin{align*}
X^{(1)} = \big[\bm{x}_1^{(1)},\bm{x}_2^{(1)}\big] = \begin{bmatrix} 1 & \cos(\frac{\pi}{4}-\epsilon)\\ 0 & \sin(\frac{\pi}{4}-\epsilon) \end{bmatrix}, \;\;\; X^{(2)} = \big[\bm{x}_1^{(2)},\bm{x}_2^{(2)}\big] =\begin{bmatrix} 0 & \cos(\frac{3\pi}{4}-\epsilon)\\ 1 & \sin(\frac{3\pi}{4}-\epsilon) \end{bmatrix},
\end{align*}
where $\epsilon > 0$ is small.
The magnitude of the inner product between samples in the same class is $\cos(\frac{\pi}{4}-\epsilon)$, and that of samples in different classes is $\cos(\frac{\pi}{4}+\epsilon)$. Clearly, the former quantity is the mutual coherence of the dataset. This arrangement is illustrated in Figure \ref{fig:2D} with $\epsilon = 0.2$. 

Now, consider the same problem but in the case that we are given a third dimension. It is clear that we will be able to decrease the mutual coherence of the dataset by moving samples into this extra space. One solution is given by the class matrices
\begin{align*}
X^{(1)} = \begin{bmatrix} 1 & \cos(\theta_1)\sin(\phi_1)\\ 0 & \sin(\theta_1)\sin(\phi_1) \\ 0 & \cos(\phi_1) \end{bmatrix}, \;\;\; X^{(2)} = \begin{bmatrix} 0 & \cos(\theta_2)\sin(\phi_2)\\ 1 & \sin(\theta_2)\sin(\phi_2) \\ 0 & \cos(\phi_2) \end{bmatrix},\\
\end{align*}
for $\theta_1 = \pi/4 - \epsilon$, $\theta_2 = \pi/4 + \epsilon$, $\phi_1 = 3\pi/4$, and $\phi_2 = \pi/4$.
The mutual coherence of the dataset is $ \cos(\frac{\pi}{4}-\epsilon)\sin(\frac{3\pi}{4}) = \sin(\frac{\pi}{4}+\epsilon)\cos(\frac{\pi}{4})$. This arrangement is illustrated in Figure \ref{fig:3D} with $\epsilon = 0.2$. 

For $\epsilon = 0.2$, for example, adding an additional dimension allows us to decrease the mutual coherence of the dataset from $\cos(\frac{\pi}{4}-\epsilon) \approx 0.8335$ to $ \cos(\frac{\pi}{4}-\epsilon)\sin(\frac{3\pi}{4}) \approx 0.5894$. This is a substantial decrease. 

\begin{figure}[!htb]
\hspace*{\fill}%
\centering
\subcaptionbox{$m= 2$, $\mu = 0.8335$\label{fig:2D}}
 [.49\linewidth]{\includegraphics[height=5cm]{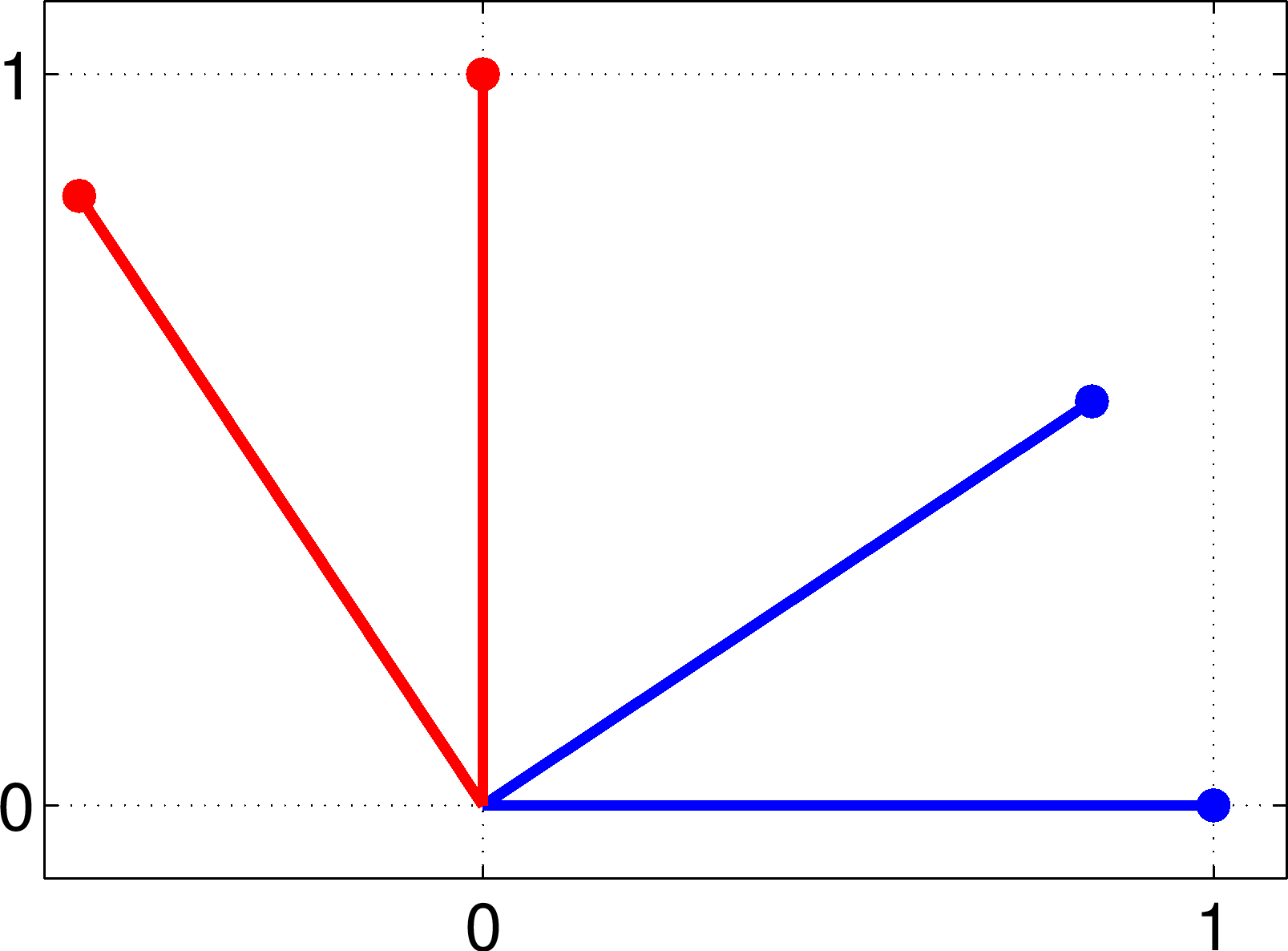}}
\hfill%
\subcaptionbox{$m=3$, $\mu = 0.5894$\label{fig:3D}}
 [.49\linewidth]{\includegraphics[height=6cm]{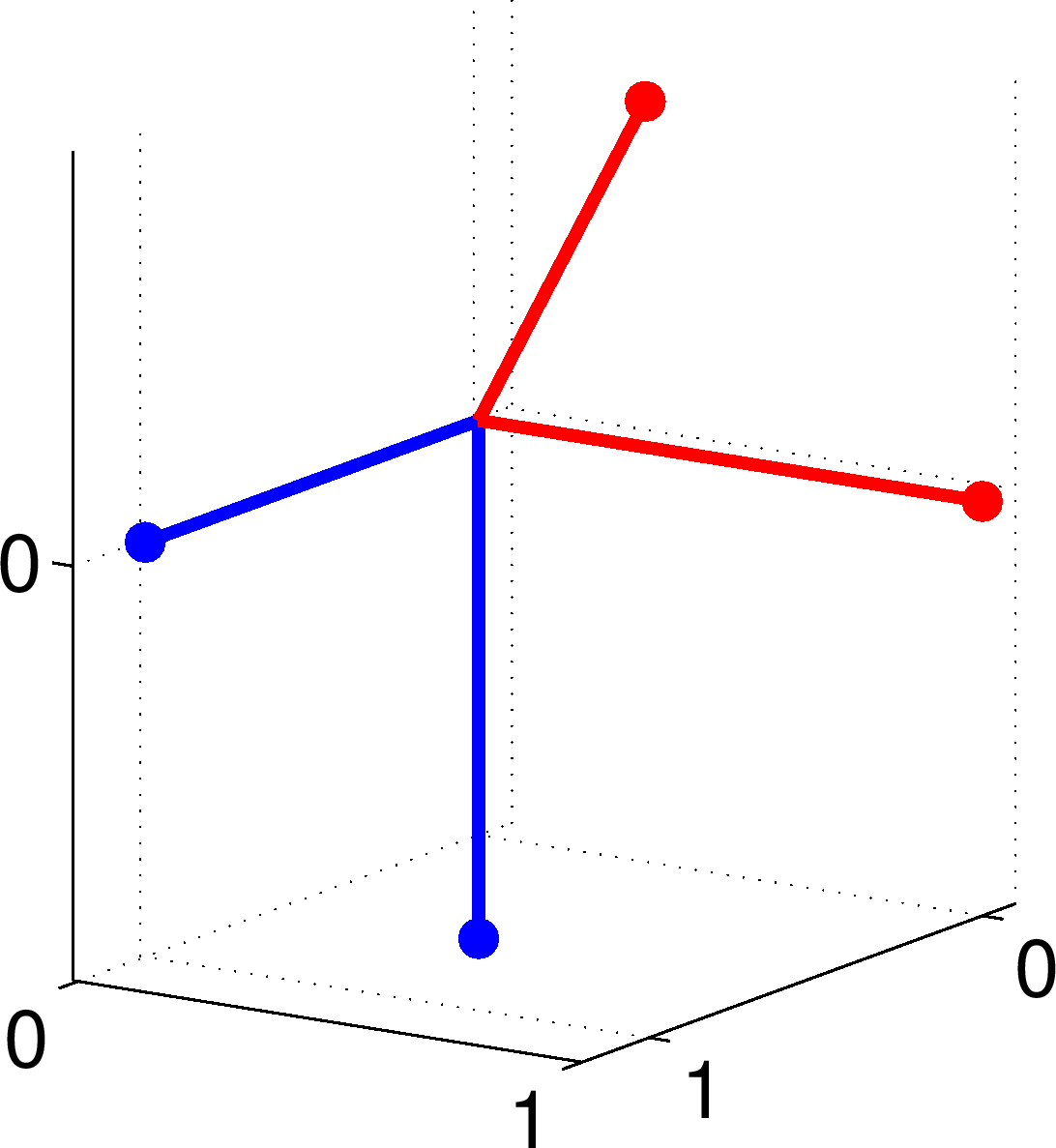}}
\caption[Illustration of decreasing the mutual coherence of a dataset by embedding it into higher dimension]{Illustration of decreasing the mutual coherence of a dataset by embedding it into higher dimension. (a) Samples in original space $\mathbb{R}^2$, (b) Samples in transform space $\mathbb{R}^3$. Colors denote classes. }
\label{fig:syn_tvs}
\hspace*{\fill}%
\end{figure}

\subsection{Formulation and Obstacles}

As discussed above, we consider \emph{forcing} the mutual coherence criterion of $\ell^1/\ell^0$-equivalence to hold via data transformation. Is it possible to learn a class-preserving transform from the training data and then classify test samples in a space in which $\ell^1$-minimization provably produces the sparsest solution? Such a transform would allow us to investigate the extent (if any) to which obtaining the sparsest solution affects SRC's classification accuracy.

For a transform $\phi$, set $\Phi(X_\mathrm{tr}) := [\phi(\bm{x}_1),\ldots,\phi(\bm{x}_{N_\mathrm{tr}})]$ for notional ease. Formally, we desire a transform $\phi^*: \mathbb{R}^m \rightarrow \mathbb{R}^{\tilde{m}}$ with $m < \tilde{m}$ that satisfies 
\begin{equation}\label{eq:explicit}
\phi^* = \arg\max_{\phi \in \mathcal{C}} f_\mathrm{cs}(\Phi(X_\mathrm{tr})) \text{ subject to } \mu(\Phi(X_\mathrm{tr}))\leq \tilde{\mu},
\end{equation}
where $\mathcal{C}$ is some compact set (so that $f_\mathrm{cs}$ obtains a maximum). Here, $f_\mathrm{cs}$ evaluates the amount of class structure in the transformed training set (``cs'' stands for ``class structure''). For example, $f_\mathrm{cs}$ might denote the inverse of the sum of within-class distances or the inverse of the Frobenius norm of the within-class scatter matrix used in \emph{linear discriminant analysis} \cite{fish:lda,rao:lda}. Clearly, $\tilde{\mu}$ is an upper bound on the mutual coherence of the transformed training set. Ideally, we want to choose $\tilde{\mu}$ small enough so that the $\ell^1/\ell^0$-equivalence condition in Theorem \ref{thm:mc} or Theorem \ref{thm:mc_noise} can be applied. 

We note that the desired transform $\phi^*$ must be a \emph{nonlinear} transform, otherwise the dimension of the subspace containing the embedded samples will be no greater than that of the original space ($m$). Thus we will have failed to utilize the extra space (needed to achieve our objective) awarded by the increased ambient dimension $\tilde{m}$.

Though this setup seems promising, we have a problem when we consider how the transform $\phi^*$ should treat (new) test samples. In order for us to classify the test sample in the transform space, $\phi^*$ must treat $\bm{y}$ similarly to a training sample in its own class. However, this leads to the following conflict:

\begin{myprop} \label{prop:up_bound_restate}
Let $\phi: \mathbb{R}^m \rightarrow \mathbb{R}^{\tilde{m}}$ be a data transform and $\bm{y}$ a test sample so that $\mu([\Phi(X_\mathrm{tr}),\phi(\bm{y})]) \approx \mu(\Phi(X_\mathrm{tr})) \leq \tilde{\mu}$ for some $\tilde{\mu}$, i.e., the transform $\phi$ \emph{treats} test samples in the same way as their same-class training samples. For any vector $\bm{\alpha} \in \mathbb{R}^m$, if $\bm{\alpha}$ satisfies $\|\bm{\alpha}\|_0<\frac{1}{2}\Big(1+\frac{1}{\tilde{\mu}}\Big)$, then
\begin{align*}
\Phi(X_\mathrm{tr})\bm{\alpha} \neq \phi(\bm{y})
\end{align*}
with high probability.
\end{myprop}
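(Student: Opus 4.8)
The plan is to recognize this proposition as the transform-space counterpart of Corollary~\ref{cor:y_dist}: once we treat $\Phi(X_\mathrm{tr})$ as the dictionary and $\phi(\bm{y})$ as the test sample, the deterministic heart of the argument transfers essentially verbatim, and the only genuinely new ingredient is converting the informal ``$\approx$'' in the hypothesis into a high-probability statement about the augmented mutual coherence. Accordingly I would split the argument into a deterministic implication, conditioned on a coherence event, and a probabilistic estimate that this event occurs.

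For the deterministic core, set $\mu^+ := \mu([\phi(\bm{y}),\Phi(X_\mathrm{tr})])$ and suppose, toward a contradiction, that $\Phi(X_\mathrm{tr})\bm{\alpha} = \phi(\bm{y})$ for some $\bm{\alpha}$ (indexing the columns of $\Phi(X_\mathrm{tr})$) with $k := \|\bm{\alpha}\|_0 < \tfrac12(1+\tilde\mu^{-1})$. Since scaling affects neither the support nor the span relation, I may assume $\|\phi(\bm{y})\|_2=1$, so that $[\phi(\bm{y}),\Phi(X_\mathrm{tr})]$ has normalized columns and $\mu^+$ is well defined. The nonzero entries of $\bm{\alpha}$ single out columns of $\Phi(X_\mathrm{tr})$ whose span contains $\phi(\bm{y})$; passing to a maximal linearly independent subset of that support gives $k_0 \le k$ linearly independent columns of $\Phi(X_\mathrm{tr})$ that span the additional, distinct column $\phi(\bm{y})$. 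Applying Corollary~\ref{cor:spark} to $[\phi(\bm{y}),\Phi(X_\mathrm{tr})]$ forces $k_0 \ge \tfrac12(1+(\mu^+)^{-1})$. The conclusion then follows from a single monotonicity step, using that $\tilde\mu \mapsto \tfrac12(1+\tilde\mu^{-1})$ is decreasing: on the coherence event $\{\mu^+ \le \tilde\mu\}$ we have $(\mu^+)^{-1} \ge \tilde\mu^{-1}$, whence $k \ge k_0 \ge \tfrac12(1+(\mu^+)^{-1}) \ge \tfrac12(1+\tilde\mu^{-1})$, contradicting the sparsity bound. Thus, on $\{\mu^+ \le \tilde\mu\}$, no sufficiently sparse $\bm{\alpha}$ can satisfy $\Phi(X_\mathrm{tr})\bm{\alpha}=\phi(\bm{y})$.

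It remains to argue that the coherence event holds with high probability, which is precisely the content of ``$\phi$ treats test samples in the same way as their same-class training samples.'' I would model $\bm{y}$ as drawn from the same distribution as the training samples in its ground-truth class, so that $\mu^+ \le \tilde\mu$ fails only if $\max_{1\le i \le N_\mathrm{tr}} |\ip{\phi(\bm{y}),\phi(\bm{x}_i)/\|\phi(\bm{x}_i)\|_2}|$ exceeds $\tilde\mu$. The plan is to control a single correlation $|\ip{\phi(\bm{y}),\phi(\bm{x}_i)}|$ by a concentration argument and then take a union bound over the $N_\mathrm{tr}$ columns, so that the excess-coherence probability is at most $C\exp(-\gamma m)$ for constants depending on the class distribution and $\tilde\mu$, in the spirit of the bouquet-model estimate in Theorem~\ref{thm:wri_mu}. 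A separate, benign point is that the ``distinct column'' hypothesis of Corollary~\ref{cor:spark} requires $\phi(\bm{y})$ to differ from every training column; under any continuous model this holds almost surely and is absorbed into the high-probability statement. Combining the two pieces, the desired inequality holds on the intersection of these events, whose probability tends to one.

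The main obstacle is the probabilistic step, not the deterministic one. The deterministic implication is an immediate transfer of Corollaries~\ref{cor:spark} and~\ref{cor:y_dist} and requires only the monotonicity observation above. By contrast, turning ``$\approx$'' into a quantitative bound forces one to commit to a distributional model for $\phi(\bm{y})$ and to prove concentration of the augmented coherence below $\tilde\mu$; without such a model the statement can only be asserted as a hypothesis. The difficulty is compounded by the nonlinearity of $\phi$ (required, as noted, so that the embedded data genuinely occupies the extra dimensions), since the images $\phi(\bm{x}_i)$ need not be independent or Gaussian, so the clean inner-product concentration underlying Theorem~\ref{thm:wri_mu} is not directly available and must be replaced by an assumption tailored to the learned transform.
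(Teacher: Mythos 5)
Your deterministic core is exactly the paper's argument: the paper proves this proposition in a single line, as ``a direct consequence of Corollary~\ref{cor:y_dist},'' and Corollary~\ref{cor:y_dist} is itself proven by applying Corollary~\ref{cor:spark} to the augmented matrix $[\phi(\bm{y}),\Phi(X_\mathrm{tr})]$ --- precisely the reduction you carry out, with your monotonicity step in $\tilde{\mu}$ implicit there. Where you diverge is in the treatment of the ``with high probability'' qualifier. The paper makes no concentration argument at all: the coherence event $\mu([\Phi(X_\mathrm{tr}),\phi(\bm{y})]) \approx \mu(\Phi(X_\mathrm{tr})) \leq \tilde{\mu}$ \emph{is the hypothesis} of the proposition (this is what ``$\phi$ treats test samples in the same way as their same-class training samples'' is defined to mean), and the ``whp'' in the conclusion simply inherits the informal ``$\approx$'' of that hypothesis; the discussion following Corollary~\ref{cor:y_dist} motivates why the hypothesis is plausible when test samples share their class's distribution, but the paper never fixes a distributional model or proves an estimate of the form $1-C\exp(-\gamma m)$. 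So your proposed second stage --- per-column concentration of $|\ip{\phi(\bm{y}),\phi(\bm{x}_i)}|$ plus a union bound --- is additional machinery the paper does not attempt, and your own caveat (that without a committed model for $\phi$ and the data this step ``can only be asserted as a hypothesis'') is exactly the paper's stance. One further benign technicality, which neither you nor the paper addresses: Corollary~\ref{cor:spark} is stated for $m<N$, whereas in transform space one may have $\tilde{m} \geq N_\mathrm{tr}+1$; this is harmless because the proof of Proposition~\ref{prop:main_res} underlying Corollary~\ref{cor:spark} only uses the projection onto the $k$-dimensional span followed by the Welch bound, which never requires the ambient matrix to be underdetermined. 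In short, your proof is correct, coincides with the paper's route on the mathematical core, and is more ambitious (but, by the nature of the statement, necessarily schematic) on the probabilistic side.
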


\begin{proof}
  This is a direct consequence of Corollary \ref{cor:y_dist}.
\end{proof}
This demonstrates the extent to which the assumptions in SRC conflict with the mutual coherence recovery guarantees. We \emph{cannot} construct a transform which can be applied to the entire dataset and allows for both sufficiently-low mutual coherence and adequate grouping of the classes, so that (transformed) test samples can be expressed as linear combinations of their same class training samples. 

However, we can still use the nonlinear transformation approach to \emph{study the relationship between classification accuracy of SRC and the sparsity of its solution vector}. We do this by artificially generating transformed test samples $\phi(\bm{y})$ as linear combinations of the columns of the transformed training data, in particular, with nonzero coefficients occurring at training samples in the ground truth class of $\bm{y}$. Thus we can ensure that $\Phi(X_\mathrm{tr})\bm{\alpha} = \phi(\bm{y})$ always has a solution. However, this will mean that we never actually compute or handle the test sample $\bm{y}$ in the original space and only \emph{assume that it exists implicitly}.

The reader may object that we cannot just make up test samples in this manner, and in general, this is absolutely true. Nevertheless, we stress that our goal in this experiment is not to classify an arbitrary database but to determine the effect of sparsity in SRC on classification accuracy, and so the implied existence of $\bm{y}$ is acceptable in this context. 

In the next two subsections, we reveal our approach to determining the desired transform $\phi^*$ and further discuss the consequences of Proposition \ref{prop:up_bound_restate} (and our approach to handling them) in this particular context.

\subsection{Using Gaussian Kernels}\label{sec:method}

Rather than constructing an explicit transform, we consider the reduction of mutual coherence via the so-called \emph{kernel trick.} We will use the \emph{Gaussian kernel} as a method of controlling the mutual coherence of the transformed training data. 

To review, the kernel trick allows us to perform operations in a space of dimension $\tilde{m} > m$ (possibly infinite-dimensional) without having to actually compute the transformed samples. The ``trick'' is to work only with the inner-products between transformed samples, which are given to us by some kernel function $\kappa:\mathbb{R}^m\times \mathbb{R}^m \rightarrow \mathbb{R}$. More formally, denote the transform by $\phi_\kappa$. We define the inner-product in the kernel space as
\begin{align*}
\ip{\phi_\kappa(\bm{x}_i),\phi_\kappa(\bm{x}_j)}: = \kappa(\bm{x}_i,\bm{x}_j),
\end{align*}
for $1\leq i,j\leq N_\mathrm{tr}$. The kernel function $\kappa$ should satisfy \emph{Mercer's condition}\footnote{The kernel $\kappa$ satisfies Mercer's condition if $\iint{\kappa(\bm{x},\bm{y}) g(\bm{x})g(\bm{y})\dd{\bm{x}}\dd{\bm{y}}}\geq 0$ for all square-integrable functions $g$.} so that $\kappa$ defines a proper inner-product \cite{vap:svm}. 

Kernel methods can be particularly effective when used to ``non-linearize'' linear classifiers. In \emph{kernel support vector machines}, for example, classes that are not linearly-separable in the original space may be separated linearly in kernel space (see the work of Boser et al.\ \cite{bos:ksvm}). Though SRC is not linear, it does assume a linear relationship between the test sample and the training samples in its ground truth class. When such a relationship does not hold in the original space, it may hold in kernel space given that an appropriate kernel is selected \cite{yin:ksrc}. 

Consider the Gaussian kernel, which is given by
\begin{align*}
\kappa(\bm{x}_i,\bm{x}_j) := \mathrm{e}^{-\frac{\|\bm{x}_i-\bm{x}_j\|_2^2}{\sigma^2}}.
\end{align*}
Essentially, the Gaussian kernel adds \emph{inverse exponential scaling }to the Euclidean distance function. Points close together obtain values of $\kappa$ that are close to 1, whereas points that are faraway from each other have kernel values approaching 0. The \emph{window} or \emph{width} parameter $\sigma$ controls the drop off (or steepness) of this trade-off. 

The Gaussian kernel is a natural choice for our transform, since the mutual coherence of the (transformed) training set will be given by
\begin{align*}
\mu(\Phi_\kappa(X_\mathrm{tr})) &= \max_{1\leq i \neq j \leq N_\mathrm{tr}} |\ip{\phi_\kappa(\bm{x}_i),\phi_\kappa(\bm{x}_j)}| \\
&= \max_{1\leq i \neq j \leq N_\mathrm{tr}} |\kappa(\bm{x}_i,\bm{x}_j)| \\
& = \max_{1\leq i \neq j \leq N_\mathrm{tr}} \mathrm{e}^{-\frac{\|\bm{x}_i-\bm{x}_j\|_2^2}{\sigma^2}}.
\end{align*}
Since the vectors $\hat{\bm{x}}_i$ and $\hat{\bm{x}}_j$ satisfying $\|\hat{\bm{x}}_i - \hat{\bm{x}}_j\|_2 = \max_{i \neq j}\|\bm{x}_i-\bm{x}_j\|_2$ are fixed for a given training set, the mutual coherence $\mu(\Phi_\kappa(X_\mathrm{tr}))$ depends completely on $\sigma$. Thus we can write $\mu(\Phi_\kappa(X_\mathrm{tr})) =: \mu = \mu(\sigma)$. To reiterate, we can completely control the mutual coherence of the data in the kernel space by adjusting $\sigma$. 

Our goal is to use the kernel trick with the Gaussian kernel to investigate what happens to the classification accuracy of SRC when $\ell^1/\ell^0$-equivalence is achieved in kernel space. We will do this as follows: In order to ensure $\ell^1/\ell^0$-equivalence, the Gaussian width parameter $\sigma$ must be chosen so that the mutual coherence is small enough that Theorem \ref {thm:mc} holds. Let us set
\begin{align*}
k_\mathrm{sup} := \frac{1}{2}\Big(1+\frac{1}{\mu}\Big).
\end{align*}
Clearly, $k_\mathrm{sup}$ completely depends on $\mu$, or equivalently, on $\sigma$. As $\sigma$ approaches 0, $k_\mathrm{sup} = k_\mathrm{sup}(\sigma)$ blows up. Suppose we choose $\sigma$ to be the largest value such that, with high probability (whp), the sparsity level $\|\bm{\alpha}_1\|_0$ is less than $k_\mathrm{sup}$, where $\bm{\alpha}_1 = \bm{\alpha}^* \in \mathbb{R}^{N_\mathrm{tr}}$ is the solution to the exact $\ell^1$-minimization problem in SRC given by Eq.~\eqref{eq:src_opt} (replacing $X_\mathrm{tr}$ with $\Phi_\kappa(X_\mathrm{tr})$ and $\bm{y}$ with $\phi_\kappa(\bm{y})$). This will ensure that $\bm{\alpha}_1$ is the sparsest solution by Theorem \ref{thm:mc}. Using ``mc'' to denote ``mutual coherence,'' we define
\begin{equation}\label{eq:def_sigma_mc}
\sigma_\mathrm{mc} := \max\Big\{\sigma : \|\bm{\alpha}_1\|_0 \stackrel{\mathrm{whp}}< k_\mathrm{sup} \Big\}.
\end{equation}
It follows that $\phi_\kappa = \phi^*$, our desired transform, when $\sigma = \sigma_\mathrm{mc}$ and the class-structure evaluation $f_\mathrm{cs}$ in Eq.~\eqref{eq:explicit} is defined as the \emph{minimum spread} of vectors in transform space. (We assume that the database already has class-structure in the original space---so that the mutual coherence is high---and by the continuity of the Gaussian kernel, $\phi_\kappa$ with $\sigma = \sigma_\mathrm{mc}$ separates the data in each class only as much as necessary to achieve the mutual coherence bound.)

To relate $\sigma$ and classification accuracy, we consider the set of values of $\sigma$ such that maximum classification accuracy is achieved for all values in this set (whp). (We can think of this as the range of $\sigma$ values that produce the maximum amount---without a mutual coherence constraint---of class structure.) Defining the maximum value in this set by $\sigma_\mathrm{acc}$, we want to investigate the relationship between $\sigma_\mathrm{mc}$ and $\sigma_\mathrm{acc}$. We are also interested in the sparsity level $\|\bm{\alpha}_1\|_0$ of the $\ell^1$-minimized coefficient vector at both $\sigma = \sigma_\mathrm{mc}$ and $\sigma = \sigma_\mathrm{acc}$. Since some coefficients may be small, we also consider the size of the coefficients of training samples corresponding to the ground truth class of $\bm{y}$. In analyzing these quantities and relationships, we aim to provide insight into the role of sparsity in classification. 

\subsection{Handling Test Samples}

We elaborate on the effect of Proposition \ref{prop:up_bound_restate} in the kernel setup: For a fixed training set and test sample (in the original space), we lose the ability to write $\Phi_\kappa(X_\mathrm{tr})\boldsymbol{\alpha} = \phi_\kappa(\bm{y})$ for \emph{any} coefficient vector $\bm{\alpha}$ as $\sigma \rightarrow 0$. Recall that this equality is a key aspect of the mutual coherence condition in Theorem \ref {thm:mc}. In decreasing $\sigma$, we cause not only the training samples to become more orthogonal to each other, but also the test sample to become more orthogonal to each training sample, to the point that when $\sigma = \sigma_\mathrm{mc}$, $\phi_\kappa(\bm{y})$ is likely not contained in the span of the columns of $\Phi_\kappa(X_\mathrm{tr})$. In other words, the resulting system is overdetermined with no solution to $\Phi_\kappa(X_\mathrm{tr})\bm{\alpha} = \phi_\kappa(\bm{y})$ when $\sigma \leq \sigma_\mathrm{mc}$. By Theorem \ref {thm:mc}, the minimal $\ell^1$-norm solution satisfying $\Phi_\kappa(X_\mathrm{tr})\bm{\alpha}_1 = \phi_\kappa(\bm{y})$ with $\|\bm{\alpha}_1\|_0 < (1/2)(1+(1/\mu))$ is necessarily the sparsest such solution. However, if there is no solution satisfying $\Phi_\kappa(X_\mathrm{tr})\bm{\alpha} = \phi_\kappa(\bm{y})$, then there can be no sparsest solution. 

Even when the equality in SRC is relaxed and the constrained $\ell^1$-minimization problem in Eq.~\eqref{eq:cs_l1_error} is used,\footnote{Note that the formulation in Eq.~\eqref{eq:cs_l1_error} is equivalent to the regularized $\ell^1$-minimization problem in Eq.~\eqref{eq:src_opt_noise} in the formal SRC algorithm statement.} relating the found solution $\bm{\alpha}_{1,\epsilon}$ and the true sparsest solution $\bm{\alpha}_0$ using Theorem \ref{thm:mc_noise} requires the \emph{existence} of some $\bm{\alpha} = \bm{\alpha}_0$ satisfying the equality $\Phi_\kappa(X_\mathrm{tr})\bm{\alpha} = \phi_\kappa(\bm{y})$. Since the bound in Eq.~\eqref{eq:k_max_noise} in the noisy case is more restrictive than Eq.~\eqref{eq:k_max} in the noiseless case, to satisfy Theorem \ref{thm:mc_noise} we must have $\sigma < \sigma_\mathrm{mc}$. By Proposition \ref{prop:up_bound_restate}, no such $\bm{\alpha}$ exists, and it follows that Theorem \ref{thm:mc_noise} cannot be applied in this setup, either. 

As discussed earlier, we will side-step this conflict by artificially generating test samples in transform (kernel) space. This approach affects the accuracy of SRC as follows: As $\sigma \rightarrow 0$ and the training data become closer to orthogonal, we will never lose the relationship $\phi_\kappa(\bm{y}) \in \spn\{\phi_\kappa(\bm{x}_1^{(l)}),\ldots,\phi_\kappa(\bm{x}_{N_l}^{(l)})\}$. Thus we will not see the classification performance deteriorate \emph{at all} as $\sigma \rightarrow 0$.\footnote{We stress that this is certainly not the case in general: consider the increasing difficulty of identifying class structure in a dataset whose samples become more and more uncorrelated (as $\sigma \rightarrow 0$). Thus generating $\phi_\kappa(\bm{y})$ in this manner adds an undesirable---but necessary---degree of artificiality into our experiment.} In other words, decreasing $\mu$ so that we can provably obtain $\ell^1/\ell^0$-equivalence in this setup can only \emph{help} classification accuracy, as doing so isolates the linear relationship between the test sample and the training samples in its ground truth class (in kernel space). Thus our investigation of the relationship between $\sigma_\mathrm{acc}$ and $\sigma_\mathrm{mc}$ can be more precisely stated in terms of how much larger $\sigma_\mathrm{acc}$ is than $\sigma_\mathrm{mc}$, i.e., how quickly does classification accuracy deteriorate after we no longer have $\ell^1/\ell^0$-equivalence?

\subsection{Experiments}

\subsubsection{Experimental Setup}\label{mcd_3_exp_setup}
For a fixed training set (that will be described in detail in Section \ref{sec:database}) and fixed $\sigma$, we generate $N_l$ test samples in kernel space for each class $1\leq l \leq L$ as linear combinations of the training samples in that class (in kernel space) with coefficients randomly drawn from $\unif(0,1)$ distribution. Non-negative coefficients are used so that $\ip{\phi_\kappa(\bm{y}),\phi_\kappa(\bm{x}_j)}\geq 0$ for $1\leq j \leq N_\mathrm{tr}$, as is consistent with the Gaussian kernel. We then apply SRC in kernel space to classify the resulting test samples, using the Kernel SRC algorithm of Kang et al., in particular, their \emph{kernel coordinate descent} (KCD) algorithm \cite{kang:ksrc}. Note that in their paper, the authors apply this algorithm to the \emph{local binary patterns} of the original samples instead of the original samples themselves, and since other types of kernels are more appropriate for these type of features, they do not use the Gaussian kernel, as we do. 

In our experiments, we determine $\sigma_\mathrm{mc}$ and $\sigma_\mathrm{acc}$ by trial-and-error. 
Given the randomness inherent in the database construction (again, see Section \ref{sec:database} for a description of the database used), determining these values is not an exact science, and we do our best to make judicious and consistent choices in terms of rounding, etc. Additionally, note that we thresholded the entries of each $\ell^1$-minimized coefficient vector $\bm{\alpha}_1$ by $10^{-10}$ to help avoid rounding errors.

\subsubsection{An Upper Bound}

We saw in Section \ref{MCD_Project_1} that we cannot apply Theorem \ref {thm:mc} unless $\mu < \frac{1}{3}$. Since we are using the kernel approach, this means that we must have
\begin{align*}
\mu(\Phi_\kappa(X_\mathrm{tr})) = \max_{1\leq i\neq j \leq N_\mathrm{tr}} \ip{\phi_\kappa(\bm{x}_i),\phi_\kappa(\bm{x}_j)} = \max_{1\leq i\neq j \leq N_\mathrm{tr}} \kappa(\bm{x}_i,\bm{x}_j) < \frac{1}{3}.
\end{align*}
In particular, since we are using the Gaussian kernel, it must be the case that
\begin{align*}
\max_{i\neq j}\kappa(\bm{x}_i,\bm{x}_j) &= \max_{1\leq i\neq j \leq N_\mathrm{tr}} \mathrm{e}^{-\frac{\|\bm{x}_i-\bm{x}_j\|_2^2}{\sigma^2}} < \frac{1}{3} \\
\Rightarrow \sigma &< \frac{1}{\sqrt{\ln 3}}\max_{1\leq i\neq j \leq N_\mathrm{tr}} \|\bm{x}_i - \bm{x}_j\|_2. 
\end{align*}
Since the training samples (in the original space) are normalized, this means that
\begin{equation} \label{eq:sigma_ub}
\sigma < \frac{2}{\sqrt{\ln 3}} \approx 1.35.\end{equation}
Thus in searching for $\sigma_\mathrm{mc}$, we only need to consider values of $\sigma$ less than 1.35.

\subsubsection{Database Description} \label{sec:database}

We constructed a very simple toy database in the original space as follows: Samples in the $l$th class were initially $N_0$ copies of the canonical basis vector $\bm{e}_l \in \mathbb{R}^L$, where $L$ was the number of classes. The feature dimension $m$ was user-specified, and then $m-L$ coordinates were added to each canonical basis vector and set to zero. Lastly, random noise from $\mathcal{N}(0,\eta^2)$ was added to all (training) samples in all coordinates.

We set $N_0 = 5$, $m = 50$, and $L=20$, so that each class would consist of a relatively small portion of the dictionary $X_\mathrm{tr}$, as is ideal in SRC. Recall our method of generating test samples as linear combinations of their same-class training samples (in kernel space) in Section \ref{mcd_3_exp_setup}. We set the number of test samples in each class to $N_l = N_0 = 5$, so that we had the same number of test samples as training samples. We used three different values of noise level $\eta \in \{0.001,0.1,0.5\}$. As in the $\ell^1$-minimization algorithm HOMOTOPY, KCD requires an error/sparsity tradeoff parameter $\lambda$. To force near-exactness in the representations, we set $\lambda= 10^{-10}$.


\begin{remark}
The reader may question why we used a different synthetic database than the one in the last section: Would not this be better, so that we might obtain a \emph{fair comparison}? We are making our best effort to stress that this line of thinking misconstrues the point of this experiment. Here, we only care about the classification results of Kernel SRC as they relate to the sparsity level $\|\bm{\alpha}_1\|_0$ and the mutual coherence bound in Eq.~\eqref{eq:k_max}. We are not at all interested in whether the kernel approach improves the classification accuracy of SRC (for a positive answer to this question, see, for example, Kang et al.'s paper \cite{kang:ksrc}). Further, the previous synthetic database was designed for a specific purposes: $\ell^1/\ell^0$-equivalence---and not classification performance in SRC---could be studied at increasing levels of data correlation. So that the aim of this previous experiment did not bleed into our goals here, we used a completely new (and very simple) database.
\end{remark}

\subsubsection{Results}

In Figure \ref{fig:Clean}, we plot the synthetic database results for each value of $\eta$ over various values of $\sigma$, annotating the values of $\sigma_\mathrm{mc}$ and $\sigma_\mathrm{acc}$. We report the averages over 100 instantiations of the training and test sets (``trials''). In particular, we report the average sparsity level, Kernel SRC classification accuracy, and the (relative) $\ell^2$ and $\ell^1$-norms of the correct class support. These quantities are defined rigorously as
\begin{align*}
\mathrm{Sparsity} := \operatorname*{mean}_{\mathrm{all}\; \mathrm{trials}} \Big\{\operatorname*{median}_{\mathrm{all} \; \mathrm{test} \: \mathrm{samples}} \frac{\|\bm{\alpha}_1\|_0}{N_\mathrm{tr}}\Big\}
\end{align*}
for $\bm{\alpha}_1$ thresholded at $10^{-10}$ (we compute the median sparsity over all test samples so that the result is more robust to atypical very sparse or very dense coefficient vectors),
\begin{align*}
\mathrm{Accuracy} := \operatorname*{mean}_{\mathrm{all}\; \mathrm{trials}} \Big\{\operatorname*{mean}_{\mathrm{all} \; \mathrm{test} \: \mathrm{samples}} \mathbbm{1}_{\{\classlabel{(\bm{y})} = \operatorname{ground\_truth\_class}(\bm{y})\}}\Big\}
\end{align*}
where $\mathbbm{1}_{\{x = y\}}$ is the indicator function that returns 1 if $x=y$ and 0 otherwise, and
\begin{align*}
\supp(\ell^2) := \operatorname*{mean}_{\mathrm{all}\; \mathrm{trials}} \Big\{\operatorname*{mean}_{\mathrm{all} \; \mathrm{test} \: \mathrm{samples}} \frac{\|\delta_{\mathrm{GT}}(\bm{\alpha}_1)\|_2}{\|\bm{\alpha}_1\|_2}\Big\},\:\:\: \supp(\ell^1) := \operatorname*{mean}_{\mathrm{all}\; \mathrm{trials}} \Big\{\operatorname*{mean}_{\mathrm{all} \; \mathrm{test} \: \mathrm{samples}} \frac{\|\delta_{\mathrm{GT}}(\bm{\alpha}_1)\|_1}{\|\bm{\alpha}_1\|_1}\Big\},
\end{align*}
where the nonzero entries of $\delta_{\mathrm{GT}}(\bm{\alpha}_1)$ are exactly those from $\bm{\alpha}_1$ that correspond to the ground truth class of the given test sample. 

\begin{figure}[!htb]%
\centering
\begin{subfigure}[b]{0.65\textwidth}
\centering
	\includegraphics[width=\linewidth]{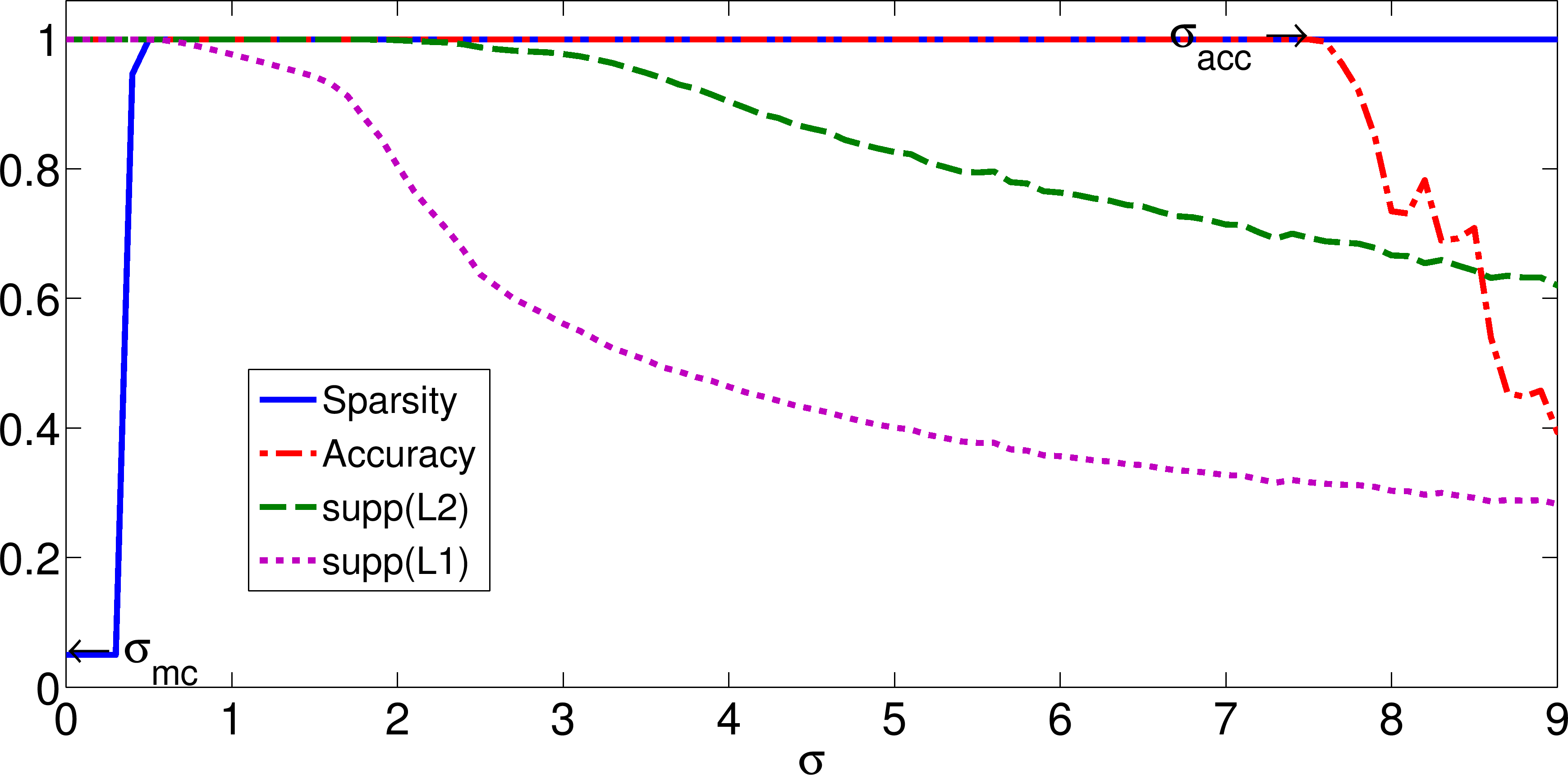}
		\caption{$\eta = 0.001$}
	\label{fig:Clean_10} \hfill%
\end{subfigure}
\begin{subfigure}[b]{0.65\textwidth} 
\centering
	\includegraphics[width=\linewidth]{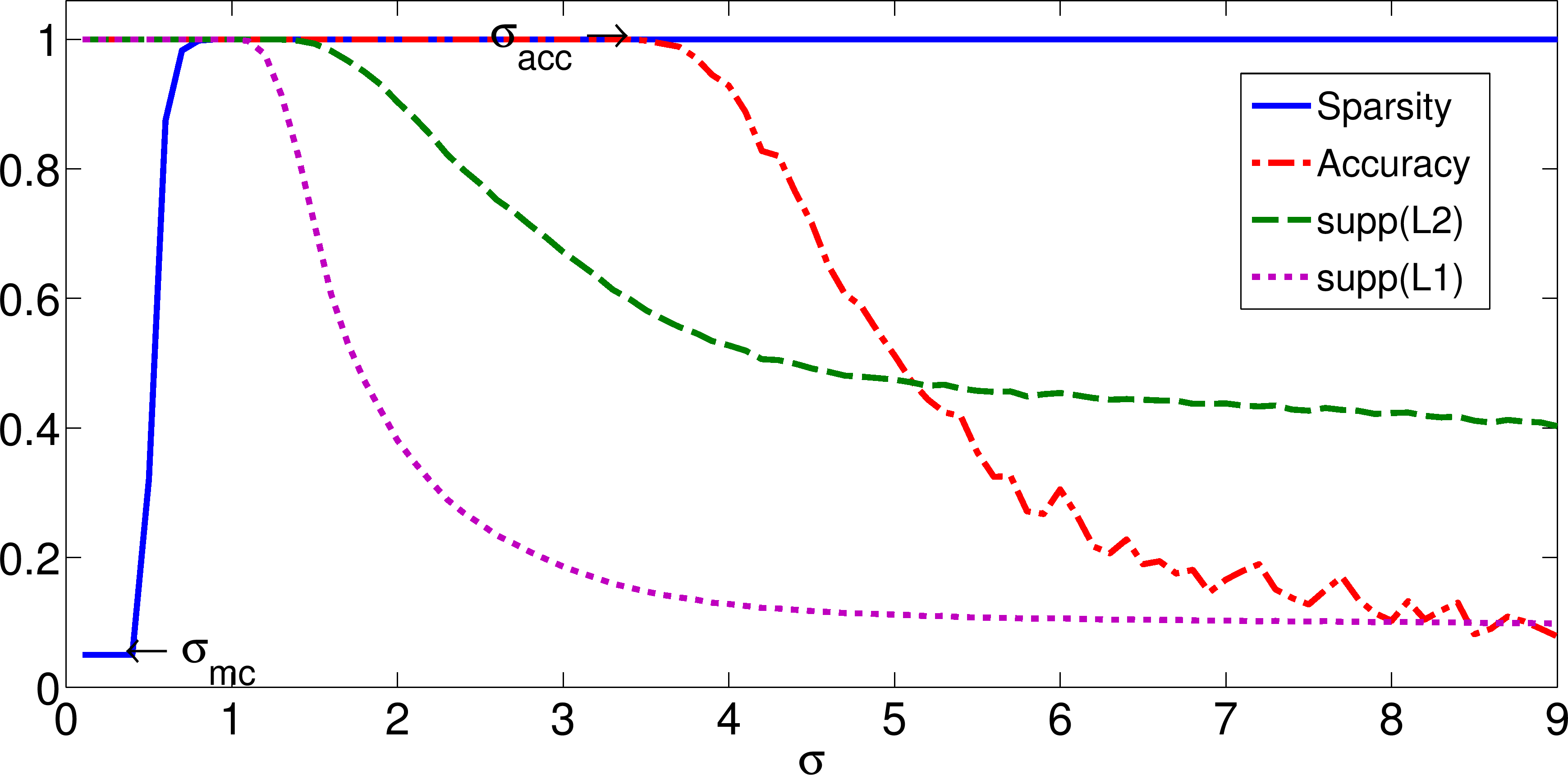}
		\caption{$\eta=0.1$}
	\label{fig:Clean_1000} \hfill%
\end{subfigure}
\begin{subfigure}[b]{0.65\textwidth} 
\centering
	\includegraphics[width=\linewidth]{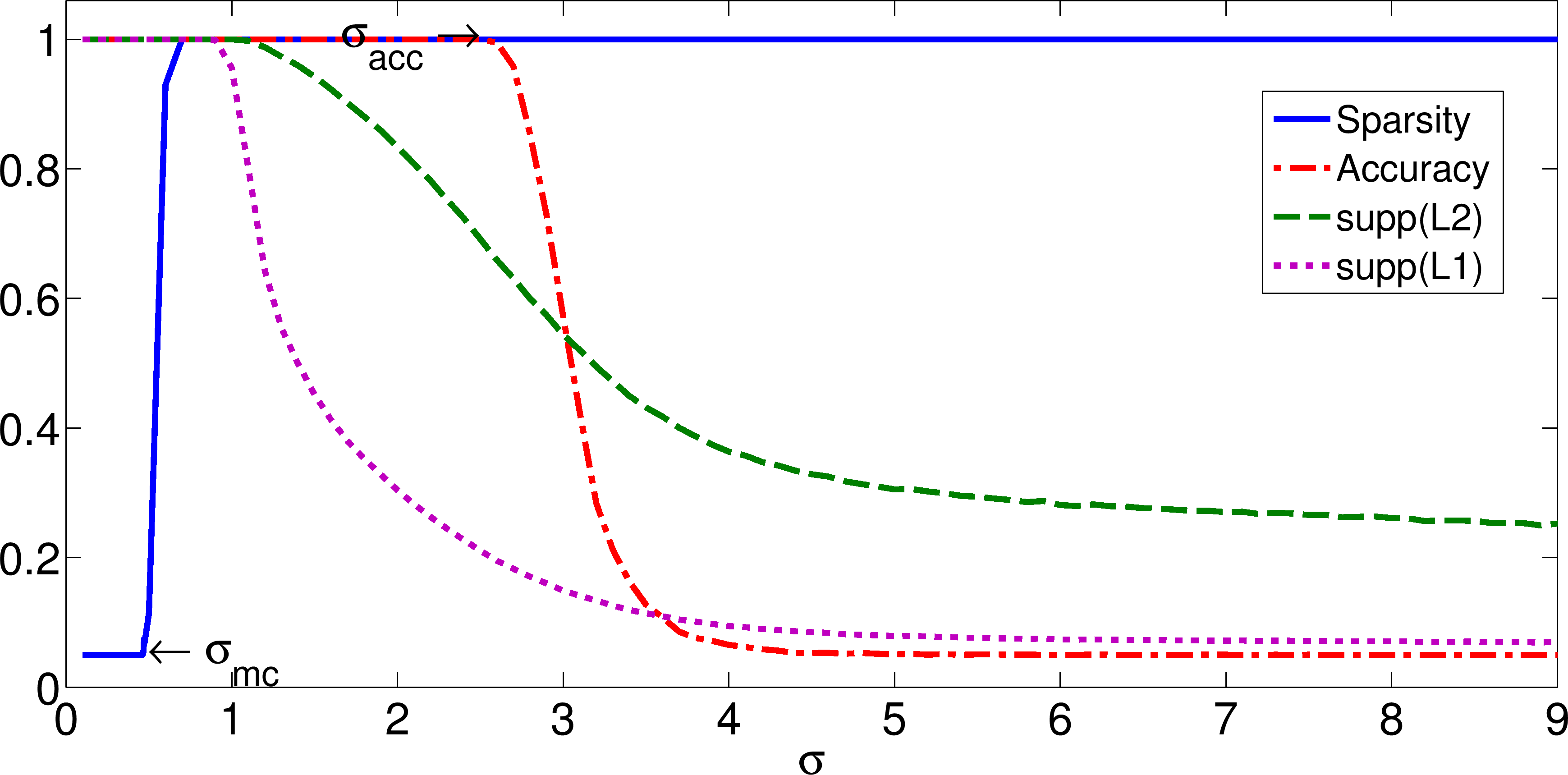}
		\caption{$\eta=0.5$}
	\label{fig:Clean_5000} \hfill%
\end{subfigure}

\caption[Average sparsity, accuracy, and support quantities as correlation increased in the kernel setup]{Average sparsity, accuracy, $\supp(\ell^2)$ and $\supp(\ell^1)$ (over 100 trials) as $\sigma$ increased in the kernel setup. The annotations ``$\sigma_\mathrm{mc}$'' and ``$\sigma_\mathrm{acc}$'' denote the maximum $\sigma$ for which Eq.~\eqref{eq:k_max} holds and for which maximum accuracy is obtained in Kernel SRC, respectively.}
\label{fig:Clean}
\end{figure}

From Figure \ref{fig:Clean}, we see that $\sigma_\mathrm{acc}$ was generally much larger than $\sigma_\mathrm{mc}$, and that the Kernel SRC method could tolerate substantial $\ell^1$ and $\ell^2$-support error before classification deteriorated. Further, perfect classification was achieved even for maximally dense $\bm{\alpha}_1$. This shows that a strictly-sparse solution vector is not always necessary to the success of SRC. 

As the level of noise $\eta$ increased, we see in Figure \ref{fig:Clean} that $\sigma_\mathrm{acc}$ decreased towards $\sigma_\mathrm{mc}$. However, 
\begin{align*}
\lim_{\eta \rightarrow \infty}\sigma_\mathrm{acc} \neq \sigma_\mathrm{mc}.
\end{align*} 
Once the class structure was lost due to noise in the original space,
increasing the noise level further had no effect on the quantities displayed in Figure \ref{fig:Clean}. In other words, Figure \ref{fig:Clean_5000} is representative of the results for larger values of $\eta$. 

We also observe that for $\eta = 0.001$, the sparsest solution was obtained by $\ell^1$-minimization for values of $\sigma$ slightly larger than $\sigma_\mathrm{mc}$ (note the position of the $\sigma_\mathrm{mc}$ arrow tip in Figure \ref{fig:Clean_10}). In fact, the mutual coherence of the dataset with $\eta = 0.001$ reached $\mu = 0.9994$ before $\ell^1$-minimization failed to retrieve the sparsest solution. This indicates that when the classes are well-separated (for small $\eta$ and sufficiently small $\sigma$, separability in the original space carries over to kernel space in this experiment), $\ell^1/\ell^0$-equivalence can still be achieved even when the mutual coherence is much larger than that allowed by Eq.~\eqref{eq:k_max}. This reinforces the findings from Section \ref{MCD_Project_2}, namely, that $\ell^1/\ell^0$-equivalence holds on highly-correlated data as long as the vectors corresponding to the support of the sparsest solution are sufficiently separated from the other dictionary elements. On the other hand, for larger values of $\eta$, i.e., when the classes were less well-separated, the bound in Eq.~\eqref{eq:k_max} appears to be approximately tight.

\subsubsection{Examining the Accuracy Threshold}

It is notable that $\sigma_\mathrm{acc}$ is substantially larger than $\sigma_\mathrm{mc}$ for all $\eta$, and that the accuracy in Kernel SRC has a steep drop-off as soon as $\sigma>\sigma_\mathrm{acc}$. The value $\sigma_\mathrm{acc}$ appears to be a threshold for which the linear relationship between $\phi_\kappa(\bm{y})$ and the training samples in its ground truth class cannot be identified by the classification mechanism in (Kernel) SRC. We want to know what triggers this threshold. 

We first look for an ``elbow'' or sharp change in the correlation between $\phi_\kappa(\bm{y})$ and training samples in its ground truth class, and that between $\phi_\kappa(\bm{y})$ and samples in other classes. In particular, we computed 
\begin{align*}
\corr_\mathrm{GT} := \operatorname*{mean}_{\mathrm{all}\; \mathrm{trials}} \Big\{\operatorname*{median}_{\bm{x}_j^{(l)}: \, \bm{y} \, \in \text{ class } l} \ip{\phi_\kappa(\bm{y}),\phi_\kappa(\bm{x}_j^{(l)})}\Big\}
\end{align*}
and
\begin{align*}
\corr_\mathrm{other} := \operatorname*{mean}_{\mathrm{all}\; \mathrm{trials}} \Big\{\operatorname*{median}_{l: \, \bm{y} \, \notin \text{ class } l} \Big\{\operatorname*{median}_{1\leq j \leq N_l} \ip{\phi_\kappa(\bm{y}),\phi_\kappa(\bm{x}_j^{(l)})}\Big\} \Big\}.
\end{align*}
Again, we compute the median quantities within each trial to make the correlation values more robust to sample outliers. 

The results for $\eta = 0.1$ are shown in Figure \ref{fig:corr}. The plots for the other values of $\eta$ are similar. As we can see, the accuracy threshold $\sigma_\mathrm{acc}$ occurred \emph{after} the sharp increase in the correlation quantities. In fact, we see that SRC was able to retrieve the correct classification assignment when $\corr_\mathrm{GT}$ was only moderately larger than $\corr_\mathrm{other}$. On the other hand, the sharp increase in the correlation quantities appears to correspond to the steep increase in sparsity level, which makes sense in the context of the mutual coherence recovery guarantee in Theorem \ref {thm:mc}. 

\begin{figure}[!htb]
\begin{center}
\includegraphics[width=0.9\linewidth]{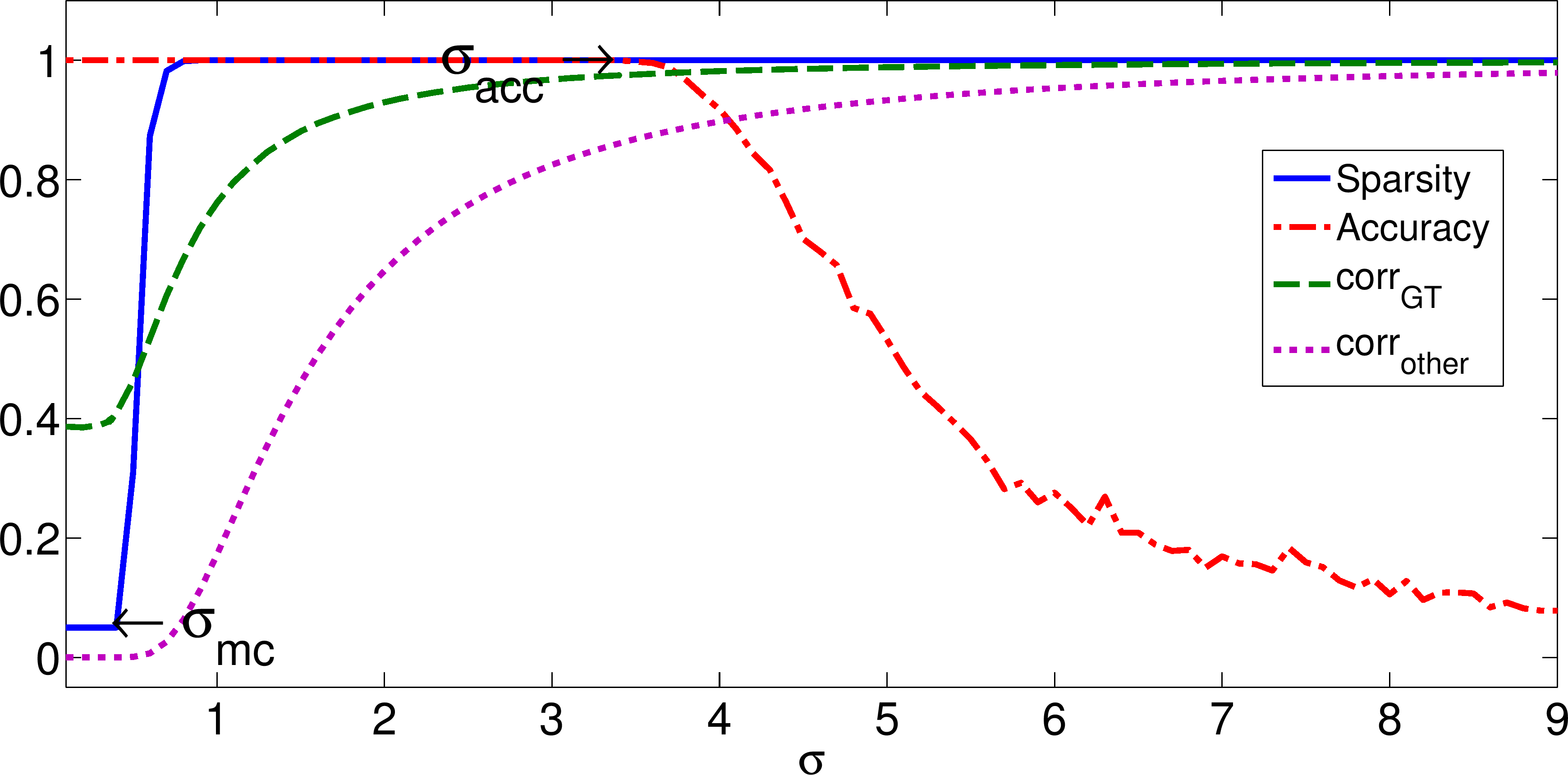}
\caption[Median correlation between the test sample and training samples in the same class, and between the test sample and training samples in different classes]{Median correlation (averaged over 100 trials) between the test sample $\phi_\kappa(\bm{y})$ and training samples in the same class ($\corr_\mathrm{GT}$) and training samples in different classes ($\corr_\mathrm{other}$) for the synthetic database with $\eta = 0.1$. Sparsity and accuracy are also displayed for comparison. Notice that the drop in accuracy occurs well after the jump in the correlation terms and sparsity. }
\label{fig:corr}
\end{center}
\end{figure}

As a more informative approach to understanding the accuracy threshold, in particular, what causes the sharp drop-off in accuracy at $\sigma_\mathrm{acc}$, we consider the distribution of the absolute values of the coefficients, i.e., the magnitude of the coordinates of $\bm{\alpha}_1$, with respect to the different classes. Without loss of generality, we do this by studying the coefficients for the class $l=20$ test samples. More specifically, for $\eta = 0.1$, we computed the mean vector $|\bm{\alpha}_1|$ over the $N_0=5$ class $l=20$ test samples, and then averaged the result over 100 trials:
\begin{align*}
\operatorname*{mean}_{\mathrm{all}\; \mathrm{trials}} \Big\{\operatorname*{mean}_{\bm{y} \, \in \text{ class } l=20}\big\{|\bm{\alpha}_1|\big\}\Big\}.
\end{align*}
Lastly, we normalized the resulting vector so that its entries summed to 1.

We plot the results in Figure \ref{fig:hist} for a handful of representative values of $\sigma$. The $x$-axis in the left-hand-side plots (Figures \ref{fig:MCD_3_Hist_sigma_mc_coeff}, \ref{fig:MCD_3_Hist_sigma_3_coeff}, \ref{fig:MCD_3_Hist_sigma_5_coeff}, and \ref{fig:MCD_3_Hist_sigma_9_coeff}) corresponds to the individual coordinates of the averaged vector $|\bm{\alpha}_1| \in \mathbb{R}^{N_\mathrm{tr}}$. The coordinates corresponding to training samples in each class are simply summed to produce the right-hand-side plots (Figures \ref{fig:MCD_3_Hist_sigma_mc_class}, \ref{fig:MCD_3_Hist_sigma_3_class}, \ref{fig:MCD_3_Hist_sigma_5_class}, and \ref{fig:MCD_3_Hist_sigma_9_class}), so that the contribution from each class in the representation of $\phi_\kappa(\bm{y})$ can be viewed easily. We also include the corresponding Kernel SRC classification accuracies for reference.

\begin{figure}[!htb]%
\centering
\begin{subfigure}[b]{0.33\textwidth}
\centering
	\includegraphics[width=\linewidth]{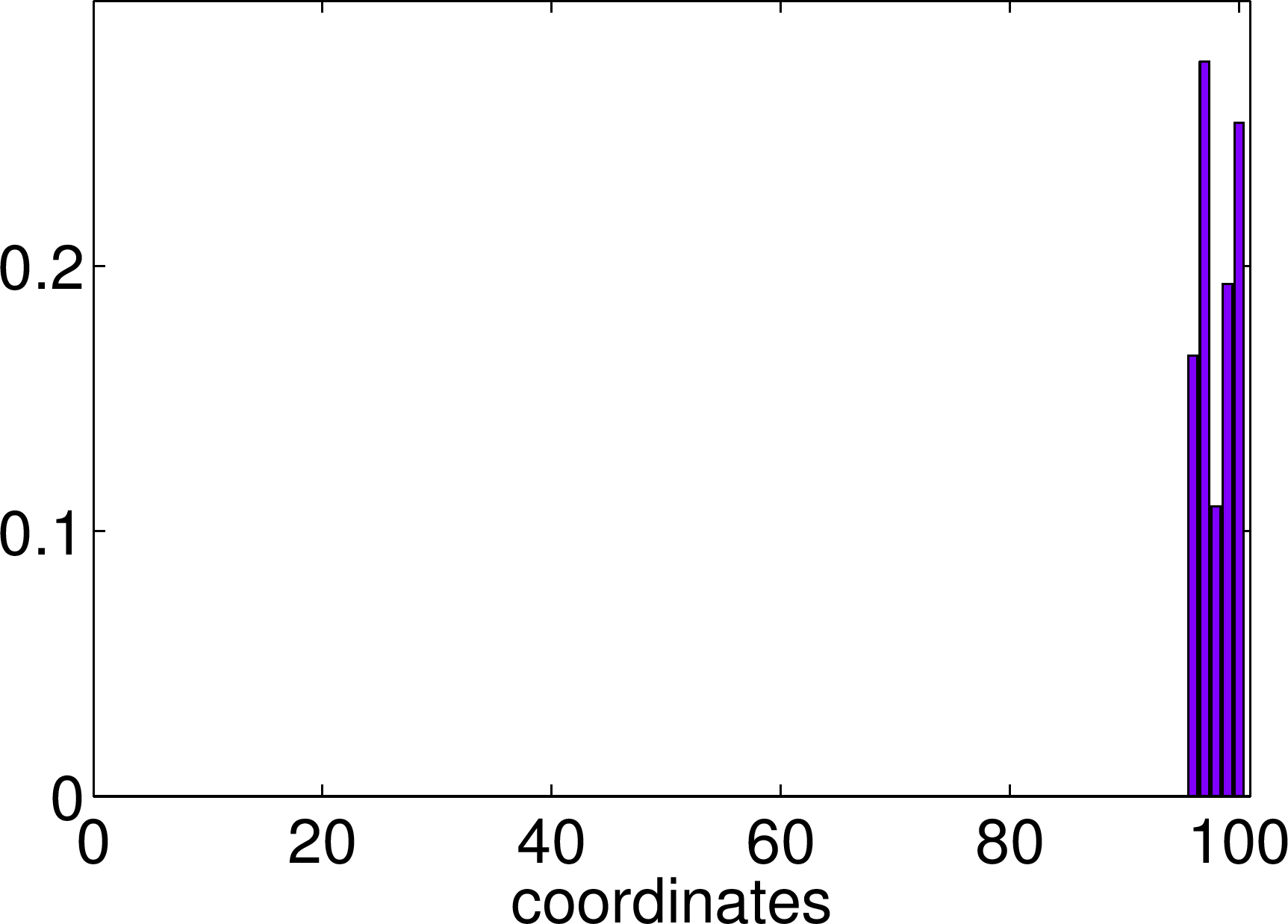}
		\caption{$\sigma = \sigma_\mathrm{mc}$, Accuracy $= 1$}
	\label{fig:MCD_3_Hist_sigma_mc_coeff} \hfill%
\end{subfigure}
\begin{subfigure}[b]{0.33\textwidth} 
\centering
	\includegraphics[width=\linewidth]{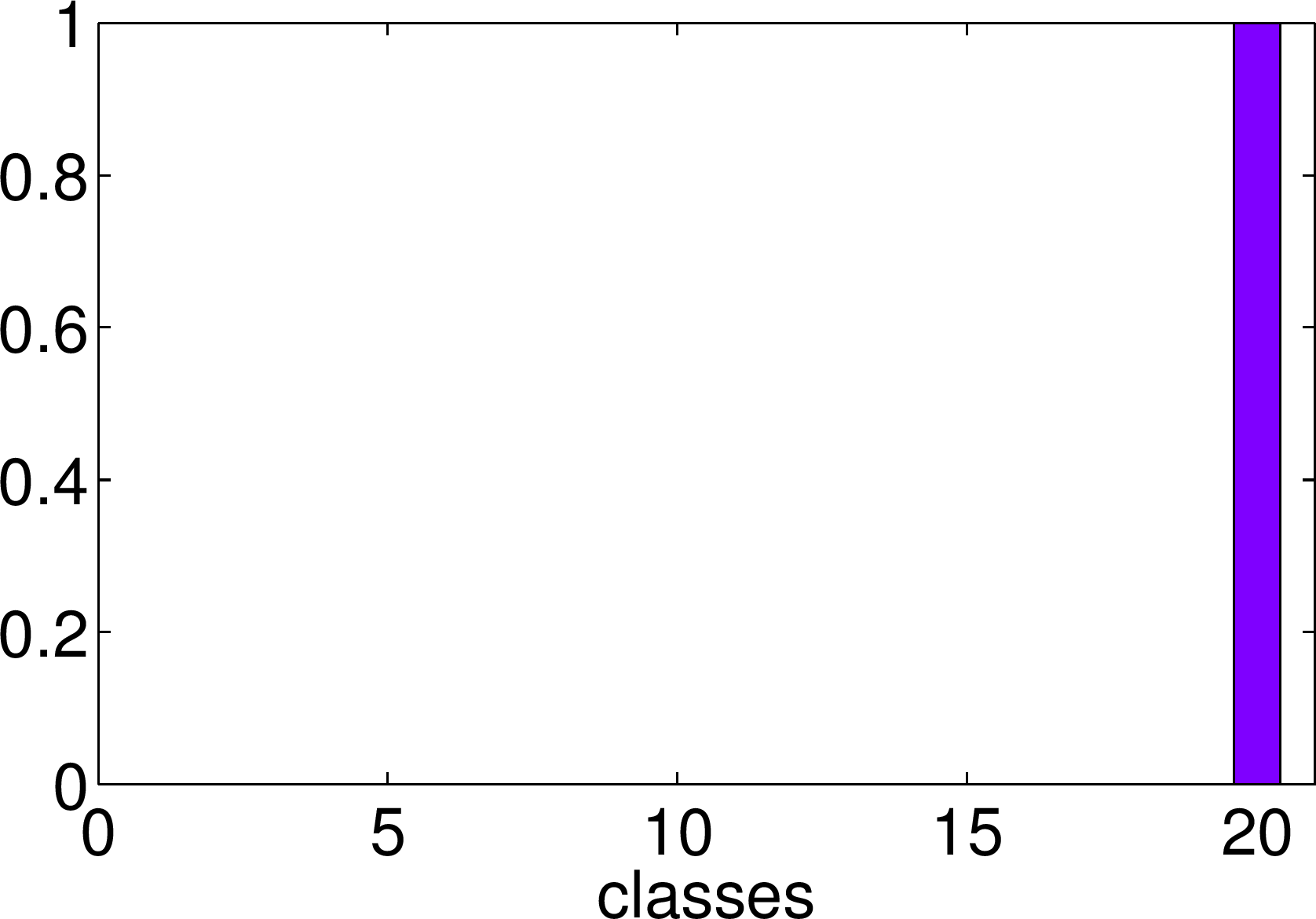}
		\caption{$\sigma = \sigma_\mathrm{mc}$, Accuracy $= 1$}
	\label{fig:MCD_3_Hist_sigma_mc_class} \hfill%
\end{subfigure}
\begin{subfigure}[b]{0.33\textwidth} 
\centering
	\includegraphics[width=\linewidth]{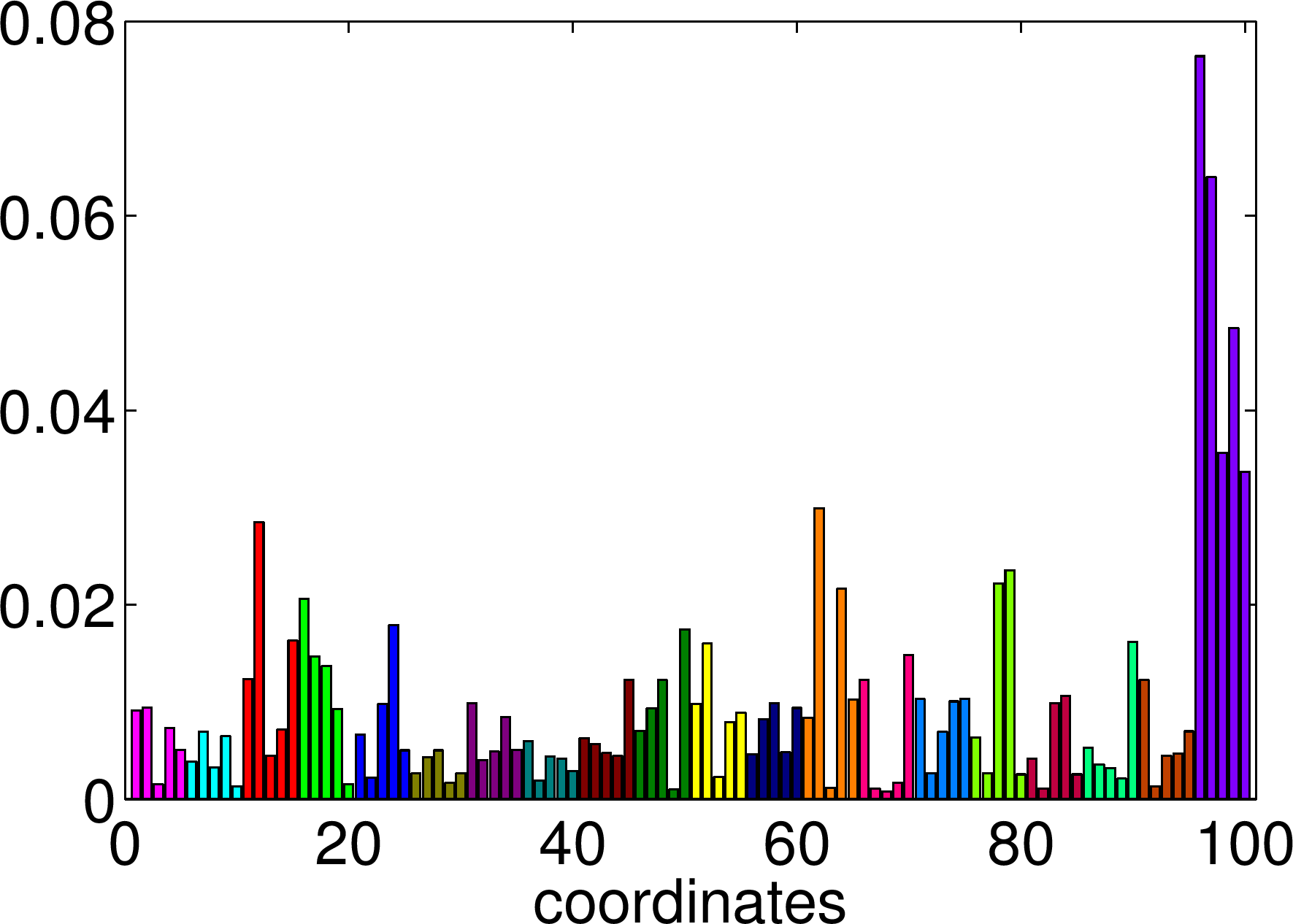}
		\caption{$\sigma = 3$, Accuracy $= 1$}
	\label{fig:MCD_3_Hist_sigma_3_coeff} \hfill%
\end{subfigure}
	\begin{subfigure}[b]{0.33\textwidth}
\centering
	\includegraphics[width=\linewidth]{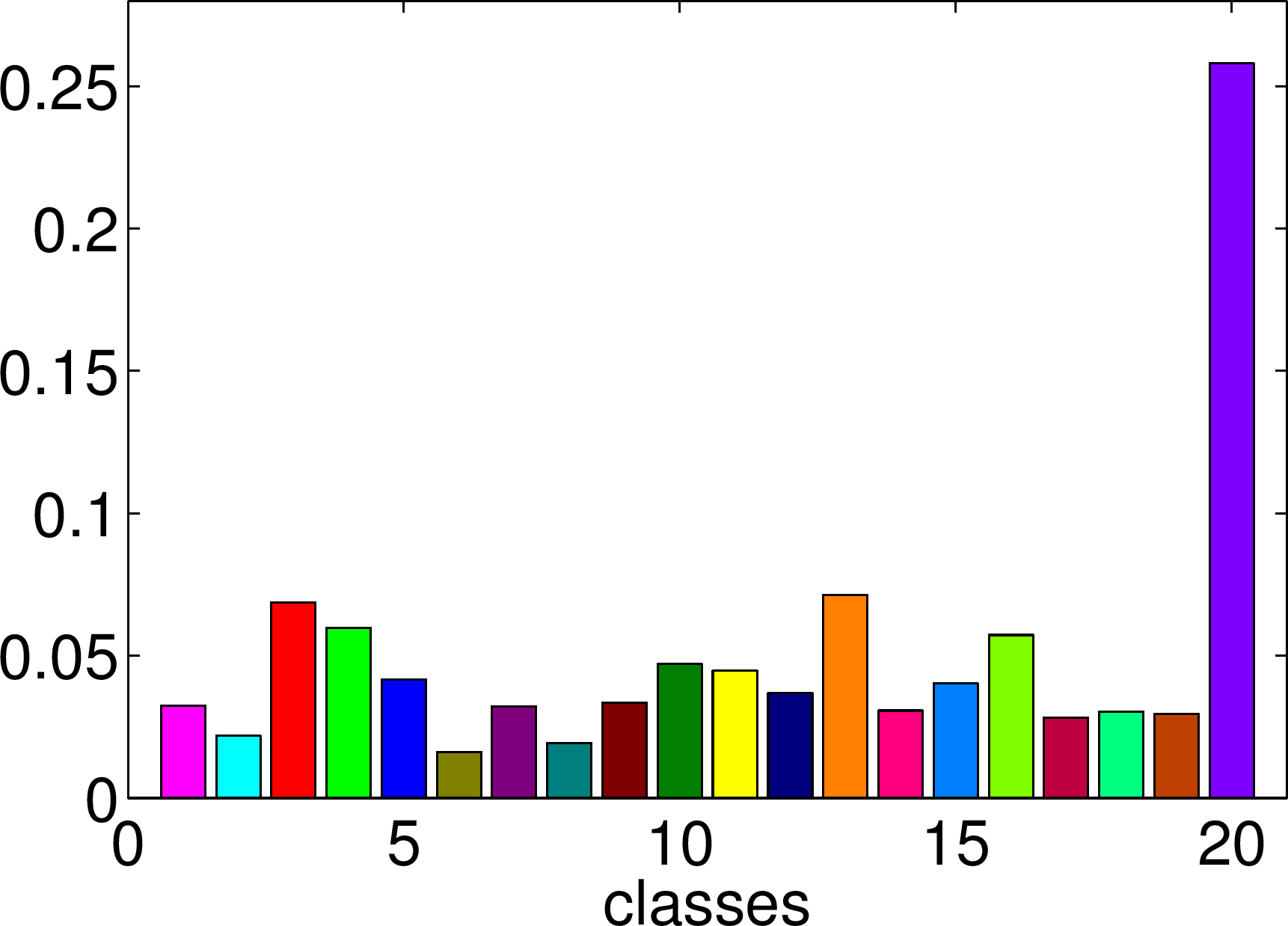}
		\caption{$\sigma = 3$, Accuracy $= 1$}
	\label{fig:MCD_3_Hist_sigma_3_class} \hfill%
\end{subfigure}
\begin{subfigure}[b]{0.33\textwidth} 
\centering
	\includegraphics[width=\linewidth]{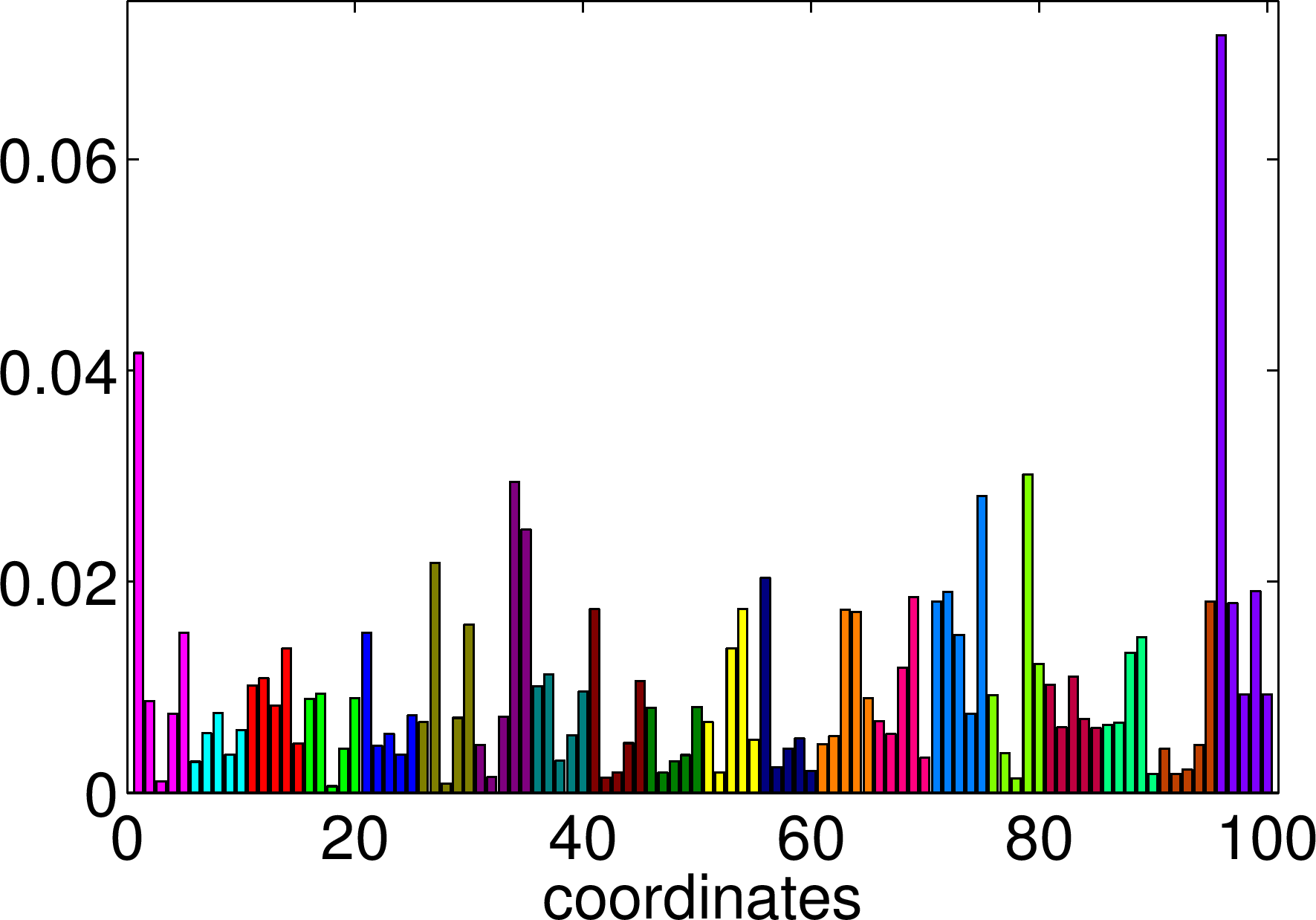}
		\caption{$\sigma = 5$, Accuracy $= 0.51$}
	\label{fig:MCD_3_Hist_sigma_5_coeff} \hfill%
\end{subfigure}
\begin{subfigure}[b]{0.33\textwidth} 
\centering
	\includegraphics[width=\linewidth]{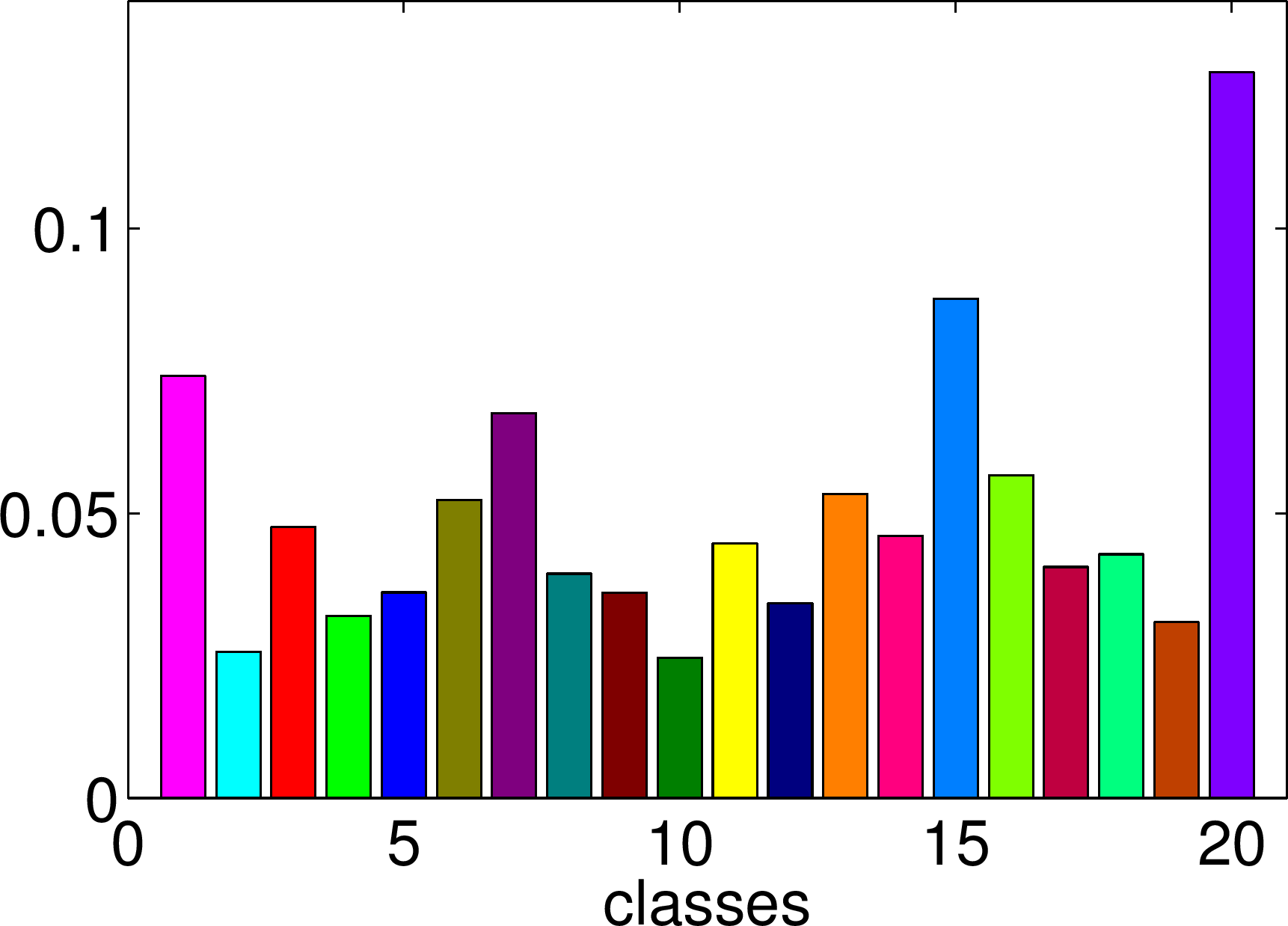}
		\caption{$\sigma = 5$, Accuracy $= 0.51$}
	\label{fig:MCD_3_Hist_sigma_5_class} \hfill%
\end{subfigure}
\begin{subfigure}[b]{0.33\textwidth} 
\centering
	\includegraphics[width=\linewidth]{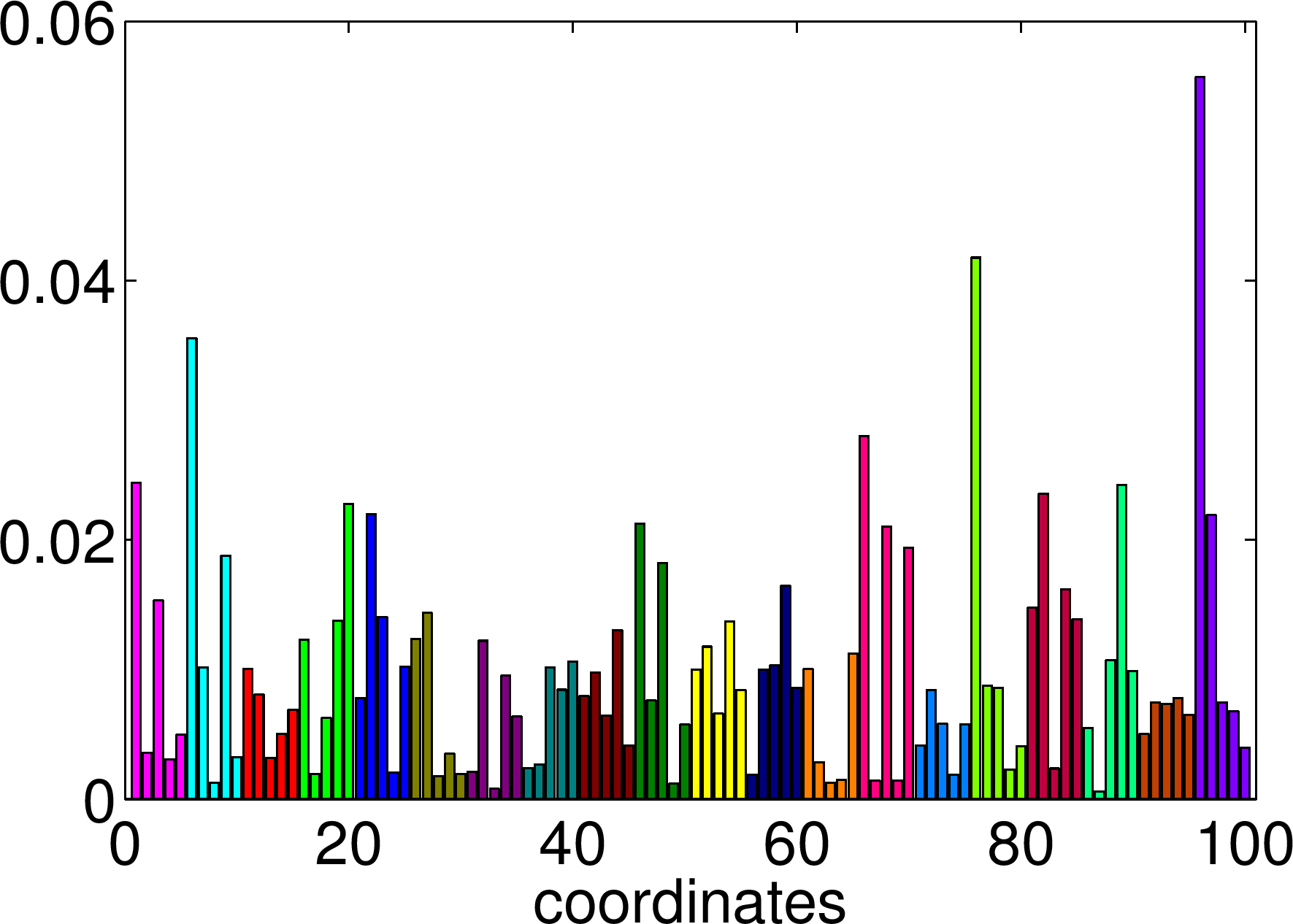}
		\caption{$\sigma = 9$, Accuracy $= 0.08$}
	\label{fig:MCD_3_Hist_sigma_9_coeff} \hfill%
\end{subfigure}
\begin{subfigure}[b]{0.33\textwidth} 
\centering
	\includegraphics[width=\linewidth]{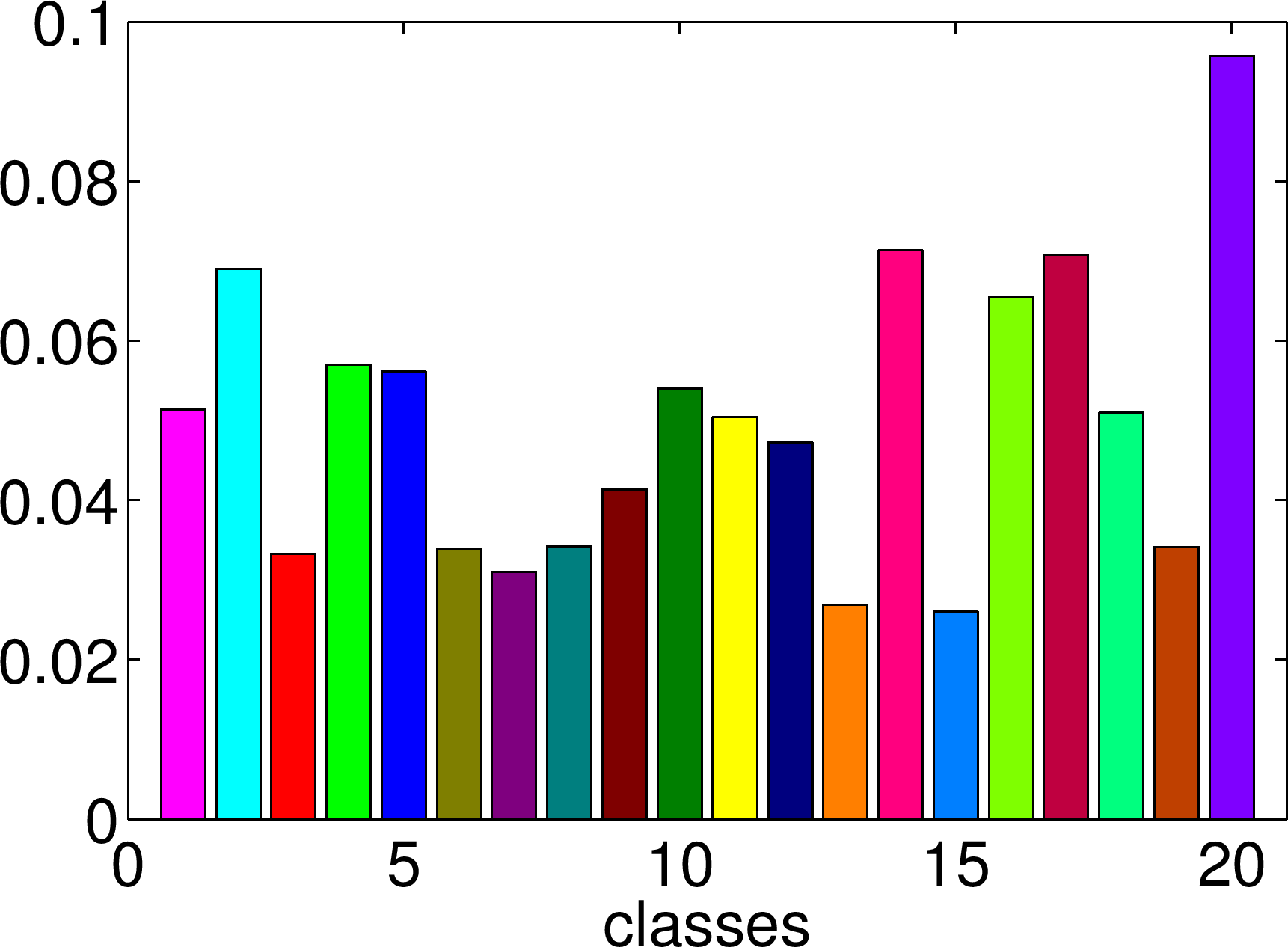}
		\caption{$\sigma = 9$, Accuracy $= 0.08$}
	\label{fig:MCD_3_Hist_sigma_9_class} \hfill
\end{subfigure}
\caption[Average coefficient magnitudes and class contributions of coefficient vectors corresponding to class $l=20$ test samples at various levels of coherence]{Average class contributions (over 100 trials) of coefficient vectors corresponding to class $l=20$ test samples. The colors denote the classes.}
\label{fig:hist}
\end{figure}

Given the dominance of coefficients corresponding to class $l = 20$ in Figures \ref{fig:MCD_3_Hist_sigma_mc_coeff}-\ref{fig:MCD_3_Hist_sigma_3_class}, it is not surprising that Kernel SRC obtains perfect accuracy in these cases. It is also quite clear from these figures that small coefficients in the wrong class do not negatively affect classification accuracy. Thus there is no reason to require a solution sparser than that with $\sigma = 3$. 

For $\sigma \in \{5,9\}$, the closeness in the coefficient magnitudes between those corresponding to class $l=20$ and those corresponding to other classes illustrates the decreased accuracy in Kernel SRC; recall that these plots contain averages. Additionally, we note that the distribution of the coefficients in class $l=20$ became fairly unbalanced among that class's training samples for these large values of $\sigma$. This is because as mutual coherence increased, the class $l=20$ samples became more and more parallel to each other. Thus most of $\phi_\kappa(\bm{y})$ could be represented using only the first training sample in that class. 

Figure \ref{fig:hist} helps to explain the sharp drop-off in accuracy at $\sigma_\mathrm{acc}$. Though the quantities $\corr_\mathrm{GT}$ and $\corr_\mathrm{other}$ are only slightly increasing at $\sigma_\mathrm{acc}$ (and the general behavior of the coefficients varying smoothly), the threshold occurs right at the point where the coefficients of other classes become competitive with those from the correct class (as we would expect). The sharp drop-off can be attributed to the nonlinearity of the \texttt{min} function in determining $\min_{1\leq l \leq L} \{\|\phi_\kappa(\bm{y})-\Phi_\kappa(X_\mathrm{tr})\delta_l(\bm{\alpha}_1)\|_2\}$ in the classification stage of (Kernel) SRC.

\subsection{Key Findings}

We summarize some important conclusions from this section:
\begin{itemize} [noitemsep,nolistsep]
\item Any procedure that spreads out the data in each class in a way that decreases mutual coherence yet aims to maintain class structure will necessarily come into conflict with maintaining a linear relationship between $\bm{y}$ and \emph{any} subset of training samples. More precisely, it is generally impossible to write $\bm{y}$ as a linear combination of the training samples in class $l$ while satisfying the bound 
\begin{align*}
\|\bm{\alpha}\|_0 < \frac{1}{2}\Big(1+\frac{1}{\mu(X_\mathrm{tr})}\Big) \approx \frac{1}{2}\Big(1+\frac{1}{\mu([X^{(l)},\bm{y}])}\Big),
\end{align*}
 i.e., when $\bm{y}$ is spread out in the same manner as the other samples in the database. Besides artificially generating $\bm{y}$ as a linear combination of the training samples \emph{after} they have been spread out, it is not clear to us how to overcome this conflict. 

\item Though generating $\bm{y}$ as a linear combination of its ground truth class training samples in kernel space prevented us, in some sense, from isolating the relationship between $\sigma_\mathrm{mc}$ and classification accuracy, we were still able to study the correspondence between $\sigma_\mathrm{mc}$ and sparsity level $\|\bm{\alpha}_1\|_0$. In particular, we confirmed our previous findings that perfect recovery can be achieved on highly-correlated data as long as the classes are sufficiently well-separated (in this experiment, this meant small $\eta$). 

\item We saw that there was a sharp drop-off in classification accuracy as soon as $\sigma > \sigma_\mathrm{acc}$, which was not directly correlated with a sharp change in either sparsity or the relationship between within-class and between-class correlation, or in the normalized $\ell^2$ and $\ell^1$-norms of $\delta_\mathrm{GT}(\bm{\alpha}_1)$. Though $\ell^1/\ell^0$-equivalence (whether provable by Theorem \ref {thm:mc} or not) was a way to ensure perfect classification accuracy in this experiment, it was not necessary. The classification mechanism in SRC can clearly tolerate even the maximal number of nonzero coefficients in the representation, as long as the magnitudes of coefficients corresponding to the wrong classes are small with respect to those from the correct class. In this sense, \emph{relative}---or \emph{approximate}---sparsity is the key to SRC. It might be possible to make this idea precise in terms of a coefficient thresholding procedure similar to the one used in Section \ref{MCD_Project_2}.

\end{itemize}

In future research, it would be interesting to consider the modification of the above experiment when noise is added to the test sample $\phi(\bm{y})$ after it is generated as a linear combination of its ground truth class training samples in kernel space. Of course, this will not have the same effect as adding noise to the original (and implicitly-defined) test sample $\bm{y}$, but it would allow us to investigate the relationship between classification accuracy in SRC and the mutual coherence bound in the case of noise as stated in Theorem \ref{thm:mc_noise}.

\section{Conclusion} \label{sec:conclusion}

In this paper, we investigated the applicability of $\ell^1/\ell^0$-equivalence guarantees on dictionaries containing training samples. We detailed the inherent conflict between tightly-clustered classes---desirable for good classification---and the sufficient incoherence required by recovery guarantees such as those based on mutual coherence. In particular, we proved that under the assumptions of SRC, i.e., that class manifolds are linear subspaces spanned by their respective training data, Donoho et al.'s mutual coherence guarantees can only hold in the case that we have \emph{exactly} enough training samples to span each lower-dimensional subspace. Considering that the performance of SRC should generally improve as the training class size increases, it is likely counter-productive for classification purposes to restrict the training set in this way. Further, despite existing methods to estimate the class manifold dimension, it is impractical to assume that such approaches will always work perfectly.

Despite not being able to prove $\ell^1/\ell^0$-equivalence on most class-structured data, we saw that it can indeed be achieved in some specific cases. Inspired by the random model of Wright and Ma to generate face image-like databases, we designed an experiment to test the ability of $\ell^1$-minimization to recover the sparsest solution on highly-correlated data. The results were mostly positive. We observed that in all cases, $\ell^1$-minimization recovered a solution closely approximating the sparsest solution (defined by generating the test sample as a linear combination of training samples in its ground truth class). Further, within-class correlation actually improved recovery relative to uniformly-random data, provided that the between-class correlation was sufficiently low, i.e., that the classes were sufficiently separated. In many cases, $\ell^1$-minimization exactly recovered the sparsest solution. Additionally, in the case that noise was added to the test sample, the correct support was found in nearly every case in which correlation was introduced.

We also considered the role of sparsity in the context of SRC and similar classification algorithms. One obstacle in determining this relationship is obtaining access to the sparsest solution for comparison without the aid of $\ell^1/\ell^0$-equivalence guarantees. Towards resolving this problem, we designed a nonlinear transform, based on kernel methods using the Gaussian kernel, to decrease the within-class mutual coherence while still maintaining class structure so that (hypothetically) provable equivalence and good classification could be simultaneously achieved. However, we found that the degree to which we had to decrease coherence in this setup meant that the test sample was no longer in the span of the training data, and so we were forced to limit our analysis to test samples artificially generated as linear combinations of their ground truth class training samples, as in Section \ref{MCD_Project_2}. Though this to some extent limited the applicability of our experiment, the results clearly indicate that strict sparsity is not necessary for good classification in SRC. Instead, its success lies in its ability to correctly differentiate the coefficient magnitudes of training samples in different classes, i.e., to find \emph{approximately} or \emph{relatively} sparse solutions, in the case that the linear subspace assumption is observed and the classes themselves are not too correlated, i.e., not close together. 

There is certainly much work to be done to quantify these findings. We mention two potential next steps: Eldar and Kuppinger's notion of \emph{block-coherence} \cite{eld:bs}, with blocks corresponding to classes of the training database, might serve to make precise the meaning of between-class correlation; note that this was observed to play a role in both $\ell^1/\ell^0$-equivalence on highly-correlated data and SRC's classification performance. Additionally, the accuracy threshold detected in Section \ref{MCD_Project_3} might be better understood in the context of Wang et al.'s interpretation of SRC as a maximum margin-based classifier \cite{wang:max_margin}. As an alternative to the thresholding route as suggested in Section \ref{MCD_Project_3}, their work could be very helpful in rigorously defining the concept of \emph{approximate sparsity} as it relates to the classification performance of SRC.

\section*{Acknowledgments}

C. Weaver's research on this project was conducted with government support under contract FA9550-11-C-0028 and awarded by DoD, Air Force Office of Scientific Research, National Defense Science and Engineering Graduate (NDSEG) Fellowship, 32 CFR 168a. She was also supported by National Science Foundation VIGRE DMS-0636297 and NSF DMS-1418779. N. Saito was partially supported by ONR grants N00014-12-1-0177 and N00014-16-1-2255, as well as NSF DMS-1418779.

\section*{References}

\bibliography{Bib_Master}

\end{document}